\theoremstyle{plain}
\newtheorem{theorem}{Theorem}[section]
\newtheorem{lemma}[theorem]{Lemma}
\newtheorem{corollary}[theorem]{Corollary}
\theoremstyle{definition}
\newtheorem{definition}[theorem]{Definition}
\theoremstyle{remark}
\newtheorem{remark}[theorem]{Remark}
\newcommand{\boldy}{y}
\newcommand{\boldX}{\boldsymbol{X}}
\newcommand{\boldZ}{\boldsymbol{Z}}
\newcommand{\reals}{\mathbb{R}}
\newcommand{\MZ}{\mathcal{M}\left(\mathcal{Z}\right)}
\newcommand{\MMZ}{\mathcal{M}^2\left(\mathcal{Z}\right)}
\newcommand{\Gepsmu}{\mathcal{G}_{\varepsilon}\left(\mu\right)}
\newcommand{\loss}{\ell\left(y,h\left(\boldsymbol{X}\right)\right)}
\newcommand{\wassdist}{\mathcal{W}_c}
\newcommand{\widesim}[2][1.5]{
  \mathrel{\overset{#2}{\scalebox{#1}[1]{$\sim$}}}
}
\def\eqref#1{equation~\ref{#1}}
\def\1{\bm{1}}
\DeclareMathAlphabet{\mathsfit}{\encodingdefault}{\sfdefault}{m}{sl}
\SetMathAlphabet{\mathsfit}{bold}{\encodingdefault}{\sfdefault}{bx}{n}
\DeclareMathOperator*{\argmax}{arg\,max}
\DeclareMathOperator*{\argmin}{arg\,min}
\icmltitlerunning{Certifiably Robust Model Evaluation in Federated Learning under Meta-Distributional Shifts}
\begin{document}

\twocolumn[
\icmltitle{Certifiably Robust Model Evaluation in Federated Learning
\\
under Meta-Distributional Shifts}
%Robust Model Evaluation over Large-scale Federated Networks}

% It is OKAY to include author information, even for blind
% submissions: the style file will automatically remove it for you
% unless you've provided the [accepted] option to the icml2025
% package.

% List of affiliations: The first argument should be a (short)
% identifier you will use later to specify author affiliations
% Academic affiliations should list Department, University, City, Region, Country
% Industry affiliations should list Company, City, Region, Country

% You can specify symbols, otherwise they are numbered in order.
% Ideally, you should not use this facility. Affiliations will be numbered
% in order of appearance and this is the preferred way.
\icmlsetsymbol{equal}{*}

\begin{icmlauthorlist}
\icmlauthor{Amir Najafi}{xxx}
\icmlauthor{Samin Mahdizadeh Sani}{yyy}
\icmlauthor{Farzan Farnia}{zzz}
\end{icmlauthorlist}

\icmlaffiliation{xxx}{Department of Computer Engineering, 
Sharif University of Technology, Tehran, Iran (Corresponding author)}
\icmlaffiliation{yyy}{Department of Electrical and Computer Engineering, University of Tehran, Tehran, Iran}
\icmlaffiliation{zzz}{Department of Computer Science and Engineering, The Chinese University of Hong Kong (CUHK), Hong Kong}

\icmlcorrespondingauthor{Amir Najafi}{amir.najafi@sharif.edu}
\icmlcorrespondingauthor{Samin Mahdizadeh Sani}{samin.mahdizadeh@ut.ac.ir}
\icmlcorrespondingauthor{Farzan Farnia}{farnia@cse.cuhk.edu.hk}

% You may provide any keywords that you
% find helpful for describing your paper; these are used to populate
% the "keywords" metadata in the PDF but will not be shown in the document
\icmlkeywords{Generalization Bound, Meta-Distributionally Robust Optimization (M-DRO), Learning Theory, Federated Model Evaluation}

\vskip 0.3in
]

% this must go after the closing bracket ] following \twocolumn[ ...

% This command actually creates the footnote in the first column
% listing the affiliations and the copyright notice.
% The command takes one argument, which is text to display at the start of the footnote.
% The \icmlEqualContribution command is standard text for equal contribution.
% Remove it (just {}) if you do not need this facility.

\printAffiliationsAndNotice{}  % leave blank if no need to mention equal contribution
% \printAffiliationsAndNotice{\icmlEqualContribution} % otherwise use the standard text.

%%%%%%%%%%%%%%%%%%%%%%%%%%%%%%%%%%%%%%%%%%%%%%%%%%%%%%%%%%%%%%%%%%%%%%%%%%%%%%%%%%%%%%%%%%%%%%%%%%%%%%%%%%%%%%%%%%%%%%%%%%%%%%%%%%%%%%%%%%%%%%%%%%%%%%%%%%%%%%%%%%%%%%%%%%%%%%%%%%%%%%%%%%%%%%%%%%%%%%%%%%%%%%%%%%%%%%%%%%%%%%%%%%%%%%%%%%%%%%%%%%%%%%%%%%%%%%%%%%%%%%%%%%%%%%%%%%%%%%%%%%%%%%%%%%%%%%%%%%%%%%%%%%%%%%%%%%%%%%%%%%%%%%%%%%%%%%%%%%%%%%%%%%%%%%%%%%%%%%%%%%%%%%%%%%%%%%%%%%%%%%%%%%%%%%%%%%%%%%%%%%%%%%%%%%%%%%%%%%%%%%%%%%%%%%%%%%%%%%%%%%%%%%%%%%%%%%%%%%%%%%%%%%%%%%%%%%%%%%%%%%%%%%%%%%%%%%%%%%%%%%%%%%%%%%%%%%%%%%%%%%%%%%%%%%%%%%%%%%%%%%%%%%%%%%%%%%%%%%%%%%%%%%%%%%%%%%%%%%%%%%%%%%%%%%%%%%%%%%%%%%%%%%%%%%%%%%%%%%%%%%%%%%%%%%%%%%%%%%%%%%%%%%%%%%%%%%%%%%%%%%%%%%%%%%%%%%%%%%%%%%%%%%%%%%%%%%%%%%%%%%%%%%%%%%%%%%%%%%%%%%%%%%%%%%%%%%%%%%%%%%%%%%%%%%%%%%%%%%%%%%%%%%%%%%%%%%%%%%%%%%%%%%%%%%%%%%%%%%%%%%%%%%%%%%%%%%%%%%%%%%%%%%%%%%%%%%%%%%%%%%%%%%%%%%%%%%%%%%%%%%%%%%%%%%%%%%%%%%%%%%%%%%%%%%%%%%%%%

\begin{abstract}
We address the challenge of certifying the performance of a federated learning model on an unseen target network using only measurements from the source network that trained the model. Specifically, consider a source network ``A" with $K$ clients, each holding private, non-IID datasets drawn from heterogeneous distributions, modeled as samples from a broader meta-distribution $\mu$. Our goal is to provide certified guarantees for the model’s performance on a different, unseen network ``B", governed by an unknown meta-distribution $\mu'$, assuming the deviation between $\mu$ and $\mu'$ is bounded—either in \emph{Wasserstein} distance or an $f$-\emph{divergence}. We derive worst-case uniform guarantees for both the model’s average loss and its risk CDF, the latter corresponding to a novel, adversarially robust version of the Dvoretzky–Kiefer–Wolfowitz (DKW) inequality. In addition, we show how the vanilla DKW bound enables principled certification of the model's true performance on unseen clients within the same (source) network. Our bounds are efficiently computable, asymptotically minimax optimal, and preserve clients' privacy.
We also establish non-asymptotic generalization bounds that converge to zero as $K$ grows and the minimum per-client sample size exceeds $\mathcal{O}(\log K)$. Empirical evaluations confirm the practical utility of our bounds across real-world tasks. The project code is available at: \url{github.com/samin-mehdizadeh/Robust-Evaluation-DKW}
\end{abstract}

%%%%%%%%%%%%%%%%%%%%%%%%%%%%%%%%%%%%%%%%%%%%%%%%%%%%%%%%%%%%%%%%%%%%%%%%%%%%%%%%%%%%%%%%%%%%%%%%%%%%%%%%%%%%%%%%%%%%%%%%%%%%%%%%%%%%%%%%%%%%%%%%%%%%%%%%%%%%%%%%%%%%%%%%%%%%%%%%%%%%%%%%%%%%%%%%%%%%%%%%%%%%%%%%%%%%%%%%%%%%%%%%%%%%%%%%%%%%%%%%%%%%%%%%%%

\section{Introduction}

The distributed nature of modern learning environments, where local datasets are scattered across clients in a network, presents significant challenges for the machine learning community. Federated learning (FL) addresses some of these challenges by enabling clients to collaboratively train a decentralized model through communications with a central server \cite{mcmahan2017communication,liu2024recent}. A major obstacle in FL is the heterogeneity of data distributions among clients. This non-IID nature of client data not only impacts the training stage but also complicates the evaluation of trained models, especially when applied to unseen clients from the same or different networks \cite{ye2023heterogeneous,zawad2021curse}. In this paper, we focus on the latter problem.

Consider a network with $K$ clients, each possessing their own private dataset. For $k\in[K]$, the $k$th client’s dataset is assumed to include $n_k\ge1$ samples from a unique and unknown distribution $P_k$, which forms an empirical and private estimate $\widehat{P}_k$. Note that the distributions $P_1, \ldots, P_K$ may be highly heterogeneous and distant from one another. In this setting, assume the server wants to assess the performance of a given machine learning model $h$ on this network. Let the \emph{risk} function $R(h,\widehat{P}_k)$ denote the loss of $h$ when evaluated over the private data samples of the $k$th client, i.e., $\widehat{P}_k$. This quantity can be queried by the server by sending $h$ to client $k$ and requesting its local loss value, without directly accessing any of the private data samples in client $k$'s dataset. Hence, the server can collect all $K$ loss values and empirically approximate the average performance of $h$ over the network, using various metrics. One common approach is the average loss:
$
\frac{1}{K} \sum_{k=1}^{K} R(h, \widehat{P}_k)
$,
which measures the empirical average performance of the model in the network. 

Another approach is the loss CDF, representing the percentage of clients whose loss exceeds a given threshold $\lambda$:
$
\frac{1}{K} \sum_{k=1}^{K} \mathbbm{1} \big(R(h, \widehat{P}_k) \geq \lambda \big)
$,
which is useful for estimating the quantiles of the risk (see \citep{laguel2021superquantiles}). For a recent related work on the use of super-quantiles in FL with heterogeneous clients, see \citep{pillutla2024federated}. While that study focuses on optimization and convergence aspects, our work provides a theoretical treatment of generalization.

A critical challenge arises when certifying the performance of $h$ on a different and unseen network with completely different clients. Mathematically, assume a large group of $T\gg 1$ new and unseen clients with corresponding heterogeneous distributions $Q_1, \ldots, Q_T$, which may differ substantially from one another and also from $P_k$'s. This corresponds to a \emph{beta testing} scenario, where, for instance, a product owner may test a product on a small subset of a network before wider deployment \cite{reisizadeh2020robust, ma2024beyond}. Our goal is to certify the same performance metrics over $Q_i$s without accessing them. 

Clearly, without a connection between $P_k$'s and $Q_i$'s, the certification task is fundamentally infeasible. To address this, we can rely on the following common assumption in many beta testing scenarios: $P_{1:K}$ and $Q_{1:T}$ are all independent samples from the same \emph{meta-distribution} $\mu$ (a distribution over distributions), which governs higher-level factors influencing the clients, such as cultural or geographical attributes \cite{yuan2021we,ajay2022distributionally,wu2024distributional,chen2023metafed,patel2022towards}. In this setting, the empirical average risk and empirical risk CDF over $P_k$'s provide unbiased estimates for those of the unseen clients, i.e., $Q_i$'s. Still, a natural question would be how large $K$ or either of $n_k$'s need to be such that the mentioned estimates become statistically reliable?

In this work, we propose a novel approach by leveraging the Dvoretzky–Kiefer–Wolfowitz (DKW) inequality \citep{dvoretzky1956asymptotic} that uniformly bounds the CDF estimation error from empirical observed samples. Note that in our setting, the CDF is considered for the performance metric of clients drawn from meta distribution $\mu$. Therefore, the observed clients' performance values lead to a proxy \emph{empirical CDF}, and we want to provide an upper-bound on the \emph{true CDF} to provide a performance guarantee for an unseen client $Q_i\sim\mu$.  
Our approach is to extend the DKW inequality to bound the gap between the empirical and population (true) CDFs. Figure~\ref{fig:MetaDiagramDKW} illustrates our DKW inequality-based approach and how it can provide a certified upper-bound on the true CDF of unseen clients' performance. 

\begin{figure}[!t]
\centering
\includegraphics[width=0.9\linewidth]{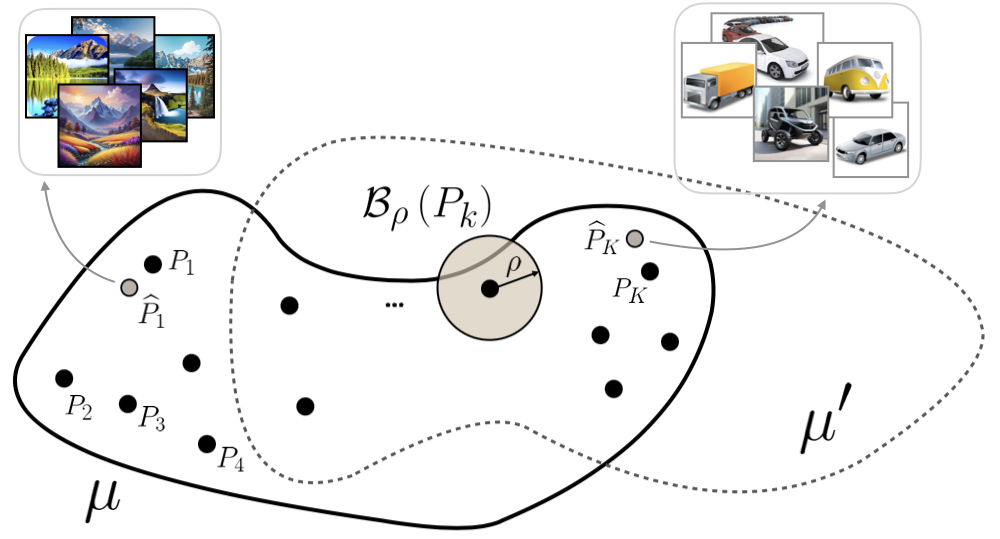}
\caption{
A graphical illustration of meta-distributional shifts between two client networks governed by $\mu$ and $\mu'$. Meta-shifts shifts may only denote changes in densities ($f$-divergence) or can involve support as well (Wasserstein shifts). Each client $P_k$ is a sample from $\mu$, with $\widehat{P}_k$s as its empirical counterpart based on a local datasets. The client distributions $P_k$ can vary significantly, such as one having primarily landscape images while another contains mostly vehicle images.}
\label{fig:MetaDiagram}
\vspace*{-4mm}
\end{figure}

On the other hand, in real-world applications, target clients can belong to distinct populations with considerable cultural or behavioral differences compared to those observed in the evaluation phase. For example, an app developer might be limited to test a mobile application on users from Hong Kong, but ultimately hopes to release it for users from New York. This scenario corresponds to a meta-distributionally robust evaluation, where the objective is to evaluate the performance of $h$ under worst-case shifts from $\mu$ to $\mu'$. Here, $\mu$ and $\mu'$ represent unknown meta-distributions, with $K$ empirical samples from $\mu$ corresponding to $P_1, \ldots, P_K$. Unfortunately, $\mu'$ is completely unseen, however, it is assumed to deviate at most $\varepsilon$ from $\mu$, for some known $\varepsilon>0$. Again, without this assumption certifying the performance of $h$ under $\mu'$ is mathematically impossible. The degree of deviation ($\varepsilon$) between $\mu$ and $\mu'$ can be quantified using metrics such as Kullback-Leibler (KL) divergence or Wasserstein distance, both interpreted in the context of meta-distributions. KL, or more generally, an $f$-divergence captures situations where client types in one meta-distribution are reweighted in another \citep{mehtadistributionally}, whereas Wasserstein distance accounts for the emergence of entirely new client types, representing novel regions of the meta-distribution’s support \citep{wang2022meta,kuhn2019wasserstein}.  
For example, smartphone users in a coastal area may often take pictures of the sea, whereas residents in mountainous regions cannot do so unless they travel. See Figure \ref{fig:MetaDiagram} for more details. We are not aware of any prior theoretical bounds for meta-distributionally robust evaluation (or beta testing) in federated settings. In any case, a review of prior works on similar scenarios of non-IID federated learning and model evaluation is provided in Section \ref{sec:literature}.

\begin{figure*}[t]

\centering
\includegraphics[width=0.73\linewidth]{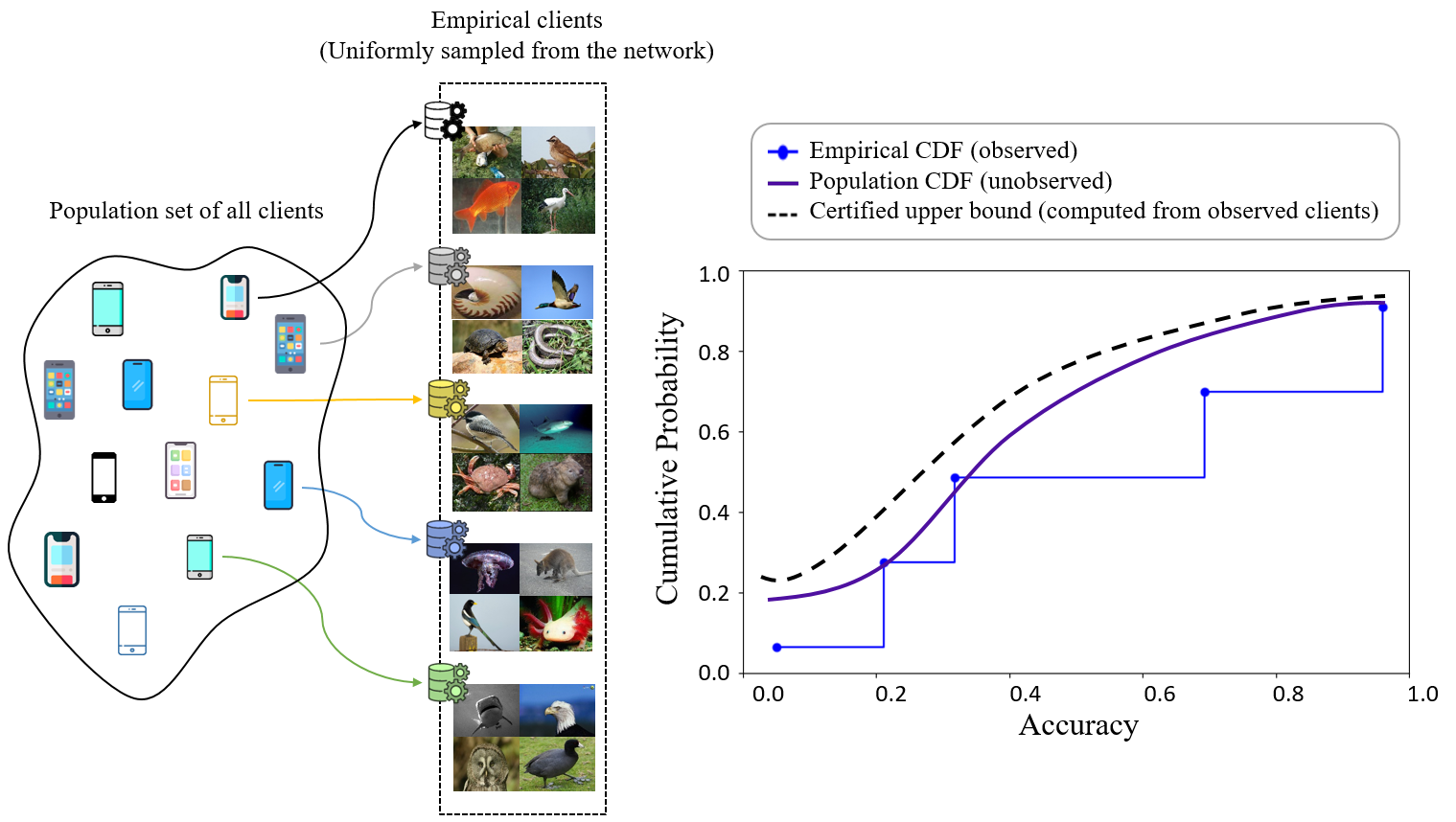}
\caption{
Our theoretical bounds are estimated from a set of seen users in a federated network governed by meta-distribution $\mu$. We extend the application of DKW inequality to provide a certified upper-bound on the true CDF of the performance score of unseen clients following the same meta-distribution $\mu$. %In this work, we further extend this scenario to settings where the unseen clients follow a different meta-distribution with bounded divergence from the source meta-distribution $\mu$.
}
\label{fig:MetaDiagramDKW}
\vspace*{-4mm}
\end{figure*}

\noindent
\textbf{Our Contribution:} Our problem, formally defined in Section \ref{sec:problemDef}, is to address the robust evaluation of any given $h$ under the above-mentioned scenario. Section \ref{sec:non-robust} presents some initial results for the non-robust evaluation when $\mu=\mu'$, i.e., standard beta testing. Theorem \ref{thm:nonRobustEmpiricalMean} proves non-asymptotic concentration bounds for both the empirical average and CDF of the risk, with exponential convergence as both $K$ and $\min_{k}~n_k$ increase. Notably, our convergence results for the loss CDF are \emph{uniform} over thresholds $\lambda$, meaning they hold for all $\lambda\in\reals$ simultaneously, via utilizing classical tools in statistics, including the DKW inequality and Glivenko-Cantelli theorem \cite{dvoretzky1956asymptotic}. 

Next, we provide robust bounds on the average loss and/or risk CDF when target network is adversarially shifted from the source. We extend the concept of distributional robustness to meta-distributional robustness, considering both $f$-divergences and Wasserstein-type adversaries, respectively in Sections \ref{sec:main:fDiv} and \ref{sec:main:wass}, to provide worst-case guarantees. %See Figure \ref{fig:MetaDiagramDKW} for an illustrative example.

The majority of existing robust evaluation schemes require aggregating all private datasets in a central server for a collective adversarial manipulation of all the samples \cite{liu2024recent,reisizadeh2020robust,sinha2018certifying}, which severely violates user privacy. Another category of approaches simply compute the average ordinary risk of $h$ in the source network, but are then forced to add a non-vanishing ($\varepsilon$-dependent) generalization gap that does not go to zero even when $K,n_{1:K}$ grow to infinity (see Section \ref{sec:literature} and also \cite{rahimian2019distributionally,zeng2022generalization}). The latter approaches fail to take into account the possible inherent wellness or robustness of $h$, thus usually result into excessively inflated risk certificates.

We prove that {\it {globally}} robust evaluations under both $f$-divergence and Wasserstein regimes can be done in a federated manner as long as: i) server can query the \emph{local} ordinary or adversarial (only for Wasserstein attacks) loss of $h$ from each client, and ii) server can repeat such queries with potentially various attack budgets for polynomially many times. Note that we do not need to directly access client's data. Our evaluation guarantees are asymptotically minimax optimal, and have vanishing generalization gaps which converge to zero as both $K$ and $(\min_{k}~n_k)/\log K$ grow. Our robust and uniform bounds on risk CDF in Section \ref{sec:main:fDiv} are based on a novel extension of DKW inequality to adversarial scenarios, which is extensively detailed in Section \ref{sec:app:GCandDKW}.
The numerical results on standard datasets in Section~\ref{sec:experiments} validate the tightness of our bounds.

\section{Notations and Preliminaries
\label{sec:notation}}

%We denote vectors by bold letters (e.g., $\boldZ$ and $\boldX$), and scalars by ordinary letters (e.g., $\Lambda$ and $y$). 
%For a vector $\boldX$ and $i\in\mathbb{N}$, let $X_i$ denotes the $i$th coordinate of $\boldX$. 
For $K \in \mathbb{N}$, $[K]$ denotes the set $\{1, 2, \ldots, K\}$. Consider two measurable spaces $\mathcal{X}$ and $\mathcal{Y}$, referred to as the \emph{feature} and \emph{label} spaces, respectively. Typically, we assume $\mathcal{X} \subseteq \mathbb{R}^d$ for some $d \in \mathbb{N}$, while $\mathcal{Y}$ may be $\{\pm1\}$ for binary classification tasks or $\mathbb{R}$ in regression problems. However, we make no extra assumptions regarding $\mathcal{X}$ and $\mathcal{Y}$. We define the joint feature-label space as $\mathcal{Z} \triangleq \mathcal{X} \times \mathcal{Y}$. Let $\mathcal{M}(\mathcal{Z})$ denote the set of all probability measures supported on $\mathcal{Z}$. Each $P\in\mathcal{M}(\mathcal{Z})$ corresponds to a joint measure over a random feature vector $\boldX \in \mathcal{X}$ and its associated random label $\boldy \in \mathcal{Y}$. The expectation operator with respect to a measure $P$ is denoted by $\mathbb{E}_P$. We refer to $\mu$ as a meta-distribution over $\mathcal{Z}$, expressed as $\mu \sim \mathcal{M}^2(\mathcal{Z})\triangleq \mathcal{M}\left(\mathcal{M}\left(\mathcal{Z}\right)\right)$, where each sample from $\mu$ is itself a probability measure over $\mathcal{Z}$. In other words, a meta-distribution is a distribution over distributions supported on $\mathcal{Z}$. In our work, $\mu$ models the heterogeneity and non-IID-ness of the clients, since independent samples from $\mu$ represent different data distributions shifted from one another.

For simplicity, we abbreviate $\mathcal{M}(\mathcal{Z})$ as $\mathcal{M}$. For any two measures $P, Q \in \mathcal{M}$, let $\mathcal{D}(P, Q) \in \mathbb{R}_{\geq 0}$ denote any given distance or divergence between the two distributions such as KL or Wasserstein distance. 
%In Section \ref{sec:main:fDiv}, $\mathcal{D}$ is chosen as an $f$-divergence, while in Section \ref{sec:main:wass}, $\mathcal{D}$ is selected as a Wasserstein metric. 
%Detailed definitions for these scenarios are provided in their respective sections. 
In this context, let $\mathcal{B}_{\rho}$ denote an $\rho$-distributional ambiguity ball for $\rho \geq 0$. Mathematically, for any measure $P \in \mathcal{M}$:
\begin{align}
\mathcal{B}_{\rho}(P) \triangleq \left\{ Q \in \mathcal{M} \mid \mathcal{D}(P, Q) \leq \rho \right\},
\end{align}
which represents the set of distributions within $\rho$ distance/divergence from $P$ according to $\mathcal{D}$.  Similarly, meta-distributional ambiguity balls $\mathcal{G}_{\varepsilon}(\mu)$ are defined as the set of meta-distributions over $\mathcal{M}^2$ within an $\varepsilon$ distance from a base meta-distribution $\mu$. Mathematically, we have 
$
\mathcal{G}_{\varepsilon}\left(\mu\right)\triangleq
\left\{
\mu' \in \mathcal{M}^2 \mid \widetilde{\mathcal{D}}\left(\mu,\mu'\right)\leq\varepsilon
\right\}
$.
Here, the {\it deviation} $\widetilde{\mathcal{D}}$ can be any properly defined $f$-divergence or a Wasserstein metric between the meta-distributions in $\mathcal{M}^2$. In particular, the transportation cost in the Wasserstein distance can itself be a Wasserstein metric on $\mathcal{M}$ in its ordinary sense.

Let $h: \mathcal{X} \rightarrow \mathcal{Y}$ represent a hypothesis (e.g., a classifier) that maps the feature space $\mathcal{X}$ onto the label space $\mathcal{Y}$ \footnote{It should be noted that our work goes beyond this limitation and can be applied to any supervised or unsupervised machine learning task.}. Additionally, assume a fixed loss function $\ell\left(\boldy,\widehat{\boldy}\right)$, which assigns a loss value to each pair of actual and predicted labels, $\boldy$ and $\widehat{\boldy}$, respectively. The expected loss, or \emph{Risk}, of a hypothesis $h$ w.r.t. a data distribution $P \in \mathcal{M}$ is defined as
$R(h, P) \triangleq \mathbb{E}_{P} \left\{ \ell \left( \boldy, h(\boldX) \right) \right\}$.
Here, $P$ could be the true distribution or an empirical approximation obtained from a dataset, usually denoted by $\widehat{P}$. The {\it {adversarial}} risk of $h$ w.r.t. a base measure $P$ (or empirical $\widehat{P}$) and a robustness radius $\varepsilon \geq 0$, is formulated as \citep{sinha2018certifying}:
\begin{align}
R_{\varepsilon-\mathrm{adv}}\left(h,P\right)
\triangleq
\sup_{Q \in \mathcal{B}_{\varepsilon}(P)} \mathbb{E}_Q \left[\ell(\boldy, h(\boldX))\right],
\end{align}
and denotes the worst risk over all distributions in a $\varepsilon$-neighborhood of $P$ according to $\mathcal{D}$. Similarly, the meta-distributionally robust loss of  $h$ with respect to a base meta-distribution $\mu$ is defined as:
\begin{align}
% R^{\mathrm{Meta}}_{\varepsilon-\mathrm{robust}}\left(h,P\right)
% \triangleq
\sup_{\mu'\in\mathcal{G}_{\varepsilon}\left(\mu\right)}
\mathbb{E}_{P\sim\mu'}\left[
\mathbb{E}_P\left[\ell(\boldy, h(\boldX))\right]
\right].
\end{align}
The geometry of the ball $\mathcal{G}_{\varepsilon}$ in $\mathcal{M}^2$ can be determined using various application-specific divergences or metrics over $\mathcal{M}^2$. In Section \ref{sec:main:fDiv}, we deal with $f$-divergence balls, while Wasserstein meta-distributional balls (using the ordinary Wasserstein metric as their transportation cost) are utilized in Section \ref{sec:main:wass}. We also discuss how to practically implement such loss values in Section \ref{sec:main:efficiency} and Appendix \ref{sec:app:auxProofs}.

%%%%%%%%%%%%%%%%%%%%%%%%%%%%%%%%%%%%%%%%%%%%%%%%%%%%%%%%%%%%%%%%%%%%%%%%%%%%%%%%%%%%%%%%%%%%%%%%%%%%%%%%%%%%%%%%%%%%%%%%%%%%%%%%%%%%%%%%%%%%%%%%%%%%%%%%%%%%%%%%%%%%%%%%%%%%%%%%%%%%%%%%%%%%%%%%%%%%%%%%%%%%%%%%%%%%%%%%%%%%%%%%%%%%%%%%%%%%%%%%%%%%%%%%%%%%%%%%%%%%%%%%%%%%%%%%%%%%%%%%%%%%%%%%%%%%%%%%%%%%%%%%%%%%%%%%%%%%%%%%%%%%%%%%%%%%%%%%%%%%%%%%%%%%%%%%%%%%%%%%%%%%%%%%%%%%%%%%%%%%%%%%%%%%%%%%%%%%%%%%%%%%%%%%%%%%%%%%%%%%%%%%%%%%%%%%%%%%%%%%%%%%%%%%%%%%%%%%%%%%%%%%%%%%%%%%%%%%%%%%%%%%%%%%%%%%%%%%%%%%%%%%%%%%%%%%%%%%%%%%%%%%%%%%%%%%%%%%%%%%%%%%%%%%%%%%%%%%%%%%%%%%%%%%%%%%%%%%%%%%%%%%%%%%%%%%%%%%%%%%%%%%%%%%%%%%%%%%%%%%%%%%%%%%%%%%%%%%%%%%%%%%%%%%%%%%%%%%%%%%%%%%%%%%%%%%%%%%%%%%%%%%%%%%%%%%%%%%%%%%%%%%%%%%%%%%%%%%%%%%%%%%%%%%%%%%%%%%%%%%%%%%%%%%%%%%%%%%%%%%%%%%%%%%%%%%%%%%%%%%%%%%%%%%%%%%%%%%%%%%%%%%%%%%%%%%%%%%%%%%%%%%%%%%%%%%%%%%%%%%%%%%%%%%%%%%%%%%%%%%%%%%%%%%%%%%%%%%%%%%%%%%%%%%%%%%%%%%%%%%%%%%%%%%%%%%%%%%%%%%%%%%%%%%%%%%%%%%%%%%%%%%%%%%%%%%%%%%%%%%%%%%%%%%%%%%%%%%%%%%%%%%%%%%%%%%%%%%%%%%%%%%%%%%%%%%%%%%%%%%%%%%%%%%%%%%%%%%%%%%%%%%%%%%%%%%%%%%%%%%%%%%%%%%%%%%%%%%%%%%%%%%%%%%%%%%%%%%%%%%%%%%%%%%%%%%%%%%%%%%%%%%%%%%%%%%%%%%%%%%%%%%%%%%%%%%%%%%%%%%%%%%%%%%%%%%%%%%%%%%%%%%%%%%%%%%%%%%%%%%%%%%%%%%%%%%%%%%%%%%%%%%%%%%%%%%%%%%%%%%%%%%%%%%%%%%%%%%%%%%%%%%%%%%%%%%%%%%%%%%%%%%

\section{Problem Definition}
\label{sec:problemDef}

This section formally defines our problem. First, we outline the data generation process, privacy constraints, and specify the query policy that governs communication between clients and the server.

% \subsection{Data Generation Process and the Federated Access to Clients}
% \label{sec:fedsetting}

\textbf{Data Generation:} 
Consider $K \in \mathbb{N}$ clients connected to a central server, where client $k \in [K]$ has a unique data distribution $P_k \in \mathcal{M}(\mathcal{Z})$. We assume $P_1, P_2, \ldots, P_K$ are heterogeneous samples of an unknown meta-distribution $\mu$ over $\mathcal{Z}$. No one knows $P_k$s, however, client $k$ has access to a dataset $D_k$ of size $n_k\ge 1$, which contains independent samples from $P_k$, i.e.,
\begin{equation}
D_k \triangleq \left\{ \left(\boldsymbol{X}^{(k)}_{i}, {y}^{(k)}_{i}\right)\Big\vert~i\in[n_k] \right\} \widesim[1.5]{} P^{\otimes n_k}_k.
\end{equation}
Let $\widehat{P}_k$ denote the empirical version of $P_k$ based on the private samples in $D_k$, which are known only to client $k$.

\textbf{Server-Client Query Policy:}
The server can query each of the $K$ clients by sending a model $h$ and a robustness radius $\rho \geq 0$ to client $k \in [K]$. In response, the client returns the adversarial loss of $h$ around $\widehat{P}_k$:
\begin{align}
\widehat{\mathsf{QV}}_k(h,\rho) \triangleq \sup_{Q \in \mathcal{B}_{\rho}(\widehat{P}_k)} \mathbb{E}_Q[\ell(\boldy, h(\boldX))].
\label{eq:QVdefEquation}
\end{align}
The type of distributional ball $\mathcal{B}_{\rho}(\cdot)$ can be defined using any user-defined divergence or metric over $\mathcal{M}$. When the robustness radius is unspecified, the client assumes it is zero, and returns the non-robust loss, i.e., $\widehat{\mathsf{QV}}_k(h) = \widehat{\mathsf{QV}}_k(h, 0)$. We later show that the local adversarial loss (when $\rho>0$) is not needed for $f$-divergence-based guarantees and only appears in Section \ref{sec:main:wass} which focuses on Wasserstein shifts. Each client $k$ accepts a maximum number of queries, referred to as the \emph{query budget}.

%%%%%%%%%%%%%%%%%%%%%%%%%%%%%%%%%%%%%%%%%%%%%%%%%%%%%%%%%%%%%%%%%%%%%%%%%%%%%%%%%%%%%%%%%%%%%%%%%%%%%%%%%%%%%%%%%%%%%%%%%%%%%%%%%%%%%%%%%%%%%%%%%%%%%%%%%%%%%%%%%%%%%%%%%%%%%%%%%%%%%%%%%%%%%%%%%%%%%%%%%%%%%%%%%%%%%%%%

\textbf{Our Problem Setup:} Assume the server sends a model $h$ to the clients and requests a number of robust or non-robust loss values for several arbitrary robustness radii. Server's ultimate goal is to use these values to provide a meta-distributionally robust upper bound for the average or CDF of the loss of $h$. Mathematically speaking, the objective is to {\it {efficiently}} compute empirical values $\widehat{E}_1(\varepsilon)$ and $\widehat{E}_2(\varepsilon,\lambda)$ such that the following bounds hold with high probability over the sampling of clients and datasets:
\begin{align}
\sup_{\mu' \in \mathcal{G}_{\varepsilon}(\mu)}
\mathbb{E}_{P \sim \mu'} \left[
\mathbb{E}_P \left[\ell(\boldy, h(\boldX))\right]
\right]
&~\leq~
\widehat{E}_1(\varepsilon) + \zeta,
\\
\sup_{\mu' \in \mathcal{G}_{\varepsilon}(\mu)}~
\mu'\Big(
\mathbb{E}_P \left[\ell(\boldy, h(\boldX))\right]
\ge \lambda
\Big)
&~\leq~
\widehat{E}_2\left(\varepsilon,\lambda\right) + \zeta'.
\nonumber
\end{align}
%where the empirical values $\widehat{E}_{1:2}$ should be computed at the server-side efficiently.
The generalization gaps $\zeta,\zeta'$ should vanish as both $K$ and $\min_k~n_k$ increase asymptotically. Also, the second high-probability bound needs to hold for all threshold values $\lambda$, simultaneously (or \emph{uniformly}). We consider both $f$-divergence and Wasserstein metrics to specify the geometry of the meta-distributional ambiguity ball $\mathcal{G}_{\varepsilon}(\mu)$. 

From an algorithmic perspective, $\widehat{E}_{1:2}$ must be computable using only the private server-client query policy $\widehat{\mathsf{QV}}_k(h,\rho)$ for various robustness radii $\rho$. Server decides the number of queries for each client, as well as the value of $\rho$ for each query. However, the computational cost of evaluating each query at the client side, and the total number of queries per client should increase at most polynomially with parameters.

\section{Non-Robust (Ordinary) Guarantees}
\label{sec:non-robust}

In this section, we solve the problem setting of section \ref{sec:problemDef} in the non-robust regime, i.e., when $ \mu' = \mu $ or equivalently $ \varepsilon = 0 $. This scenario corresponds to the standard beta testing, which has been extensively applied in practice \cite{chen2024federated}. Simple statistical bounds can establish the high-probability concentration of the empirical average $ \frac{1}{K} \sum_{k=1}^{K} R(h, \widehat{P}_k) $ around its true mean w.r.t. $ \mu $, as long as the loss function $\ell(\cdot)$ is measurable and bounded (e.g., let $\ell$ be $1$-bounded such as the 0-1 loss). However, the same argument does not directly extend to the loss CDF, as the bounds must hold uniformly for all threshold values $ \lambda \geq 0 $. Without uniformity, such guarantees are significantly weakened.  By "uniform," we mean that, with high probability over the sampling of the $ K $ users, the worst-case deviation between the empirical and true tail probabilities is consistently bounded. This challenge is resolved using a classic result on the convergence of empirical CDFs, specifically the Dvoretzky–Kiefer–Wolfowitz (DKW) inequality and the Glivenko–Cantelli theorem (see Section \ref{sec:app:GCandDKW}, and in particular Lemma \ref{lemma:DKWnonRobust}). The following theorem summarizes our main findings in the non-robust setting.

\begin{theorem}
%[Non-Robust Guarantees on the Average Risk]
\label{thm:nonRobustEmpiricalMean}
Let $\mu$ be a meta-distribution, and assume $P_1,\ldots,P_K$ be $K$ independent instances of $\mu$, while  $\widehat{P}_k$ for $k \in [K]$ represent their empirical counterparts, formed using $n_k$ independent private samples from the $k$-th client. Let $h: \mathcal{X} \rightarrow \mathcal{Y}$ be any model and $\ell$ be any measurable and $1$-bounded loss function. Then, for any $\delta > 0$, the following holds with probability at least $1 - \delta$:
\begin{align}
&\mathbb{E}_{\mu}\left[
\mathbb{E}_P\left(\loss\right)
\right]
\leq~
\frac{1}{K}\sum_{k=1}^{K}\widehat{\mathsf{QV}}_k\left(h\right)+
\\
&\hspace{21mm}
\sqrt{
\log\left(K{\delta}^{-1}\right)
}
\mathcal{O}
\left(
\frac{1}{\sqrt{K}}+\frac{1}{K}\sum_{k=1}^{K} n^{-1/2}_k
\right).
\nonumber
\end{align}
For the loss CDF, the following bound holds with probability at least $1-\delta$ uniformly for all $\lambda\in\mathbb{R}$:
\begin{align}
&
\mu\bigg(\mathbb{E}_P\left[\loss\right] \ge \lambda\bigg)
\leq
\\
&\frac{1}{K}\sum_{k=1}^{K}
\mathbbm{1}\left(\widehat{\mathsf{QV}}_k(h)\ge 
\lambda
-
\sqrt{
\frac{\log\frac{(K+1)}{\delta}}{2n_k}
}
\right)
+
\sqrt{
\frac{\log\frac{2(K+1)}{\delta}}{2K}
}.
\nonumber
\end{align}
\end{theorem}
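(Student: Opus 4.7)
My plan is to split each inequality into a client-level term (coming from the $K$ meta-draws $P_1,\ldots,P_K\sim\mu$) and a within-client term (coming from the $n_k$ local draws inside $P_k$), apply Hoeffding's inequality to each, and, for the uniform bound, apply the DKW inequality at the client level. The total failure budget $\delta$ is split as $\delta/(K+1)$ across the $K+1$ relevant concentration events, which is what produces the $\log((K+1)/\delta)$ factors in the statement.

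For the first inequality, the key decomposition I would use is
\[
\mathbb{E}_\mu[R(h,P)] - \tfrac{1}{K}\!\sum_{k=1}^K \widehat{\mathsf{QV}}_k(h) = \Big(\mathbb{E}_\mu[R(h,P)] - \tfrac{1}{K}\!\sum_k R(h,P_k)\Big) + \tfrac{1}{K}\!\sum_k\!\big(R(h,P_k) - \widehat{\mathsf{QV}}_k(h)\big).
\]
Because $\ell\in[0,1]$, each $R(h,P_k)$ is $[0,1]$-valued, so the first bracket is a centered mean of $K$ i.i.d.\ bounded variables and Hoeffding produces a deviation of order $\sqrt{\log((K+1)/\delta)/K}$. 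Conditional on $P_k$, the $k$-th summand of the second bracket is a centered mean of $n_k$ i.i.d.\ bounded variables with deviation $\sqrt{\log((K+1)/\delta)/n_k}$. A union bound over the $K+1$ events, each at confidence $1-\delta/(K+1)$, yields the stated $\mathcal{O}$-expression.

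For the second inequality, the starting point is the observation that $R(h,P_1),\ldots,R(h,P_K)$ are i.i.d.\ copies of the scalar random variable $R(h,P)$ with $P\sim\mu$, taking values in $[0,1]$. Applying DKW to their empirical survival function gives, with probability at least $1-\delta/(K+1)$, the bound
\[
\mu\!\big(R(h,P)\ge\lambda\big) \;\le\; \tfrac{1}{K}\sum_{k=1}^K \mathbbm{1}\!\big(R(h,P_k)\ge\lambda\big) + \sqrt{\tfrac{\log(2(K+1)/\delta)}{2K}}
\]
uniformly in $\lambda$. Since the server does not see $R(h,P_k)$ but only $\widehat{\mathsf{QV}}_k(h)=R(h,\widehat P_k)$, for each $k$ I would invoke a one-sided Hoeffding bound at confidence $1-\delta/(K+1)$ to get $R(h,P_k)\le \widehat{\mathsf{QV}}_k(h)+c_k$ with $c_k=\sqrt{\log((K+1)/\delta)/(2n_k)}$. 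The inclusion $\{R(h,P_k)\ge\lambda\}\subseteq\{\widehat{\mathsf{QV}}_k(h)\ge\lambda-c_k\}$ then dominates each true indicator by the corresponding shifted empirical one; a union bound across the DKW event and the $K$ client-level Hoeffding events costs $\delta$ in total and produces the claimed inequality.

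The delicate point I would emphasize is that DKW delivers uniformity only on the true (unobserved) CDF, while the per-client Hoeffding bounds are pointwise. The reason uniformity survives the substitution is that the implication $R(h,P_k)\ge\lambda \Rightarrow \widehat{\mathsf{QV}}_k(h)\ge\lambda-c_k$ is monotone in $\lambda$ and is satisfied for every $\lambda$ simultaneously once the $k$-th Hoeffding event occurs; consequently no additional supremum over $\lambda$ is incurred when replacing the true indicators by the empirical ones, and the DKW slack remains the only $\lambda$-independent gap. This monotonicity argument is the main conceptual step of the proof; everything else is a routine allocation of the failure probability.
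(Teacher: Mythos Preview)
Your proposal is correct and follows essentially the same route as the paper: Hoeffding/McDiarmid at the meta-level plus per-client Hoeffding, with a $\delta/(K+1)$ allocation and a union bound for the first claim; DKW on the true risks $R(h,P_k)$ followed by per-client one-sided Hoeffding and the indicator domination $\{R(h,P_k)\ge\lambda\}\subseteq\{\widehat{\mathsf{QV}}_k(h)\ge\lambda-c_k\}$ for the second. Your explicit remark that this inclusion holds for all $\lambda$ simultaneously once the $k$-th Hoeffding event occurs (so no extra supremum is paid) is a point the paper leaves implicit, and your signs on the one-sided bounds are stated more carefully than in the paper's write-up.
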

The proof is given in Appendix \ref{sec:app:proofThms:nonRobust}. By ignoring poly-logarithmic terms, the empirical means over both the $ K $ users and the $ n_k $ samples from each client $ k $ converge at a rate of $ \left(\min\left\{K, \min_{k}~n_k / \log K\right\}\right)^{-1/2} $. In both inequalities, the left-hand sides represent strong statistical quantities, while the right-hand sides consist of:  
i) vanishing generalization gaps, plus  
ii) empirical values that can be fully evaluated based on users' private datasets and using the private query policy described earlier (i.e., the $ \widehat{\mathsf{Q}}_i(h) $ values). Additionally, each client needs to be queried only once.  

In Section \ref{sec:main:efficiency}, we discuss the computational complexity, certificates of privacy, and tightness of the all the bounds in our work. Before that,  Sections \ref{sec:main:fDiv} and \ref{sec:main:wass} extend Theorem \ref{thm:nonRobustEmpiricalMean} to the robust setting, where $ \mu' $ can be adversarially perturbed from $ \mu $ by either an $f$-divergence or a Wasserstein adversary. 

%%%%%%%%%%%%%%%%%%%%%%%%%%%%%%%%%%%%%%%%%%%%%%%%%%%%%%%%%%%%%%%%%%%%%%%%%%%%%%%%%%%%%%%%%%%%%%%%%%%%%%%%%%%%%%%%%%%%%%%%%%%%%%%%%%%%%%%%%%%%%%%%%%%%%%%%%%%%%%%%%%%%%%%%%%%%%%%%%%%%%%%%%%%%%%%%%%%%%%%%%%%%%%%%%%%%%%%%%%%%%%%%%%%%%%%%%%%%%%%%%%%%%%%%%%%%%%%%%%%%%%%%%%%%%%%%%%%%%%%%%%%%%%%%%%%%%%%%%%%%%%%%%%%%%%%%%%%%%%%%%%%%%%%%%%%%%%%%%%%%%%%%%%%%%%%%%%%%%%%%%%%%%%%%%%%%%%%%%%%%%%%%%%%%%%%%%%%%%%%%%%%%%%%%%%%%%%%%%%%%%%%%%%%%%%%%%%%%%%%%%%%%%%%%%%%%%%%%%%%%%%%%%%%%%%%%%%%%%%%%%%%%%%%%%%%%%%%%%%%%%%%%%%%%%%%%%%%%%%%%%%%%%%%%%%%%%%%%%%%%%%%%%%%%%%%%%%%%%%%%%%%%%%%%%%%%%%%%%%%%%%%%%%%%%%%%%%%%%%%%%%%%%%%%%%%%%%%%%%%%%%%%%%%%%%%%%%%%%%%%%%%%%%%%%%%%%%%%%%%%%%%%%%%%%%%%%%%%%%%%%%%%%%%%%%%%%%%%%%%%%%%%%%%%%%%%%%%%%%%%%%%%%%%%%%%%%%%%%%%%%%%%%%%%%%%%%%%%%%%%%%%%%%%%%%%%%%%%%%%%%%%%%%%%%%%%%%%%%%%%%%%%%%%%%%%%%%%%%%%%%%%%%%%%%%%%%%%%%%%%%%%%%%%%%%%%%%%%%%%%%%%%%%%%%%%%%%%%%%%%%%%%%%%%%%%%%%%%%%%%%%%%%%%%%%%%%%%%%%%%%%%%%%%%%%%%%%%%%%%%%%%%%%%%%%%%%%%%%%%%%%%%%%%%%%%%%%%%%%%%%%%%%%%%%%%%%%%%%%%%%%%%%%%%%%%%%%%%%%%%%%%%%%%%%%%%%%%%%%%%%%%%%%%%%%%%%%%%%%%%%%%%%%%%%%%%%%%%%%%%%%%%%%%%%%%%%%%%%%%%%%%%%%%%%%%%%%%%%%%%%%%%%%%%%%%%%%%%%%%%%%%%%%%%%%%%%%%%%%%%%%%%%%%%%%%%%%%%%%%%%%%%%%%%%%%%%%%%%%%%%%%%%%%%%%%%%%%%%%%%%%%%%%%%%%%%%%%%%%%%%%%%%%%%%%%%%%%%%%%%%%%%%%%%%%%%%%%%%%%%%%%%

\section{$f$-Divergence Meta-Distributional Shifts}
\label{sec:main:fDiv}

In this section, we provably bound the meta-distributionally robust performance of $ h $, in terms of the average risk and risk CDF.
\begin{definition}
% [$f$-Divergence Between Meta-Distributions]
\label{def:fDivMain}
Consider two meta-distributions $\mu, \mu' \in \mathcal{M}^2$ where $\mu'$ is absolutely continuous with respect to $\mu$. Let $f: [0, \infty) \rightarrow [-\infty, \infty]$ be a convex function such that $f(x)$ is finite for all $x > 0$, $f(1) = 0$, and $f(0) = \lim_{t \rightarrow 0^+} f(t)$ (which could be infinite). The $f$-divergence between $\mu$ and $\mu'$ is defined as:
\begin{equation}
\mathcal{D}_f(\mu' \| \mu) =
\int_{P \in \mathcal{M}}
f\left(
\frac{\mathrm{d}\mu'(P)}
{\mathrm{d}\mu(P)}
\right)
\mathrm{d}\mu(P).
\end{equation}
Also, for $\varepsilon \geq 0$, the $f$-divergence ball $\mathcal{G}^{f-\mathrm{div}}_{\varepsilon}(\mu)$ is defined as 
$
\left\{
\mu' \mid \mathcal{D}_f(\mu' \| \mu) \leq \varepsilon
\right\}
$,
which describes a neighborhood around $\mu$ where the divergence does not exceed $\varepsilon$.
\end{definition}
%%%%%%%%%%%%%%%%%%%%%%%%%%%%%%%%%%%%%%%%%%%%%%%%%%%%%%%%%%%%%%%%%%%%%%%%%%%%%%%%%%%%%%%%%%%%%%%%%%%%%%%%%%%%%%%%%%%%%%%%%%%%%%%%%%%%%%%%%%%%%%%%%%%%%%%%%%%%%%%%%%%%%%%%%%%%%%%%%%%%%%%%%%%%%%%%%%%%%%%%%% This defines a set of meta-distributions $\mu'$ that are within a certain ``distance" of $\varepsilon$ from a reference meta-distribution $\mu$ in terms of $f$-divergence (However, note that $f$-divergence in general is not a distance). It essentially describes a neighborhood around $\mu$ where the divergence does not exceed $\varepsilon$, providing a controlled way to manage deviations in distributional shifts.%%%%%%%%%%%%%%%%%%%%%%%%%%%%%%%%%%%%%%%%%%%%%%%%%%%%%%%%%%%%%%%%%%%%%%%%%%%%%%%%%%%%%%%%%%%%%%%%%%%%%%%%%%%%%%%%%%%%%%%%%%%%%%%%%%%%%%%%%%%%%%%%%%%%%%%%%%%%%%%%%%%%%%%%%%%%%%%%%%%%%%%%%%%%%%%%%%%%%%%%%%%%%%%%%%%%%%%%%%%%%%%%%%%%%%%%%%%%%%%%%%%%%%%%%%%%%%%%%%%%%%%%%%%%%%%%%%%%%%%%%%%%%%%%%%%%%%%%%%%%%%%%%%%%%%%%%%%%%%%%%%%%%%%%%%%%%%%%%%%%%%%%%%%%%%%%%%%%%%%%%%%%%%%%%%%%%%%%%%%%%%%%%%%%%%%%%%%%%%%%%%%%%%%%%%%%%%%%%%%%%%%%%%%%%%%%%%%%%%%%%%%%%%%%%%%%%%%%%%%%%%%%%%%%%%%%%%%%%%%%%%%%%%%%%%%%%%%%%%%%%%%%%%%%%%%%%%%%%%%%%%%%%%%%%%%%%%%%%%%%%%%%%%%%%%%%%%%%%%%%%%%%%%%%%%%%%%%%%%%%%%%%%%%%%%%%%%%%%%%%%%%%%%%%%%%%%%%%%%%%%%%%
Our goal is to provide a theoretical guarantee for the loss of $ h $ under the worst-case meta-distribution $ \mu' \in \mathcal{G}^{f\text{-div}}_{\varepsilon}(\mu) $, using only the query values from client samples in the source network governed by $ \mu $. Since clients generated by $ \mu' $ may follow different densities compared to $ \mu $, it is natural to reweight the robust loss to achieve a robust upper bound.
%%%%%%%%%%%%%%%%%%%%%%%%%%%%%%%%%%%%%%%%%%%%%%%%%%%%%%%%%%%%%%%%%%%%%%%%%%%%%%%%%%%%%%%%%%%%%%%%%%%%%%%%%%%%%%%%%%%%%%%%%%%%%%%%%%%%%%%%%%%%%%%%%%%%%%%%%%%%%%%%%%%%%%%%%%%%%%%%%%%%%%%%%%%%%%%%%%%%%%%%%%%%%%%%%%%%%%%%%%%%%%%%%%%%%%%%%%%%%%%%%%%%%%%%%%%%%%%%%%%
% \begin{definition}
% Let us define the set $\mathcal{S}$ as
% \begin{align}
% \mathcal{S}^{(K)}_{\varepsilon,\delta}\left(\kappa,c_1,c_2\right)
% &\triangleq
% \{
% \left(\alpha_1,\ldots,\alpha_K\right)\in\left[0,1+\kappa\varepsilon\right]^K\Big\vert~
% \nonumber\\
% &\left\vert
% \frac{1}{K}\sum_{k=1}^{K}\alpha_k
% -1
% \right\vert
% \leq
% c_1\sqrt{{\log\left({\delta}^{-1}\right)}/{K}}
% \nonumber\\,
% &
% \frac{1}{K}\sum_{k=1}^{K} f(\alpha_k)
% \leq
% \varepsilon + 
% c_2\sqrt{{\log\left({\delta}^{-1}\right)}/{K}}
% \Bigg\}
% \nonumber
% \end{align}
% \end{definition}
%%%%%%%%%%%%%%%%%%%%%%%%%%%%%%%%%%%%%%%%%%%%%%%%%%%%%%%%%%%%%%%%%%%%%%%%%%%%%%%%%%%%%%%%%%%%%%%%%%%%%%%%%%%%%%%%%%%%%%%%%%%%%%%%%%%%%%%%%%%%%%%%%%%%%%%%%%%%%%%%%%%%%%%%%%%%%%%%%%%%%%%%%%%%%%%%%%%%%%%%%%%%%%%%%%%%%%%%%%%%%%%%%%%%%%%%%%%%%%%%%%%%%%%%%%%%%%%%%%%
The following theorem formalizes this idea by reweighting the query values using coefficients $ \alpha_k $, $ k \in [K] $, and optimizing for the worst-case weights, provided they remain close to uniform weights. This approach yields a minimax-optimal bound with a vanishing generalization gap.

\begin{theorem}
%[Empirical Evaluation with $f$-Divergence Robustness]
\label{thm:mainfDivThm}
Assume an unknown meta-distributions $\mu\in \mathcal{M}^2$, let $\varepsilon\geq 0$, and consider $f(\cdot)$ to be as in Definition \ref{def:fDivMain}. Let $h: \mathcal{X} \rightarrow \mathcal{Y}$ be any model and $\ell$ be any measurable and  $1$-bounded loss function. Assume $P_1, \ldots, P_K$ represent $K$ independent and unknown sample distributions from $\mu$. Accordingly, let $\widehat{P}_k$ for $k \in [K]$ represent their empirical counterparts, formed using $n_k$ independent private samples from the $k$-th client. Let $\Lambda\triangleq 1+\kappa\varepsilon$. Define $\widehat{B}^*$ as:
\begin{align}
&\widehat{B}^*(\varepsilon)
\triangleq~
\sup_{0\leq\alpha_1, \ldots, \alpha_K \leq \Lambda}~
\frac{1}{K}\sum_{k=1}^{K}
\alpha_k
\widehat{\mathsf{QV}}_k\left(h\right)
\label{eq:ThmfDiv:BhatDef}
\\
&\mathrm{subject~to}\quad
\left\vert
\frac{1}{K}\sum_{k=1}^{K}\alpha_k
-1
\right\vert
\leq 
c_1\sqrt{{\log\left({\delta^{-1}}\right)}/{K}},
\nonumber\\
&\hspace{19mm}
\frac{1}{K}\sum_{k=1}^{K} f(\alpha_k)
\leq
\varepsilon + 
c_2\sqrt{{\log\left({\delta^{-1}}\right)}/{K}},
\nonumber
\end{align}
where constants $\kappa,c_1, c_2 \geq 0$ are known and depend on $f(\cdot)$. Then, for any $\delta > 0$, the following bound holds with probability at least $1 - \delta$:
\begin{align}
&\sup_{\mu'\in\mathcal{G}_{\varepsilon}^{f-\mathrm{div}}(\mu)}~
\mathbb{E}_{P \sim \mu'}\left[
\mathbb{E}_P\left[
\loss
\right]
\right]
~\leq~
\widehat{B}^*(\varepsilon)
\\
&
\hspace{10mm}
+\sqrt{\log\left(K{\delta}^{-1}\right)}
\mathcal{O}
\left[
\frac{1}{\sqrt{K}} +
\frac{1}{K}\sum_{k=1}^{K} n_k^{-1/2}
\right].
\nonumber
\end{align}
\end{theorem}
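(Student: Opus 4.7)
The plan is to recast the worst-case meta-distributional expectation as a constrained linear optimisation over density ratios with respect to $\mu$, then discretise the constraints by evaluating the ratio at the $K$ sampled clients, and finally account for the per-client sampling noise when replacing true risks $R(h, P_k)$ by their empirical counterparts $\widehat{\mathsf{QV}}_k(h)$. Throughout, the $1$-boundedness of $\ell$ is used both to truncate the density ratio and to apply Hoeffding-type concentration.

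For any $\mu' \in \mathcal{G}^{f-\mathrm{div}}_{\varepsilon}(\mu)$, set $\alpha(P) \triangleq \mathrm{d}\mu'/\mathrm{d}\mu$, so that $\mathbb{E}_{P \sim \mu'}[R(h,P)] = \mathbb{E}_{P \sim \mu}[\alpha(P)\, R(h,P)]$ with constraints $\alpha \ge 0$, $\mathbb{E}_\mu[\alpha] = 1$, and $\mathbb{E}_\mu[f(\alpha)] \le \varepsilon$. A preliminary truncation argument --- justified by the convexity/growth of $f$ together with $\|\ell\|_\infty \le 1$ --- shows that restricting attention to $\alpha \le \Lambda \triangleq 1 + \kappa\varepsilon$ loses only a negligible amount in the objective, because the contribution from the tail $\{\alpha > \Lambda\}$ is bounded by $\mathbb{E}_\mu[\alpha\,\mathbbm{1}\{\alpha > \Lambda\}]$, which the $f$-divergence constraint controls for a suitable constant $\kappa$ depending only on $f$. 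This reduces the infinite-dimensional search to one over bounded weight functions.

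Next I would discretise: evaluate $\alpha$ only at the observed clients and set $\alpha_k \triangleq \alpha(P_k) \in [0,\Lambda]$. Each of the three relevant population quantities, namely $\mathbb{E}_\mu[\alpha]$, $\mathbb{E}_\mu[f(\alpha)]$, and $\mathbb{E}_\mu[\alpha\, R(h,P)]$, is an iid average of a bounded random variable (since $\alpha_k \le \Lambda$, $f(\alpha_k)$ is bounded on $[0,\Lambda]$, and $R \le 1$). Hoeffding's inequality --- applied in a uniform fashion via either McDiarmid on the optimal-value functional or a covering of the Lipschitz class $\{\alpha \in [0,\Lambda]^K\}$ --- controls each within $c_j \sqrt{\log(1/\delta)/K}$ of its population counterpart, which justifies the relaxed constraint set defining $\widehat{B}^*(\varepsilon)$. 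Subsequently, replacing $R(h, P_k)$ with $\widehat{\mathsf{QV}}_k(h)$ contributes an additional per-client error of order $\sqrt{\log(K/\delta)/n_k}$ by Hoeffding and a union bound over $k\in[K]$; reweighting by $\alpha_k \le \Lambda$ and averaging yields the $\mathcal{O}\big(K^{-1}\sum_k n_k^{-1/2}\big)$ term in the stated generalisation gap.

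The main obstacle is establishing that the three uniform concentration statements hold \emph{simultaneously} in such a way that the empirical program $\widehat{B}^*$ is both an upper bound for the population sup (with high probability) and asymptotically tight. Concretely, one must verify: (i) the worst-case population density ratio $\alpha^\star$ has sample values $\alpha^\star(P_k)$ that, with high probability, lie inside the relaxed empirical constraint set, so that $\widehat{B}^*(\varepsilon)$ dominates the true worst-case up to the stated gap; and (ii) conversely, any empirically feasible $(\alpha_k)$ can be lifted to a population-feasible $\mu'$ whose population objective is close to $\widehat{B}^*(\varepsilon)$, establishing minimax optimality. Both directions rest on the convexity of $f$ combined with boundedness from the truncation step, but coupling them so that the resulting two-sided error matches the optimal $\mathcal{O}(1/\sqrt{K})$ statistical rate is the delicate part and dictates the precise values of the constants $\kappa, c_1, c_2$.
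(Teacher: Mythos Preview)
Your proposal is correct and matches the paper's approach: rewrite via the density ratio $\alpha = \mathrm{d}\mu'/\mathrm{d}\mu$, bound $\alpha \le \Lambda$ from the $f$-divergence constraint, concentrate the empirical constraints, and replace $R(h,P_k)$ by $\widehat{\mathsf{QV}}_k(h)$ via per-client Hoeffding plus a union bound over $k\in[K]$. One simplification you obscure: no uniform concentration or covering over $[0,\Lambda]^K$ is needed. The worst-case $\mu'^\star$ (hence $\alpha^\star(\cdot)$) is determined by $\mu, h, \ell, \varepsilon$ alone and does \emph{not} depend on the sample, so $\alpha^\star(P_1),\ldots,\alpha^\star(P_K)$ are i.i.d.\ bounded random variables; three \emph{pointwise} applications of McDiarmid --- to $\tfrac{1}{K}\sum_k \alpha^\star_k$, to $\tfrac{1}{K}\sum_k f(\alpha^\star_k)$, and to $\tfrac{1}{K}\sum_k \alpha^\star_k R(h,P_k)$ --- already place $(\alpha^\star_k)_k$ inside the relaxed feasible set of $\widehat{B}^*(\varepsilon)$. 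This is exactly your item (i), and it alone delivers the upper bound. Item (ii), lifting an empirically feasible $(\alpha_k)$ back to a population-feasible $\mu'$, is not part of the theorem; the paper treats asymptotic minimax optimality in a separate remark.
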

%%%%%%%%%%%%%%%%%%%%%%%%%%%%%%%%%%%%%%%%%%%%%%%%%%%%%%%%%%%%%%%%%%%%%%%%%%%%%%%%%%%%%%%%%%%%%%%%%%%%%%%%%%%%%%%%%%%%%%%%%%%%%%%%%%%%%%%%%%%%%%%%%%%%%%%%%%%%%%%%%%%%%%%%%%%%%%%%%%%%%%%%%%%%%%%%%%%%%%%%%%%%%%%%%%%%%%%%%%%%%%%%%%%%%%%%%%%%%%%%%%%%%%%%%%%%%%%%%%%%%%%%%%%%%%%%%%%%%%%%%%%%%%%%%%%%%%%%%%%%%%%%%%%%%%%%%%%%%%%%%%%%%%%%%%%%%%%%%%%%%%%%%%%%%%%%%%%%%%%%%%%%%%%%%%%%%%%%%%%%%%%%

The proof including the formulations for $\kappa,c_1, c_2$ are provided in Appendix \ref{sec:app:proofThms:fDiv}.  This theorem establishes a robust bound on the expected loss under meta-shifts using $\widehat{B}^*(\varepsilon)$, along with a vanishing generalization gap with a decay rate of $ \Tilde{\mathcal{O}}\left( \min\{K,n_{1:K}\}^{-1/2} \right) $.  The empirical quantity $ \widehat{B}^*(\varepsilon) $ is derived from a convex optimization problem performed server-side. In Section \ref{sec:main:efficiency}, we discuss all of aspects of computing this quantity. Also, it worth noting that as both $\varepsilon,\frac{1}{K}$ tend to zero, the bound becomes increasingly similar to the non-robust case of Theorem \ref{thm:nonRobustEmpiricalMean}, which should be expected. As discussed before, a key advantage of our result is that the generalization gap lacks any \emph{non-vanishing} $ \varepsilon $-dependent term. Next, we present our main result for robust CDF estimation of the loss under KL meta-shifts:

\begin{theorem}
%[Empirical Evaluation of Risk Distribution with $f$-Divergence Robustness]
\label{thm:mainRiskDistThm}
Assume the setting of Theorem \ref{thm:mainfDivThm}. For any $\lambda\in\mathbb{R}$, let us define the empirical value $\widehat{J}^*\left(\varepsilon,\lambda\right)$ as
\begin{align}
&\sup_{0\leq\alpha_{1:K}\leq \Lambda}~
\frac{1}{K}\sum_{k=1}^{K}
\alpha_k
\mathbbm{1}\left(
\widehat{\mathsf{QV}}_k\left(h\right)
\geq
\lambda-
\sqrt{\frac{\log\left(\frac{K+2}{\delta}\right)}{2n_k}}
\right)
\nonumber\\
&
\hspace{3mm}
\mathrm{subject~to}\quad
\left\vert
\frac{1}{K}\sum_{k=1}^{K}\alpha_k
-1
\right\vert
\leq c_1\sqrt{\frac{1}{K}
\log\left(K{\delta}^{-1}\right)},
\nonumber\\
&
\hspace{21mm}
\frac{1}{K}\sum_{k=1}^{K}f\left(\alpha_k\right)
\leq
\varepsilon+
c_2\sqrt{\frac{1}{K}
\log\left(K{\delta}^{-1}\right)},
\nonumber
\end{align}
Then, with probability at least $1-\delta$, the following bound holds uniformly over all $\lambda\in\mathbb{R}$:
\begin{align}
&\sup_{\mu'\in\mathcal{G}_{\varepsilon}^{f-\mathrm{div}}(\mu)}~
\mu'\bigg(
\mathbb{E}_P\left[\loss\right]
\geq\lambda\bigg)
~\leq~
\label{eq:thm:mainRiskDistThm:mainBound}\\
&
\hspace{28mm}
\widehat{J}^*\left(\varepsilon,\lambda\right) +
\mathcal{O}\left(
\sqrt{{K}^{-1}\log\left(K{\delta}^{-1}\right)}
\right).
\nonumber
\end{align}
\end{theorem}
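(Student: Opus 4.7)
The plan is to combine three concentration layers, paralleling Theorem \ref{thm:mainfDivThm} but with the linear risk functional $R(h, \cdot)$ replaced by the threshold indicator $\mathbbm{1}(R(h, \cdot) \geq \lambda)$ and with added uniformity in $\lambda$: (a) Radon--Nikodym reweighting duality for the $f$-divergence meta-ball to convert the sup over $\mu'$ into a sup over bounded weight functions; (b) the adversarial DKW-type inequality of Section \ref{sec:app:GCandDKW} to reduce this infinite-dimensional sup to the finite-dimensional optimization defining $\widehat{J}^*$ while preserving uniformity in $\lambda$; and (c) per-client Hoeffding bounds so that the unobservable true risks $R(h, P_k)$ can be upper bounded by the queriable values $\widehat{\mathsf{QV}}_k(h)$ with a small threshold shift.

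For layer~(a), writing $w = d\mu'/d\mu$, the left-hand side of \eqref{eq:thm:mainRiskDistThm:mainBound} becomes $\sup_w \int w(P)\, \mathbbm{1}(R(h, P) \geq \lambda)\, d\mu(P)$ subject to $\int w \, d\mu = 1$ and $\int f(w)\, d\mu \leq \varepsilon$. Exactly as in the proof of Theorem \ref{thm:mainfDivThm}, $f$-divergence duality yields that the optimal $w^*$ may be taken essentially bounded by $\Lambda = 1 + \kappa \varepsilon$, which mirrors the $[0,\Lambda]$ box on the $\alpha_k$'s in $\widehat{J}^*$.

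For layer~(b), I would invoke the adversarial DKW extension on the VC-class of threshold indicators $\{P \mapsto \mathbbm{1}(R(h, P) \geq \lambda): \lambda \in \reals\}$, which has VC-dimension one. This yields, with probability at least $1 - \delta/2$, uniformly in $\lambda \in \reals$ and in any admissible $w \in [0,\Lambda]$,
\begin{align*}
\int w \,\mathbbm{1}(R(h, \cdot) \geq \lambda)\, d\mu \leq \frac{1}{K} \sum_{k=1}^K \alpha_k\, \mathbbm{1}(R(h, P_k) \geq \lambda) + \mathcal{O}\bigl(\sqrt{K^{-1}\log(K/\delta)}\bigr),
\end{align*}
with $\alpha_k := w(P_k)$. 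Accompanying Hoeffding bounds for the bounded random variables $w(P)$ and $f(w(P))$ transfer the population constraints $\int w\, d\mu = 1$ and $\int f(w)\, d\mu \leq \varepsilon$ to the empirical constraints in $\widehat{J}^*$, with the stated $c_1, c_2\sqrt{K^{-1}\log(\delta^{-1})}$ slacks. For layer~(c), a Hoeffding inequality on each client's $n_k$-sample empirical loss with a union bound over $k \in [K]$ gives $|\widehat{\mathsf{QV}}_k(h) - R(h, P_k)| \leq \tau_k := \sqrt{\log((K+2)/\delta)/(2n_k)}$ simultaneously with probability at least $1 - \delta/2$, so $\mathbbm{1}(R(h, P_k) \geq \lambda) \leq \mathbbm{1}(\widehat{\mathsf{QV}}_k(h) \geq \lambda - \tau_k)$. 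Substituting into the layer~(b) bound and a final union bound complete the proof.

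The main technical obstacle is the simultaneous uniformity required in layer~(b): the inequality must hold for every $\lambda \in \reals$ (straightforward via the VC-dimension-one argument of classical DKW) \emph{and} for every feasible infinite-dimensional weight $w$ constrained only by an $f$-divergence inequality. A naive covering argument on $w$ would produce a non-vanishing $\varepsilon$-dependent generalization gap---exactly the shortcoming of earlier distributionally robust approaches noted in the introduction---whereas the adversarial DKW of Section \ref{sec:app:GCandDKW} collapses the sup over $w$ into the tractable sup over $\alpha_k = w(P_k)$ while retaining the $K^{-1/2}$ rate. Once this ingredient is in place, the remainder of the argument is essentially bookkeeping.
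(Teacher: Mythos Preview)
Your proposal is correct and follows essentially the same route as the paper's proof: the paper first applies per-client McDiarmid/Hoeffding to obtain $\mathbbm{1}(\mathsf{QV}_k(h)\ge\lambda)\le\mathbbm{1}(\widehat{\mathsf{QV}}_k(h)\ge\lambda-\tau_k)$ (your layer~(c)), then invokes the robust DKW inequality of Lemma~\ref{lemma:RobustDKWInq} (your layers~(a)+(b) combined) with error budget $2\delta/(K+2)$ to pass from the population weighted CDF to the finite-dimensional sup defining $\widehat{J}^*$. The only cosmetic difference is ordering---the paper does the per-client step first and then cites Lemma~\ref{lemma:RobustDKWInq} as a black box, whereas you unpack the Radon--Nikodym reweighting explicitly before applying DKW.
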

Proof is in Appendix \ref{sec:app:proofThms:fDiv}, and extends the classical results from Glivenko-Cantelli theorem and DKW bound into an adversarial setting. We have detailed our new theoretical findings in Section \ref{sec:app:GCandDKW}. The bound in Theorem \ref{thm:mainRiskDistThm} exhibits the following properties: i) It is forward-shifted with respect to the true CDF, meaning it exhibits a delayed reaction to increasing $\lambda$ compared to the true CDF. The maximum delay is on the order of
$
\mathcal{O}(\sqrt{{\log K}/{\min_{k}n_k}}).
$
ii) The value of the CDF estimator also deviates from the true estimator by the amount $\mathcal{O}(\sqrt{\log K/K})$. Consequently, as $K$ and $(\min_{k}n_k)/\log K$ tend to infinity, the generalization gap becomes zero.
iii) The bound holds uniformly over all $\lambda \in \mathbb{R}$, similar to the original DKW inequality. Again, the empirical value $\widehat{J}^*\left(\varepsilon,\lambda\right)$ is the solution to a server-side convex and thus efficient program. More details will be given in Section \ref{sec:main:efficiency}.

\section{Wasserstein Meta-Distributional Shifts}
\label{sec:main:wass}

This section addresses the case of Wasserstein meta-shifts from $\mu$ to $\mu'$. Such shifts present significant challenges, as the model $h$ may encounter entirely unseen regions of the meta-distributional support. In practical terms, users may exhibit data distributions that are entirely novel compared to $\widehat{P}_k$s of the evaluation phase in the source network. To mitigate this effect, as we will show later in this section, server needs to query out-of-domain loss values from each of the $K$ clients. Here, we provide theoretical guarantees for the average risk, but not for the entire risk CDF. Addressing the full risk CDF requires a more advanced methodology, which falls beyond the scope of the current paper and is defered to future work. We begin by introducing both standard and meta-distributional Wasserstein metrics, and then present our main result, an analog of Theorem \ref{thm:mainfDivThm} for Wasserstein-type shifts.

\begin{definition}
%[Wasserstein metric on $\MZ$]
For any two measures $P,Q\in\mathcal{M}\left(\mathcal{Z}\right)$ and a lower semi-continuous function $c:\mathcal{Z}\times\mathcal{Z}\rightarrow\mathbb{R}_{\ge0}$, we define the \emph{Wasserstein} distance between $P$ and $Q$ as
\begin{equation}
\mathcal{W}_c\left(P,Q\right)
\triangleq
\inf_{\nu\in\mathcal{C}\left(P,Q\right)}
\mathbb{E}_{\left(\boldsymbol{Z},\boldsymbol{Z}'\right)\sim\nu}
\left\{
c\left(\boldsymbol{Z},\boldsymbol{Z}'\right)
\right\},
\end{equation}
where $\mathcal{C}\left(P,Q\right)$ denotes the set of all couplings between $P$ and $Q$, i.e., all joint probability measures in $\mathcal{M}\left(\mathcal{Z}\times\mathcal{Z}\right)$ that have fixed $P$ and $Q$ as their respective marginals \cite{kuhn2019wasserstein}.  
\end{definition}
\vspace*{-1mm}
The function $c$ is called the \emph{transportation cost} and is user-defined. For example, $c\left(\boldsymbol{Z},\boldsymbol{Z}'\right)=\left\Vert\boldX-\boldX'\right\Vert_2+\infty\cdot\mathbbm{1}\left(\boldy\neq\boldy'\right)$ corresponds to a typical \emph{feature-shift} scenario in robust ML. $\mathcal{W}_c\left(P,Q\right)$ (which is a metric over $\mathcal{M}$) measures the minimum cost of transforming $P$ into $Q$ or vice versa according to the cost characterized by $c$. Unlike $f$-divergence, Wasserstein distance can stay bounded under support shifts. In fact, it is a powerful tool to model slight support changes between distributions and due to this property is widely used in adversarial robustness research. 

In a similar fashion, one can define the Wasserstein distance between any two meta-distributions $\mu,\mu'\in\MMZ$ with respect to any valid transportation cost over the space of measures $\MZ$, such as the ordinary Wasserstein distance.

\begin{definition}[Wasserstein metric over $\mathcal{M}^2$]
\label{def:wassDistance}
For any two meta-distributions $\mu,\mu'\in\MMZ$, assume a distributional transportation cost such as ordinary Wasserstein distance $\mathcal{W}_c
\left(\cdot,\cdot\right)$, where $c$ is a bounded and proper (according to Definition \ref{def:wassDistance}) transportation cost on $\mathcal{Z}\times\mathcal{Z}$. In this regard, let us define
\begin{align}
\left\Vert\mu-\mu'\right\Vert_{\mathcal{W}_c}
\triangleq
\inf_{\nu\in\mathcal{C}\left(\mu,\mu'\right)}
\mathbb{E}_{\left(P,Q\right)\sim\nu}
\left\{
\mathcal{W}_c
\left(P,Q\right)
\right\}
\end{align}
as the Wasserstein distance between $\mu$ and $\mu'$ according to transportation cost $\mathcal{W}_c$.
\end{definition}
\vspace*{-1mm}
Here, $\mathcal{C}\left(\mu,\mu'\right)$ is the set of all couplings (joint measures in $\mathcal{M}\left(\MZ\times\MZ\right)$) with $\mu$ and $\mu'$ as their respective marginals.
Accordingly, for $\varepsilon\ge0$ and $P\in\mathcal{M}$, we define the Wasserstein ball of radius $\varepsilon$ around $P\in\mathcal{M}$ as
$
\mathcal{B}_{\varepsilon}^{\mathrm{wass}}\left(P\right)
\triangleq
\left\{
Q\vert~\mathcal{W}_c\left(P,Q\right)\leq\varepsilon
\right\},
$
where the transportation cost $c$ is hidden from formulation for the sake of simplicity. Similarly, one can define the Wasserstein ball around meta-distribution $\mu\in\MMZ$ with radius $\varepsilon$ as 
$\mathcal{G}_{\varepsilon}\left(\mu\right)
\triangleq
\left\{
\mu'\big\vert~\left\Vert\mu-\mu'\right\Vert_{\mathcal{W}_c}\leq\varepsilon
\right\},$
which is the set of meta-distributions with a (meta-distributional) Wasserstein distance of at most $\varepsilon$ from $\mu$.

We now present our main result of this section: a quasi-convex (and thus polynomial-time) optimization problem that provides empirical meta-distributionally robust evaluation guarantees against Wasserstein shifts, with a asymptotically vanishing generalization gap.

\begin{figure}[t]
\centering
\includegraphics[width=0.7\linewidth]{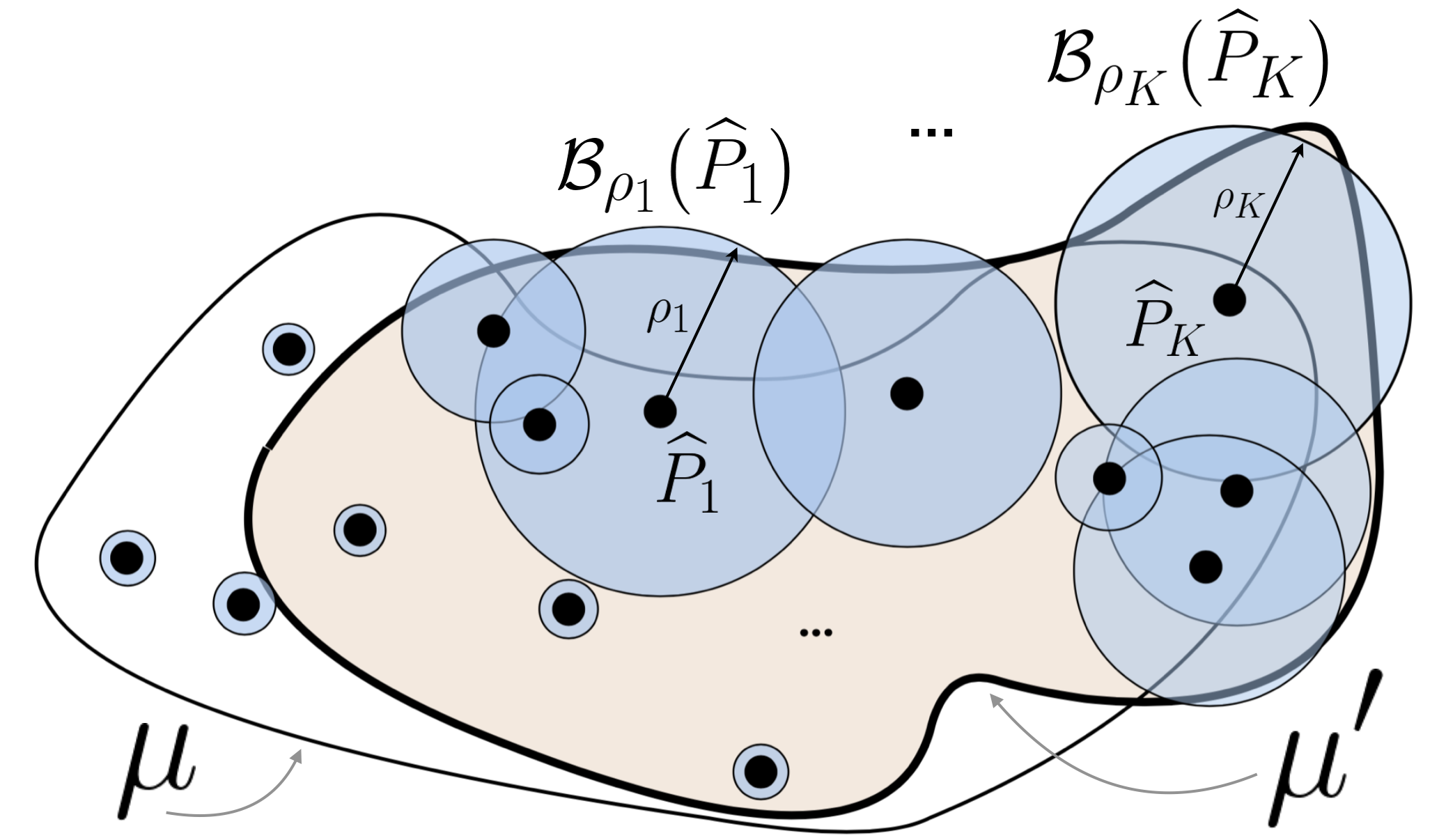}
\caption{\label{fig:Thm6Diagram}
A graphical illustration of the core idea behind Theorem \ref{thm:wass_main}. Each tuple $\rho_{1:K}$ that satisfies the constraints corresponds to one possibility for $\mu'$, via extending the local risk of $\widehat{P}_k$s to their worst-case out-of-distribution values in $\mathcal{B}_{\rho_k}\big(\widehat{P}_k\big)$. Therefore, we maximize over all such $\rho_{1:K}$ to account for the worst $\mu'$.}
\vspace*{-3mm}
\end{figure}

%%%%%%%%%%%%%%%%%%%%%%%%%%%%%%%%%%%%%%%%%%%%%%%%%%%%%%%%%%%%%%%%%%%%%%%%%%%%%%%%%%%%%%%%%%%%%%%%%%%%%%%%%%%%%%%%%%%%%%%%%%%%%%%%%%%%%%%%%%%%%%%%%%%%%%%%%%%%%%%%%%%%%%%%%%%%%%%%%%%%%%%%%%%%%%%%%%%%%%%%%%%%%%%%%%%%%%%%%%%%%%%%%%%%%%%%%%

\begin{theorem}
[Empirical Evaluation with Wasserstein Robustness]
\label{thm:wass_main}
% Assume an unknown meta-distributions $\mu\in \mathcal{M}^2$, let $h: \mathcal{X} \rightarrow \mathcal{Y}$ be an arbitrarily model and let $\ell:\mathcal{Z}\rightarrow\mathbb{R}$ be a $1$-bounded non-negative loss function. Assume $P_1, \ldots, P_K$ represent $K \in \mathbb{N}$ i.i.d. and unknown sample distributions from $\mu$. Accordingly, let $\{(\boldsymbol{X}_i^{(k)}, y_i^{(k)}) \mid i \in [n_k]\}$ for $k \in [K]$ represent a known (but private) empirical dataset of size $n_k \geq 1$ consisting of i.i.d. samples from $P_k$. 
Consider the same setting as in Theorem \ref{thm:mainfDivThm}.
Let $c$ be a bounded and proper (according to Definition \ref{def:wassDistance}) transportation cost on $\mathcal{Z}\times\mathcal{Z}$. 
% \footnote{Alternatively, one can assume $c$ is continuous and the set of samples from $P\sim\mu$ together with their adversarial examples fall (with high probability) inside a compact set. This automatically requires $c$ to be bounded by a constant $R$. In our work, $R$ need not be small, since only its logarithm appear in the bounds. However, this condition can be relaxed with addition of further mathematical work which we have avoided to not make the paper too technical.}.
For any given $\varepsilon,\delta>0$, consider the following constrained optimization problem:
\begin{align}
&\widehat{U}^*\left(\varepsilon\right)
\triangleq
\sup_{\rho_1,\ldots,\rho_K\ge~\varepsilon/K}~
\frac{1}{K}\sum_{k=1}^{K}
\widehat{\mathsf{QV}}_k\left(h,\rho_k\right)
\label{eq:wassMainThm:UhatDef}
\\
&\mathrm{subject~to}\quad
\frac{1}{K}\sum_{k=1}^{K}
\rho_k\leq
\varepsilon\left(1+\frac{1}{K}\right)
+c_1\sqrt{\frac{
\log\left(\frac{K+2}{\delta}\right)
}{K}},
\nonumber
\end{align}
where $c_1$ is a universal constant. Then, the following bound holds with probability at least $1-\delta$ for the meta-distributionally robust loss of $h$ around $\mu$:
\begin{align}
&\sup_{\mu'\in\mathcal{G}_{\varepsilon}\left(\mu\right)}~
\mathbb{E}_{P\sim\mu'}\left(
\mathbb{E}_P\left[\loss\right]
\right)
\leq
\widehat{U}^*\left(\varepsilon\right)+
\nonumber\\
&
\hspace{11mm}
\mathcal{O}\left(
\sqrt{\frac{1}{K}\log\left(K{\delta}^{-1}\right)}+
\frac{1}{K}
\sum_{k=1}^{K}
\sqrt{\frac{1}{n_k}\log\frac{Kn_k}{\varepsilon\delta}}
\right).
\nonumber
\end{align}
\end{theorem}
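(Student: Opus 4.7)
The plan is to reduce the meta-distributional Wasserstein supremum to a collection of per-client local Wasserstein-DRO problems via a coupling argument, apply Wasserstein strong duality at each client to compare the true local adversarial risk to the queried $\widehat{\mathsf{QV}}_k(h,\rho_k)$, and finally pass to a supremum over per-client budgets $\rho_{1:K}$ to arrive at $\widehat{U}^{\ast}(\varepsilon)$.

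\textbf{Key steps.} (i) For any $\mu'\in\mathcal{G}_{\varepsilon}(\mu)$, Definition \ref{def:wassDistance} produces a coupling $\nu\in\mathcal{C}(\mu,\mu')$ with $\mathbb{E}_{\nu}[\mathcal{W}_c(P,Q)]\le\varepsilon$. Disintegrating $\nu$ against its first marginal and sampling $Q_k\sim\nu(\cdot\mid P_k)$ yields i.i.d.\ pairs with $\rho_k:=\mathcal{W}_c(P_k,Q_k)$ satisfying $\mathbb{E}[\rho_k]\le\varepsilon$, and $\mathbb{E}_{P\sim\mu'}R(h,P)=\mathbb{E}[R(h,Q_k)]$. (ii) Since $\ell\in[0,1]$ and $c$ is bounded, Hoeffding on the $K$ i.i.d.\ pairs $(P_k,Q_k)$ gives, on an event of probability at least $1-\delta/3$, both $\mathbb{E}_{P\sim\mu'}R(h,P)\le\frac{1}{K}\sum_{k}R(h,Q_k)+\mathcal{O}(\sqrt{\log(K/\delta)/K})$ and $\frac{1}{K}\sum_{k}\rho_k\le\varepsilon+c_1\sqrt{\log((K+2)/\delta)/K}$, which reproduces the feasibility constraint of \eqref{eq:wassMainThm:UhatDef} up to the additive $\varepsilon/K$ slack. (iii) By definition, $R(h,Q_k)\le\widetilde{R}_{\rho_k}(h,P_k):=\sup_{Q'\in\mathcal{B}_{\rho_k}(P_k)}\mathbb{E}_{Q'}[\ell]$. (iv) By Wasserstein strong duality \cite{sinha2018certifying}, both $\widetilde{R}_{\rho_k}(h,P_k)$ and $\widehat{\mathsf{QV}}_k(h,\rho_k)$ admit the dual form $\inf_{\gamma\ge 0}\bigl\{\gamma\rho_k+\mathbb{E}_{P}[\phi_{\gamma}]\bigr\}$ with $\phi_\gamma(z):=\sup_{z'}\{\ell(z')-\gamma c(z,z')\}\in[0,1]$, using $P=P_k$ or $\widehat{P}_k$ respectively; enforcing the floor $\rho_k\ge\varepsilon/K$ caps the dual optimizer at $\gamma^{\star}=\mathcal{O}(K/\varepsilon)$, since $\phi_\gamma$ is bounded and the marginal gain $-\rho_k$ per unit $\gamma$ dominates otherwise. (v) A covering argument on the one-dimensional interval $\gamma\in[0,\mathcal{O}(K/\varepsilon)]$ combined with Hoeffding on the bounded statistic $\phi_\gamma$ yields, with probability at least $1-\delta/(3K)$ per client, $\widetilde{R}_{\rho_k}(h,P_k)\le\widehat{\mathsf{QV}}_k(h,\rho_k)+\mathcal{O}(\sqrt{\log(Kn_k/(\varepsilon\delta))/n_k})$ \emph{uniformly} in $\rho_k\ge\varepsilon/K$. (vi) Averaging the per-client bound over $k$, substituting into (ii), and taking supremum over feasible $\rho_{1:K}$ (a relaxation, since the coupling-induced $\rho_k$'s already satisfy the budget on the good event) yields the claim; a union bound over the events in (ii) and (v) closes the argument.

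\textbf{Main obstacle.} The most delicate piece will be step (v): obtaining a per-client comparison that is \emph{uniform in $\rho_k$} and whose $\varepsilon$-dependence enters only through $\log(1/\varepsilon)$. Because $\widehat{\mathsf{QV}}_k(h,\rho)$ is defined through an infimum over an a priori unbounded dual parameter $\gamma$, some mechanism is needed to confine $\gamma$ to a compact set before uniform concentration can be applied; the constraint $\rho_k\ge\varepsilon/K$ in \eqref{eq:wassMainThm:UhatDef} is precisely what achieves this, and covering the resulting interval of length $\mathcal{O}(K/\varepsilon)$ is what produces the $\log(Kn_k/(\varepsilon\delta))$ factor in the generalization gap. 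A secondary subtlety is that the $\sup_{\mu'}$ must be bounded by a data-independent optimization, so the coupling-induced $\rho_k$'s, which are random, must be dominated by a deterministic feasible set; this is exactly why the budget constraint in \eqref{eq:wassMainThm:UhatDef} inflates $\varepsilon$ by the additive terms $\varepsilon/K$ and $c_1\sqrt{\log((K+2)/\delta)/K}$.
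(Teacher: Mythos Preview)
Your proposal is correct and follows essentially the same two-part structure as the paper: first reduce the meta-Wasserstein supremum to per-client local Wasserstein-DRO problems with budgets summing to roughly $\varepsilon$, then uniformly compare the statistical $\mathsf{QV}_k(h,\rho)$ to the empirical $\widehat{\mathsf{QV}}_k(h,\rho)$ via duality, the bound $\gamma^\star\le K/\varepsilon$ enabled by the floor $\rho_k\ge\varepsilon/K$, and a covering of $[0,K/\varepsilon]$.

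The one noteworthy difference is in Part~I: the paper invokes the Wasserstein strong-duality lemma directly at the \emph{meta} level, obtaining a deterministic optimal transport map $P\mapsto Q^*(P,\gamma^*)$ and setting $\rho^*(P)=\mathcal{W}_c(P,Q^*(P,\gamma^*))$, so that $\psi(P_k)$ and $\zeta(P_k)=\rho^*(P_k)$ are i.i.d.\ functions of $P_k$ alone and McDiarmid applies immediately. You instead fix a (near-)worst $\mu'$, take a (near-)optimal coupling $\nu$, and sample auxiliary $Q_k\sim\nu(\cdot\mid P_k)$, applying Hoeffding to the i.i.d.\ \emph{pairs} $(P_k,Q_k)$; after taking the sup over feasible $\rho_{1:K}$ the auxiliary $Q_k$ disappear, and since the resulting event is $\sigma(P_{1:K},\widehat P_{1:K})$-measurable, the high-probability bound transfers. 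This primal coupling route is equivalent to the paper's dual route (the paper's $(P,Q^*(P,\gamma^*))$ is precisely an optimal coupling), and is arguably more elementary since it avoids re-invoking duality in the space $\mathcal{M}^2$. Both routes need the same care that the infimum/supremum may not be attained, handled by approximation. Your Part~II matches the paper's almost verbatim, including the Lipschitzness of $\phi_\gamma$ in $\gamma$ (from boundedness of $c$) and the covering cardinality that produces the $\log(Kn_k/(\varepsilon\delta))$ factor.
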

The proof is provided in Appendix \ref{sec:app:proofThms:wass}. Similar to Theorem \ref{thm:mainfDivThm}, Theorem \ref{thm:wass_main} offers a robust bound on the expected loss under Wasserstein meta-distributional shifts using $\widehat{U}^*(\varepsilon)$. Once again, the generalization gap decreases asymptotically as both $K$ and $\left(\min_{k \in [K]} n_k\right)/\log K$ increase with rate
$
\mathcal{O}\left(\sqrt{{\log K}/{K}} + \max_{k\in[K]}\sqrt{\frac{1}{n_k}\log(K n_k)}\right)
$. It is important to note that the inherent robustness of $h$ against Wasserstein-type distributional shifts, if it exists, would be reflected in $\widehat{U}^*(\varepsilon)$, thereby reducing the bound.
%As a result, again, we do not have any non-vanishing and $\varepsilon$-dependent term in the generalization gap.

A notable contribution of our work in this section is that we show robust evaluation against Wasserstein adversarial attacks to $\mu$ is possible using only $\widehat{\mathsf{QV}}_i(h,\rho)$ values. In other words, it is not needed to move data points into a central server to perform collective attacks, and each client can independently assess the performance of $h$ under local attacks. However, the budget values $\rho_k$ for each client need to be carefully selected. In Section \ref{sec:main:efficiency}, we show this is possible at the server-side in polynomial time. Figure \ref{fig:Thm6Diagram} gives a graphical illustration of the procedure in Theorem \ref{thm:wass_main}.

%In the following, we also discuss the computational complexity and privacy aspects of our scheme in Remarks \ref{remark:wass:quasi-convexity}, \ref{remark:wass:client-side-compcomp}, and \ref{remark:wass:privacy}.

%%%%%%%%%%%%%%%%%%%%%%%%%%%%%%%%%%%%%%%%%%%%%%%%%%%%%%%%%%%%%%%%%%%%%%%%%%%%%%%%%%%%%%%%%%%%%%%%%%%%%%%%%%%%%%%%%%%%%%%%%%%%%%%%%%%%%%%%%%%%%%%%%%%%%%%%%%%%%%%%%%%%%%%%%%%%%%%%%%%%%%%%%%%%%%%%%%%%%%%%%%%%%%%%%%%%%%%%%%%%%%%%%%%%%%%%%%%%%%%%%%%%%%%%%%%%%%%%%%%%%%%%%%%%%%%%%%%%%%%%%%%%%%%%%%%%%%%%%%%%%%%%%%%%%%%%%%%%%%%%%%%%%%%%%%%%%%%%%%%%%%%%%%%%%%%%%%%%%%%%%%%%%%%%%%%%%%%%%%%%%%%%%%%%%%%%%%%%%%%%%%%%%%%%%%%%%%%%%%%%%%%%%%%%%%%%%%%%%%%%%%%%%%%%%%%%%%%%%%%%%%%%%%%%%%%%%%%%%%%%%%%%%%%%%%%%%%%%%%%%%%%%%%%%%%%%%%%%%%%%%%%%%%%%%%%%%%%%%%%%%%%%%%%%%%%%%%%%%%%%%%%%%%%%%%%%%%%%%%%%%%%%%%%%%%%%%%%%%%%%%%%%%%%%%%%%%%%%%%%%%%%%%%%%%%%%%%%%%%%%%%%%%%%%%%%%%%%%%%%%%%%%%%%%%%%%%%%%%%%%%%%%%%%%%%%%%%%%%%%%%%%%%%%%%%%%%%%%%%%%%%%%%%%%%%%%%%%%%%%%%%%%%%%%%%%%%%%%%%%%%%%%%%%%%%%%%%%%%%%%%%%%%%%%%%%%%%%%%%%%%%%%%%%%%%%%%%%%%%%%%%%%%%%%%%%%%%%%%%%%%%%%%%%%%%%%%%%%%%%%%%%%%%%%%%%%%%%%%%%%%%%%%%%%%%%%%%%%%%%%%%%%%%%%%%%%%%%%%%%%%%%%%%%%%%%%%%%%%%%%%%%%%%%%%%%%%%%%%%%%%%%%%%%%%%%%%%%%%%%%%%%%%%%%%%%%%%%%%%%%%%%%%%%%%%%%%%%%%%%%%%%%%%%%%%%%%%%%%%%%%%%%%%%%%%%%%%%%%%%%%%%%%%%%%%%%%%%%%%%%%%%%%%%%%%%%%%%%%%%%%%%%%%%%%%%%%%%%%%%%%%%%%%%%%%%%%%%%%%%%%%%%%%%%%%%%%%%%%%%%%%%%%%%%%%%%%%%%%%%%%%%%%%%%%%%%%%%%%%%%%%%%%%%%%%%%%%%%%%%%%%%%%%%%%%%%%%%%%%%%%%%%%%%%%%%

\section{Asymptotic Minimax Optimality, Privacy, and Computational Efficiency}
\label{sec:main:efficiency}

In this section, we address the computational complexity of our proposed empirical upper-bounds in Theorems \ref{thm:nonRobustEmpiricalMean}, \ref{thm:mainfDivThm}, \ref{thm:mainRiskDistThm} and \ref{thm:wass_main} both the server and client sides. We also discuss their privacy w.r.t. local user data. Finally, we prove their asymptotic minimax optimality.

\begin{remark}[{\bf {Server-side Computational Complexity}}]
\label{remark:wass:quasi-convexity}
The server-side optimization problems in Theorems \ref{thm:mainfDivThm} and \ref{thm:mainRiskDistThm} are convex and efficiently (in polynomial-time w.r.t. $K$) solvable. Each client is queried only once for their {\it {non-robust}} (ordinary) loss value ($\rho_k=0$). Robust local losses are not required for $f$-divergence shifts, since the reweighting attack can be carried out entirely at the server-side.
On the other hand, the bound in Theorem \ref{thm:wass_main}, i.e., $\widehat{U}^*\left(\varepsilon\right)$, is the solution to a \emph{quasi-convex} program, where a standard bisection method (e.g., Algorithm \ref{alg:bisection:forWassThm} in Appendix \ref{sec:app:auxProofs}), can approximate $\widehat{U}^*\left(\varepsilon\right)$ within an arbitrary error margin $\Delta > 0$, in polynomial time w.r.t. $K$ and $\frac{1}{\Delta}$. The total query budget per client in this case is also polynomial with respect to both $K$ and $\log\frac{1}{\Delta}$.
\end{remark}

\begin{remark}[{\bf {Client-side Computational Complexity}}]
\label{remark:wass:client-side-compcomp}
For the case of $f$-divergence shifts in Theorems \ref{thm:mainfDivThm} and \ref{thm:mainRiskDistThm}, clients only return the ordinary loss values which can be computed in time $\mathcal{O}\left(n_k\right)$ at client $k$. In Theorem \ref{thm:wass_main}, assume the transportation cost $c$ is convex with respect to its second argument\footnote{This assumption is generally not restrictive, as any norm exhibits this property} and is differentiable. Then, the client-side optimization problem to determine $\widehat{\mathsf{QV}}_k(h,\rho)$ in \eqref{eq:wassMainThm:UhatDef}, for any given $h$ and $\rho$, is convex \cite{sinha2018certifying}. A standard stochastic gradient descent algorithm can approximate $\widehat{\mathsf{QV}}_k(h,\rho)$ within an arbitrary error margin $\Delta > 0$, with polynomial time complexity relative to $\Delta^{-1}$.
\end{remark}
Proofs of Remarks \ref{remark:wass:quasi-convexity} and \ref{remark:wass:client-side-compcomp}, as well as the bisection method in Algorithm \ref{alg:bisection:forWassThm} can be found in Appendix \ref{sec:app:auxProofs}.

\begin{remark}[{\bf {Privacy}}]
\label{remark:wass:privacy}
All empirical bounds in this work only use the local ordinary or adversarial loss values, without direct access to local private samples or model gradients. In particular, the empirical value $\widehat{U}^*(\varepsilon)$ in \eqref{eq:wassMainThm:UhatDef} relies solely on the Wasserstein adversarial loss of clients, i.e., $\widehat{\mathsf{QV}}_k(h,\rho)$ for polynomially many values of $\rho$. To the best of our knowledge, there are no well-known privacy attacks capable of effectively recovering private data from this procedure. It should be noted that providing Differential Privacy (DP) certificates goes beyond the scope of our work.
\end{remark}
Notably, working on novel attacks or providing information-theoretic analysis for the recover-ability of private data from multiple local adversarial loss values can be a proper future research direction. Finally, the following remark (proved in Appendix \ref{sec:app:auxProofs}) provides tightness guarantees for our bounds.

\begin{remark}[{\bf {Asymptotic Minimax Optimality}}]
\label{remark:amo:div1}
All empirical upper-bounds in Theorems \ref{thm:nonRobustEmpiricalMean}, \ref{thm:mainfDivThm}, \ref{thm:mainRiskDistThm} and \ref{thm:wass_main} are asymptotically minimax optimal, which means they can be achieved by a worst-case shift when $K$ and all $n_k/\log K$ grow toward infinity. This means our bounds are tight and cannot be improved, at least when network and local dataset sizes become sufficiently large.
\end{remark}

%%%%%%%%%%%%%%%%%%%%%%%%%%%%%%%%%%%%%%%%%%%%%%%%%%%%%%%%%%%%%%%%%%%%%%%%%%%%%%%%%%%%%%%%%%%%%%%%%%%%%%%%%%%%%%%%%%%%%%%%%%%%%%%%%%%%%%%%%%%%%%%%%%%%%%%%%%%%%%%%%%%%%%%%%%%%%%%%%%%%%%%%%%%%%%%%%%%%%%%%%%%%%%%%%%%%%%%%%%%%%%%%%%%%%%%%%%%%%%%%%%%%%%%%%%%%%%%%%%%%%%%%%%%%%%%%%%%%%%%%%%%%%%%%%%%%%%%%%%%%%%%%%%%%%%%%%%%%%%%%%%%%%%%%%%%%%%%%%%%%%%%%%%%%%%%%%%%%%%%%%%%%%%%%%%%%%%%%%%%%%%%%%%%%%%%%%%%%%%%%%%%%%%%%%%%%%%%%%%%%%%%%%%%%%%%%%%%%%%%%%%%%%%%%%%%%%%%%%%%%%%%%%%%%%%%%%%%%%%%%%%%%%%%%%%%%%%%%%%%%%%%%%%%%%%%%%%%%%%%%%%%%%%%%%%%%%%%%%%%%%%%%%%%%%%%%%%%%%%%%%%%%%%%%%%%%%%%%%%%%%%%%%%%%%%%%%%%%%%%%%%%%%%%%%%%%%%%%%%%%%%%%%%%%%%%%%%%%%%%%%%%%%%%%%%%%%%%%%%%%%%%%%%%%%%%%%%%%%%%%%%%%%%%%%%%%%%%%%%%%%%%%%%%%%%%%%%%%%%%%%%%%%%%%%%%%%%%%%%%%%%%%%%%%%%%%%%%%%%%%%%%%%%%%%%%%%%%%%%%%%%%%%%%%%%%%%%%%%%%%%%%%%%%%%%%%%%%%%%%%%%%%%%%%%%%%%%%%%%%%%%%%%%%%%%%%%%%%%%%%%%%%

\section{Experimental Results}
\label{sec:experiments}

\begin{figure}[t]
\centering
\includegraphics[width=\linewidth]{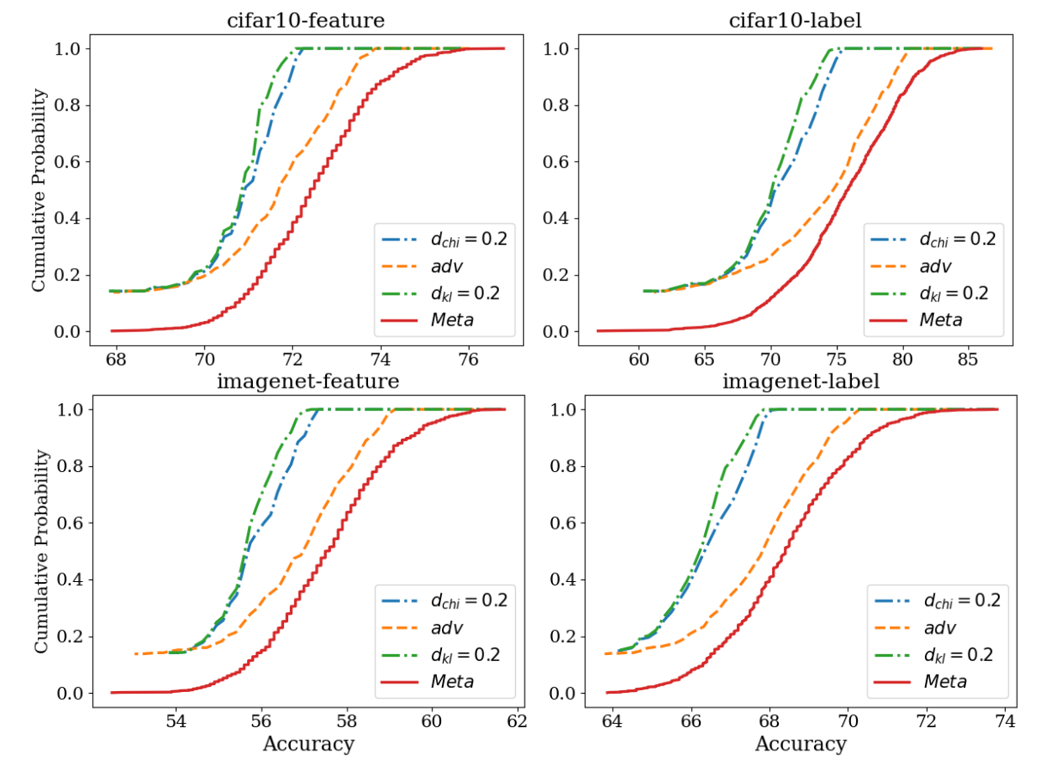}
\vspace*{-3mm}
\caption{\label{fig:DKW}Non-robust Risk CDF bounds for unseen clients. Here, ``\textit{Meta}" refers to the main population with $1000$ nodes. DKW-robust bounds are depicted only for tightness comparison.}
\vspace*{-4mm}
\end{figure}

\begin{figure}%[!t]
\centering
\includegraphics[width=0.95\linewidth]{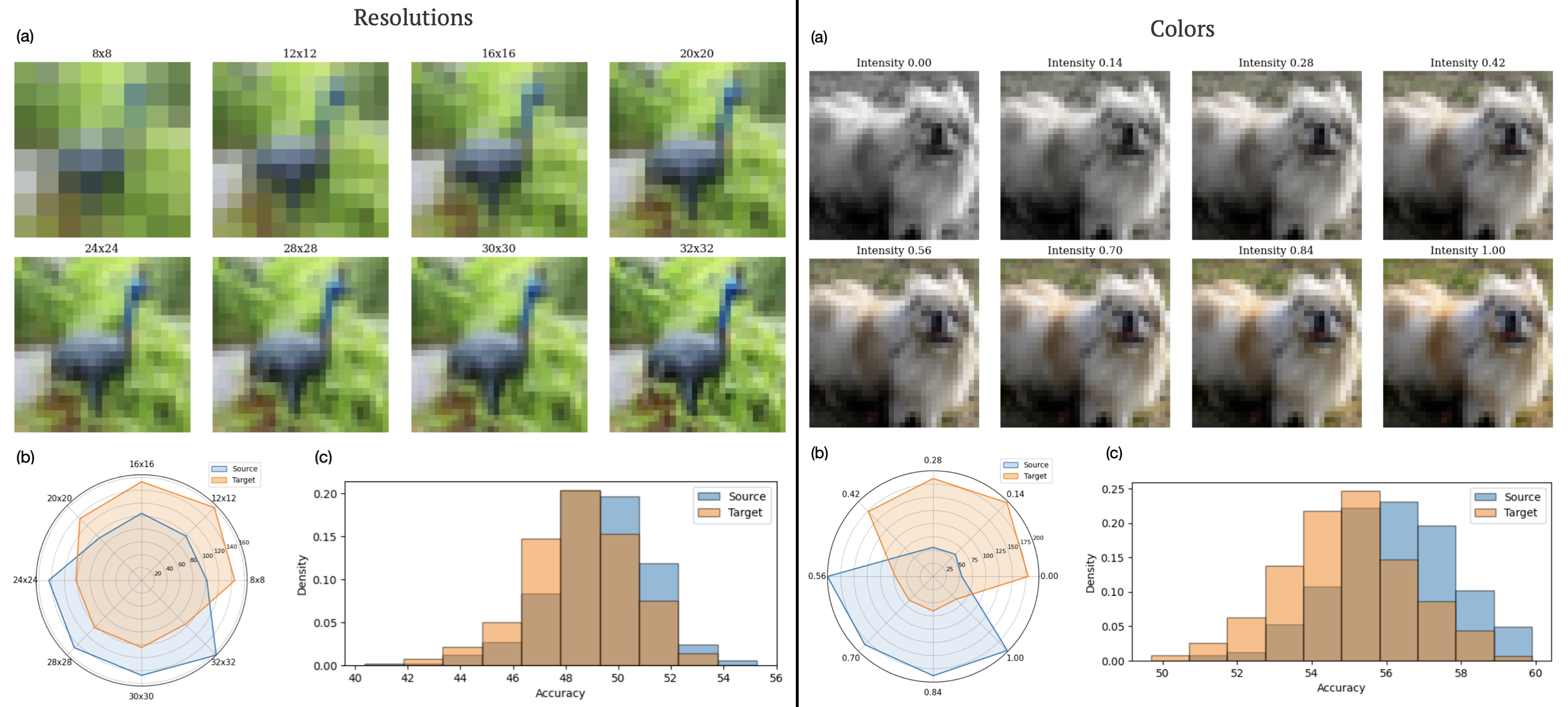}
\caption{\label{fig:Meta} Meta-distributional shifts based on resolutions (left) and colors (right). (a) Sample images from each resolution/color. (b) Average number of samples per resolution/color within each network selected from the meta distributions. (c) Histogram of model accuracy densities for the two meta distributions.}
% \vspace*{-3mm}
\end{figure}

Our work is mainly theoretical; In any case, we present a series of experiments on real-world datasets to show tightness and computability of our bounds in practice. First, we outline our client generation model and present a number of non-robust risk CDF guarantees. A more complete set of experiments with complementary explanations can be found in Appendix \ref{sec:app:auxImage}. We simulated a federated learning scenario with $n = 1000$ nodes, where each node contains $1000$ local samples. The experiments were conducted using four different datasets: CIFAR-10  \citep{krizhevsky2009learning}, SVHN \citep{netzer2011reading}, EMNIST \citep{cohen2017emnist}, and ImageNet \citep{russakovsky2015imagenet}. To create each user's data within the network, we applied three types of affine distribution shifts across users:

\noindent
% \begin{itemize}
% \item 
\textbf{Feature Distribution Shift:} Each sample ${\boldX}_i^{(k)}$ in the dataset is manipulated via a transformation chosen randomly for each node. Specifically, each user is assigned a unique matrix ${\Lambda}^{(k)}$ and shift vector $\boldsymbol{\delta}^{(k)}$, and the data is modified as
%follows:
% \begin{equation}
$
\tilde{\boldX}_{i}^{(k)} = (I + \Lambda^{(k)}) {\boldX}_i^{(k)} + \boldsymbol{\delta}^{(k)},
$
% \end{equation}
where $\Lambda^{(k)}$ and $\boldsymbol{\delta}^{(k)}$ are respectively random matrices and vectors with i.i.d. zero-mean Gaussian entries. The standard deviation varies based on the dataset: $0.05$ for CIFAR-10 and SVHN, $0.1$ for EMNIST, and $0.01$ for ImageNet.
    
\noindent
% \item 
\textbf{Label Distribution Shift:} Here, we assume that each meta-distribution is characterized by a specific $\alpha$ coefficient. To generate each user's data under this shift, the number of samples per class is determined by a Dirichlet distribution with parameter $\alpha$. In our experiments, we use $\alpha = 0.4$.

\noindent
% \item 
\textbf{Feature \& Label Distribution Shift:} This shift combines both the feature and label distribution shifts described above to create a new distribution for each user.
% \end{itemize}
% \\\\\\\\

Figure~\ref{fig:DKW} illustrates our bounds on the risk CDF of unseen clients with no shifts. We selected $500$ nodes from the population and considered $500$ other nodes as unseen clients. We then plotted the CDFs based on $500$ samples and confirmed that our bounds hold for the real population as well. Due to the standard DKW inequality, the empirical CDF is a good estimate for the test-time non-robust risk CDF.
%Please refer to the Appendix~\ref{sec:app:DKW-cert} for the full set of experiments on our datasets.

\begin{figure}[!t]
\centering
\includegraphics[width=\linewidth]{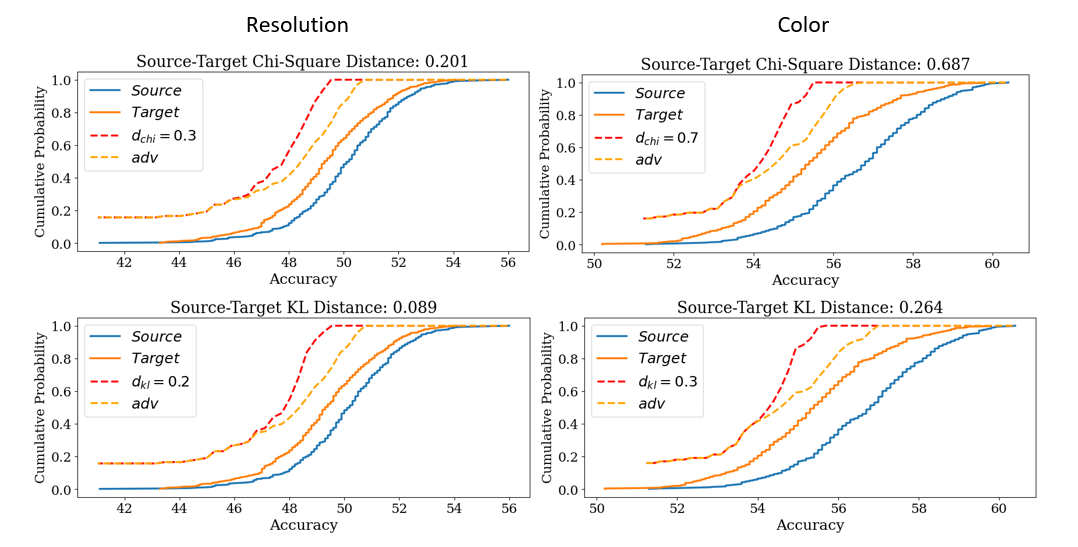}
% \vspace*{-6mm}
\caption{\label{fig:f-divergence} f-divergence certificates for two meta-distributions based on resolutions (Left) and colors (Right).}
\end{figure}
%\subsection{Wasserstein-based Meta-Distributional Shifts}

\subsection{$f$-Divergence Meta-Distributional Shifts}

We assumed users belong to two distinct meta-distributions: the source and the target. A CNN-based model is initially trained on a network of clients sampled from the source. The resulting risk values are then fed into the optimization problems in Section \ref{sec:main:fDiv} to obtain robust CDF bounds, considering both the Chi-Square and KL divergence as potential choices for $f$. Finally, we empirically estimate the risk CDF for users from the target meta-distribution and validate our bounds. Specifically, we tested our certificates in two distinct settings using the CIFAR-10 dataset (see Figure~\ref{fig:Meta}). We generated various image categories with differing resolutions or color schemes, and then sampled from these categories to create different distributions. More details of this procedure can be found in Section \ref{sec:app:auxImage}.

% % \begin{itemize}
%     % \item 
%     \noindent
%     \textbf{Resolutions}: Images were cropped and resized to create eight different resolutions. The Dirichlet $\alpha$ coefficients for the first (source) meta-distribution range from $0.4$ to $0.7$ for the four lower resolutions and from $0.7$ to $1$ for the four higher resolutions. For the second (target) meta-distribution, the ranges are reversed: $0.7$ to $1$ for the lower resolutions and $0.4$ to $0.7$ for the higher resolutions. The number of samples per resolution for each user is determined using a Dirichlet distribution, with $\alpha$ coefficients randomly selected from the specified range for the meta-distribution. As a result, users sampled from source will have more high-resolution images, while users from the target will have more low-resolution samples.

%     \noindent
%     % \item 
%     \textbf{Colors}: The color intensity of the images varies from $0.00$ (gray-scale) to $1.00$ (fully colored). For the source meta-distribution, the $\alpha$ coefficients range from $0$ to $0.5$ for images with color intensity below $0.5$, and from $0.5$ to $1$ for images above $0.5$. As with the resolution setting, the ranges are reversed for the target, and the number of samples per color intensity for each user is calculated similarly. Therefore, users sampled from source will have more colorful images, while those from the target will have more gray-scale images.
% % \end{itemize}

Figure~\ref{fig:f-divergence} verifies our CDF certificates based on both chi-square and KL-divergence (dotted curves) for the target meta distribution (orange curve). As can be seen, bounds have tightly captured the behavior of risk CDF in the target network. More detailed experiments are shown in Figure \ref{fig:DKW-all} in Appendix \ref{sec:app:auxImage}.

\begin{figure}[!t]
\centering
\includegraphics[width=0.95\linewidth]{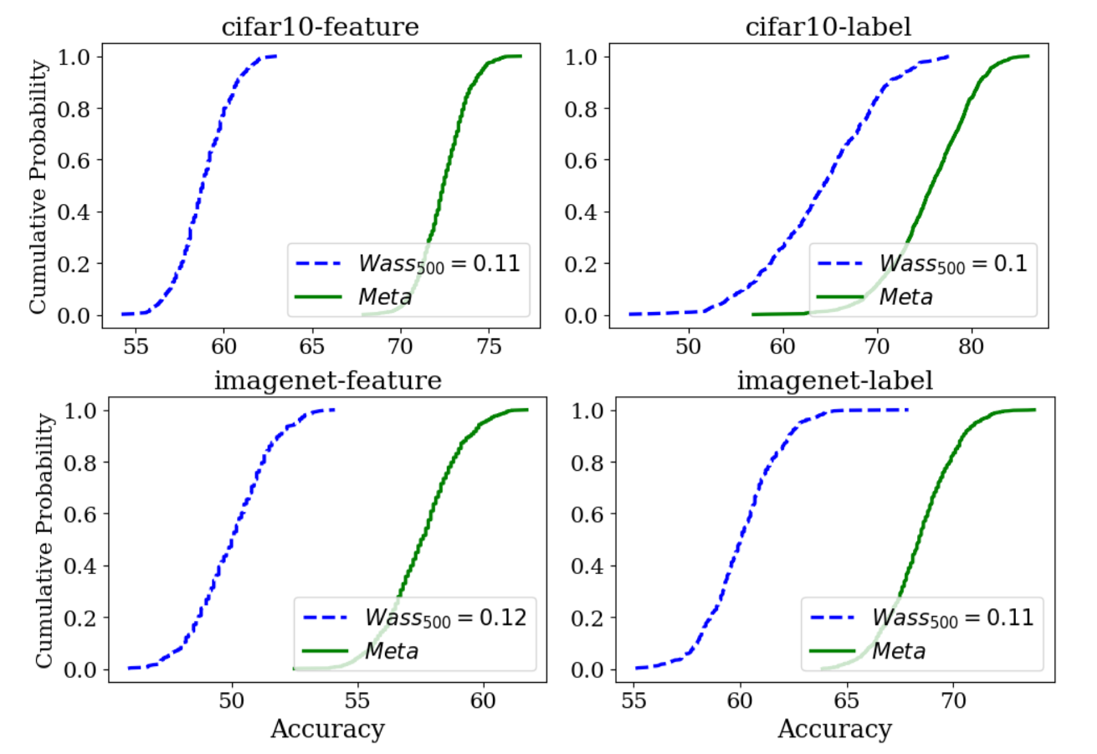}
% \vspace*{-1mm}
\caption{\label{fig:WASS}
Wasserstein-based certificates for unseen clients. ``\textit{Meta}" refers to the main population with $1000$ nodes. Dotted curves are based on $500$ networks within the population.}
\end{figure}

We then used the above-mentioned affine distribution shifts to create new domains according to a Wasserstein metric. Figure \ref{fig:WASS} summarizes our numerical results in this scenario. 
To generate different networks within the meta-distribution, we applied the affine distribution shifts described in Section \ref{sec:DKW-cert}.
Once again, the results validate our certificates, this time for Wasserstein-type shifts. 
%The blue curve, representing the real population, consistently falls within or beneath the blue shaded area, which validate our certificates. Regarding tightness, 
It is important to note that the bounds presented here remain tight, particularly under adversarial attacks as defined by a distributional adversary in \citep{sinha2018certifying}.  More detailed experiments with various levels of tightness are shown in Figure \ref{fig:WASS-all} in Appendix \ref{sec:app:auxImage}. Although our theoretical findings in Section \ref{sec:main:wass} focus on the average risk and not the risk CDF, we extended the same framework to the CDF in this experiment to explore whether the theory might also apply. The results were positive, suggesting potential for extending our theoretical findings in this area.

%%%%%%%%%%%%%%%%%%%%%%%%%%%%%%%%%%%%%%%%%%%%%%%%%%%%%%%%%%%%%%%%%%%%%%%%%%%%%%%%%%%%%%%%%%%%%%%%%%%%%%%%%%%%%%%%%%%%%%%%%%%%%%%%%%%%%%%%%%%%%%%%%%%%%%%%%%%%%%%%%%%%%%%%%%%%%%%%%%%%%%%%%%%%%%%%%%%%%%%%%%%%%%%%%%%%%%%%%%%%%%%%%%%%%%%%%%%%%%%%%%%%%%%%%%%%%%%%%%%%%%%%%%%%%%%%%%%%%%%%%%%%%%%%%%%%%%%%%%%%%%%%%%%%%%%%%%%%%%%%%%%%%%%%%%%%%%%%%%%%%%%%%%%%%%%%%%%%%%%%%%%%%%%%%%%%%%%%%%%%%%%%%%%%%%%%%%%%%%%%%%%%%%%%%%%%%%%%%%%%%%%%%%%%%%%%%%%%%%%%%%%%%%%%%%%%%%%%%%%%%%%%%%%%%%%%%%%%%%%%%%%%%%%%%%%%%%%%%%%%%%%%%%%%%%%%%%%%%%%%%%%%%%%%%%%%%%%%%%%%%%%%%%%%%%%%%%%%%%%%%%%%%%%%%%%%%%%%%%%%%%%%%%%%%%%%%%%%%%%%%%%%%%%%%%%%%%%%%%%%%%%%%%%%%%%%%%%%%%%%%%%%%%%%%%%%%%%%%%%%%%%%%%%%%%%%%%%%%%%%%%%%%%%%%%%%%%%%%%%%%%%%%%%%%%%%%%%%%%%%%%%%%%%%%%%%%%%%%%%%%%%%%%%%%%%%%%%%%%%%%%%%%%%%%%%%%%%%%%%%%%%%%%%%%%%%%%%%%%%%%%%%%%%%%%%%%%%%%%%%%%%%%%%%%%%%%%%%%%%%%%%%%%%%%%%%%%%%%%%%%%%%

\section{Conclusion}
\label{sec:conclusion}

This work introduces new polynomial-time computable performance bounds to guarantee the performance of a model $ h $ on an unseen network B, using only data and queries from network A. The key assumption is that the meta-distributions behind user data in the two networks are $\varepsilon$-close, measured by $ f $-divergence or Wasserstein shifts, which address diverse practical scenarios. The bounds, backed by rigorous proofs, achieve vanishing generalization gaps as network size $ K $ and normalized samples grow. Novel contributions include a robust GC theorem and DKW bound for $ f $-divergence shifts. Experiments confirm the bounds' tightness and efficiency.

\section*{Impact Statement}
This paper proposes a certifiably robust evaluation framework for federated learning under meta-distributional shifts, providing theoretical guarantees on performance under deployment shifts. The proposed approach may benefit real-world federated systems by improving reliability and trust in evaluation across diverse clients. However, if misapplied without properly validating the assumptions on shift structure, the certificates may give a false sense of robustness. Care should be taken to ensure the theoretical assumptions hold approximately in practice, and to transparently communicate the limitations of the guarantees.

%%%%%%%%%%%%%%%%%%%%%%%%%%%%%%%%%%%%%%%%%%%%%%%%%%%%%%%%%%%%%%%%%%%%%%%%%%%%%%%%%%%%%%%%%%%%%%%%%%%%%%%%%%%%%%%%%%%%%%%%%%%%%%%%%%%%%%%%%%%%%%%%%%%%%%%%%%%%%%%%%%%%%%%%%%%%%%%%%%%%%%%%%%%%%%%%%%%%%%%%%%%%%%%%%%%%%%%%%%%%%%%%%%%%%%%%%%%%%%%%%%%%%%%%%%%%%%%%%%%%%%%%%%%%%%%%%%%%%%%%%%%%%%%%%%%%%%%%%%%%%%%%%%%%%%%%%%%%%%%%%%%%%%%%%%%%%%%%%%%%%%%%%%%%%%%%%%%%%%%%%%%%%%%%%%%%%%%%%%%%%%%%%%%%%%%%%%%%%%%%%%%%%%%%%%%%%%%%%%%%%%%%%%%%%%%%%%%%%%%%%%%%%%%%%%%%%%%%%%%%%%%%%%%%%%%%%%%%%%%%%%%%%%%%%%%%%%%%%%%%%%%%%%%%%%%%%%%%%%%%%%%%%%%%%%%%%%%%%%%%%%%%%%%%%%%%%%%%%%%%%%%%%%%%%%%%%%%%%%%%%%%%%%%%%%%%%%%%%%%%%%%%%%%%%%%%%%%%%%%%%%%%%%%%%%%%%%%%%%%%%%%%%%%%%%%%%%%%%%%%%%%%%%%%%%%%%%%%%%%%%%%%%%%%%%%%%%%%%%%%%%%%%%%%%%%%%%%%%%%%%%%%%%%%%%%%%%%%%%%%%%%%%%%%%%%%%%%%%%%%%%%%%%%%%%%%%%%%%%%%%%%%%%%%%%%%%%%%%%%%%%%%%%%%%%%%%%%%%%%%%%%%%%%%%%%%%%%%%%%%%%%%%%%%%%%%%%%%%%%%%%%%

%\section{Acknowledgments}

 \clearpage
 \clearpage
\section*{Acknowledgments} 
The work of Farzan Farnia is partially supported by a grant from the Research Grants Council of the Hong Kong Special Administrative Region, China, Project 14209920, and is partially supported by CUHK Direct Research Grants with CUHK Project No. 4055164 and 4937054.
 
\bibliographystyle{plainnat}
\bibliography{ref}

%%%%%%%%%%%%%%%%%%%%%%%%%%%%%%%%%%%%%%%%%%%%%%%%%%%%%%%%%%%%%%%%%%%%%%%%%%%%%%%%%%%%%%%%%%%%%%%%%%%%%%%%%%%%%%%%%%%%%%%%%%%%%%%%%%%%%%%%%%%%%%%%%%%%%%%%%%%%%%%%%%%%%%%%%%%%%%%%%%%%%%%%%%%%%%%%%%%%%%%%%%%%%%%%%%%%%%%%%%%%%%%%%%%%%%%%%%%%%%%%%%%%%%%%%%%%%%%%%%%%%%%%%%%%%%%%%%%%%%%%%%%%%%%%%%%%%%%%%%%%%%%%%%%%%%%%%%%%%%%%%%%%%%%%%%%%%%%%%%%%%%%%%%%%%%%%%%%%%%%%%%%%%%%%%%%%%%%%%%%%%%%%%%%%%%%%%%%%%%%%%%%%%%%%%%%%%%%%%%%%%%%%%%%%%%%%%%%%%%%%%%%%%%%%%%%%%%%%%%%%%%%%%%%%%%%%%%%%%%%%%%%%%%%%%%%%%%%%%%%%%%%%%%%%%%%%%%%%%%%%%%%%%%%%%%%%%%%%%%%%%%%%%%%%%%%%%%%%%%%%%%%%%%%%%%%%%%%%%%%%%%%%%%%%%%%%%%%%%%%%%%%%%%%%%%%%%%%%%%%%%%%%%%%%%%%%%%%%%%%%%%%%%%%%%%%%%%%%%%%%%%%%%%%%%%%%%%%%%%%%%%%%%%%%%%%%%%%%%%%%%%%%%%%%%%%%%%%%%%%%%%%%%%%%%%%%%%%%%%%%%%%%%%%%%%%%%%%%%%%%%%%%%%%%%%%%%%%%%%%%%%%%%%%%%%%%%%%%%%%%%%%%%%%%%%%%%%%%%%%
\onecolumn
\appendix

% \section{Auxiliary Definitions}

\section{Complementary Review of Previous works}
\label{sec:literature}

In this section, we review several related works in federated training and evaluation of models with non-IID clients. We then review a series of known (mostly theoretical) results regrading generalization gaps and shortcomings of current methodologies related to our problem set in this paper, i.e., beta testing in FL scenarios.

\subsection{Generalization in FL}
\label{sec:lit:genInFL}

The challenge of generalizing FL models to unseen clients and distributions has been studied in several related works. \citet{li2020federated} introduce a method to address the heterogeneity of client data, focusing on improving the generalization of FL models. Similarly, \citet{ma2024beyond} propose a topology-aware federated learning approach that leverages client relationships to enhance model robustness against out-of-federation data. Also, \citet{zeng2023adaptive} explore adaptive federated learning techniques to dynamically adjust model parameters based on client data distributions. The robustness of FL models against distribution shifts and adversarial attacks has been the focus of several related references. 
\citet{reisizadeh2020robust} propose a robust federated learning framework  to handle affine distribution shifts across clients' data. Their proposed framework  incorporates a Wasserstein-distance-based distribution shift model to account for device-dependent data perturbations. Also, \citet{zhang2023delving} conduct comprehensive evaluations on the adversarial robustness of FL models, proposing the decision boundary-based FL Training algorithm to enhance the the trained model's robustness. \citet{zhou2022adversarial} gain insight from the bias-variance decomposition to improve adversarial robustness in FL. Also, \citet{mansour2022federated} propose a robust aggregation method to reduce the effect of adversarial clients.

\begin{figure}[t]
\label{fig:old}
\centering
\includegraphics[width=\linewidth]{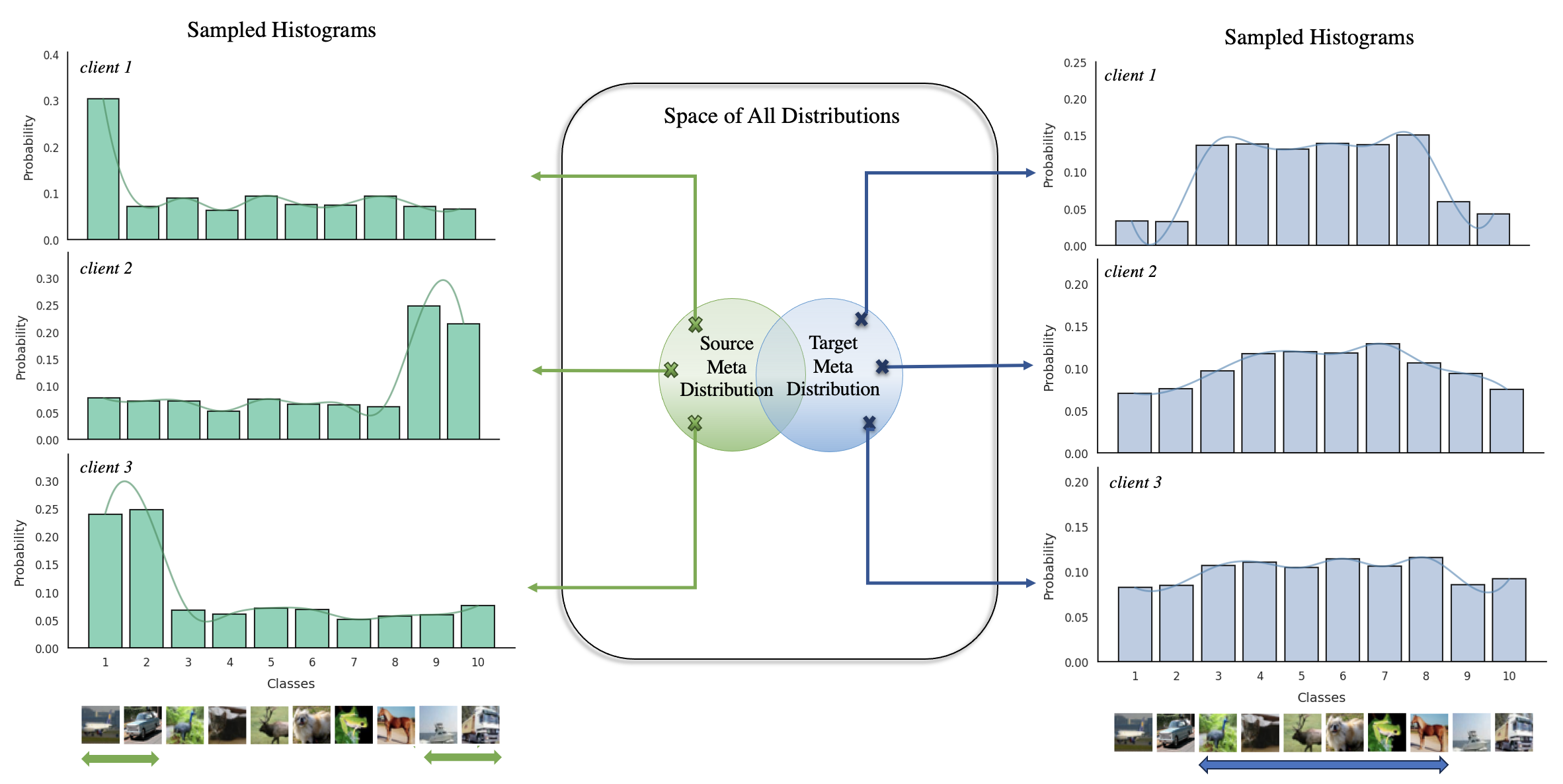}
\caption{
A graphical illustration of meta-distributional shifts between two different networks (or societies) of clients/users. Heterogeneous user data distributions can be considered as i.i.d. samples from a meta-distribution. In this example, clients from one meta-distribution primarily have vehicle images, while those from another meta-distribution mostly have animal images.}
% \vspace*{-6mm}
\end{figure}

\subsection{Non-IID Federated Learning}
\label{sec:lit:nonIIDFL}

The heterogeneous nature of FL, where clients have different data distributions has been a topic of great interest in the literature. In particular, modeling the heterogeneity of local data distributions/datasets via assuming a higher-level meta-distribution has been the center of several researches. 

\onecolumn

See, for example, \citet{wu2024distributional,chen2023metafed,patel2022towards,matta2024domain}. Please refer to Figures 1, 2 and 8 for a more illustrative definition of meta-distribution over local statistical or empirical distributions in a heterogeneous and non-IID network. In this regard, \citet{fallah2020personalized} introduce a personalized federated learning framework based on model-agnostic meta-learning, which provides performance guarantees by optimizing for data distribution heterogeneity. \citet{farnia2022optimal} propose an optimal transport approach to personalized federated learning by learning and inverting the optimal transport map between the distribution of clients, which has been further extended to fairness-aware personalized federated learning in \citep{lei2025pfedfair}. \citet{diamandis2021wasserstein} propose a Wasserstein-based framework to federated learning under the non-IID scenario by training a mixed linear regression model capturing the non-IID-ness of the clients' data. 
\citet{luo2021nofear} propose a classifier calibration method that adjusts for bias in heterogeneous data, offering improved performance guarantees in non-IID settings. \citet{tan2022fedproto} develop FedProto, a framework that leverages prototype learning to improve convergence and robustness under non-convex objectives. Moreover, \citet{wu2023fedlora} propose FedLoRA, which adapts low-rank parameter sharing techniques to mitigate the effects of heterogeneity in personalized federated learning.  Also,  \citet{cheng2024fedgcr} and \citet{jia2024fedlps} introduce group-based customization and local parameter sharing strategies, respectively, to provide fairness and efficiency guarantees for heterogeneous client types and multiple tasks in FL. A recent work on using super-quantiles in FL with heterogeneous clients is \citep{pillutla2024federated}, which focuses on optimization and convergence aspects of the this problem.

%%%%%%%%%%%%%%%%%%%%%%%%%%%%%%%%%%%%%%%%%%%%%%%%%%%%%%%%%%%%%%%%%%%%%%%%%%%%%%%%%%%%%%%%%%%%%%%%%%%%%%%%%%%%%%%%%%%%%%%%%%%%%%%%%%%%%%%%%%%%%%%%%%%%%%%%%%%%%%%%%%%%%%%%%%%%%%%%%%%%%%%%%%%%%%%%%%%%%%%%%%%%%%%%%%%%%%%%%%%%%%%%%%%%%%%%%%%%%%%%%%%%%%%%%%%%%%%%%%%%%%%%%%%%%%%%%%%%%%%%%%%%%%%%%%%%%%%%%%%%%%%%%%%%%%%%%%%%%%%%%%%%%%%%%%%%%%%%%%%%%%%%%%%%%%%%%%%%%%%%%%%%%%%%%%%%%%%%%%%%%%%%%%%%%%%%%%%%%%%%%%%%%%%%%%%%%%%%%%%%%%%%%%%%%%%%%%%%%%%%%%%%%%%%%%%%%%%%%%%%%%%%%%%%%%%%%%%%%%%%%%%%%%%%%%%%%%%%%%%%%%%%%%%%%%%%%%%%%%%%%%%%%%%%%%%%%%%%%%%%%%%%%%%%%%%%%%%%%%%%%%%%%%%%%%%%%%%%%%%%%%%%%%%%%%%%%%%%%%%%%%

\subsection{Theory of Federated Model Evaluation}

Evaluating models over networks of clients, also known as
beta testing, or alpha/beta testing, or A/B testing, involves evaluating the performance of a model or application on a small, randomly selected sample of potential users before wider deployment \cite{soltani2023survey}. Specifically, this process assumes that a central server has access to a limited subset of $ K $ clients from a large network and can request these clients to evaluate a given machine learning model $ h $ on their local data.  The clients then report the results to the server. The objective is to estimate the \emph{average} performance of the model across the entire network based on this limited sample \cite{liu2024recent}.

Several critical challenges emerge in beta testing scenarios:  
i) \textbf{Data~Privacy}: Can we reliably assess the average performance without direct access to clients' local data?  ii) \textbf{Sample~Efficiency}: How many randomly selected clients are necessary to achieve a reliable performance assessment?  iii) \textbf{Meta-Distributional~Robustness}: How can we guarantee a minimum level of performance for the model if the underlying meta-distribution governing the clients/users changes slightly?  Other considerations include the computational efficiency of server-side and client-side algorithms required for such evaluations.   

Questions (i) and (ii) above are relatively well-addressed in the literature, either through federated evaluation frameworks (see Sections \ref{sec:lit:genInFL} and \ref{sec:lit:nonIIDFL}), or via conventional concentration inequalities (see, for example, our Theorem \ref{thm:nonRobustEmpiricalMean}).  However, the challenge of \emph{Meta-Distributionally~Robust~Assessment} (M-DRA) remains underexplored. This problem addresses the robustness of model evaluation when the meta-distribution governing client data ($ \mu $) shifts slightly to a new distribution ($ \mu' $).  A straightforward yet naive approach to tackle this problem is to use the concept of \textbf{Maximum~Mean~Discrepancy~(MMD)} (see \citet{hu2024fedmmd} or \citet{gao2021maximum}) in order to provide a bound on the performance of $ h $ on the target meta-distribution $ \mu' $, based on its performance on the source meta-distribution $ \mu $ \cite{sinha2018certifying}. Mathematically, for any meta-distribution $\mu'$ which is in an $\varepsilon$-vicinity of $\mu$, we have:
\begin{align}
\mathbb{E}_{P\sim\mu'}\left[\mathbb{E}_{P}\left(\loss\right)\right]
&=
\mathbb{E}_{P\sim\mu}\left[\mathbb{E}_{P}\left(\loss\right)\right]
+
\mathbb{E}_{\mu'-\mu}\left[\mathbb{E}_{P}\left(\loss\right)\right]
\nonumber\\
&\leq
\mathbb{E}_{P\sim\mu}\left[\mathbb{E}_{P}\left(\loss\right)\right]
+
\sup_{h\in\mathcal{H}}~
\mathbb{E}_{\mu'-\mu}\left[\mathbb{E}_{P}\left(\loss\right)\right]
\nonumber\\
&=
\mathbb{E}_{P\sim\mu}\left[\mathbb{E}_{P}\left(\loss\right)\right]
+
\mathrm{MMD}\left(\mu,\mu'\vert\mathcal{H}\right),
\end{align}
where $\mathcal{H}$ is a class of functions that $h$ belongs to, and $\mathbb{E}_{\mu'-\mu}\left(\cdot\right)$ is defined as $\mathbb{E}_{\mu'}\left(\cdot\right)-\mathbb{E}_{\mu}\left(\cdot\right)$. This way, closeness of $\mu'$ to $\mu$ ($\varepsilon$-vicinity) is reflected into the MMD value, i.e., MMD is proportional to $\varepsilon$. Also, note that the first term, i.e., $\mathbb{E}_{P\sim\mu}\left[\mathbb{E}_{P}\left(\loss\right)\right]$ can be estimated according to any unbiased averaging over the sample distributions $P_1,\ldots,P_K$ of $\mu$ (and not $\mu'$), or their empirical counterparts $\widehat{P}_{1:K}$.

While MMD offers a promising starting point, however, the gap term $\mathrm{MMD}\left(\mu,\mu'\vert\mathcal{H}\right)$ does not shrink as $K$ or either of $n_k$s increase asymptotically. It is an $\varepsilon$-dependent and non-vanishing error term that inflates the adversarial loss of $h$ regardless of any possible inherent robustness in $h$ around $\mu$. To the best of our knowledge, there are no prior works that have already addressed this issue in federated beta testing (or model evaluation in general).

%%%%%%%%%%%%%%%%%%%%%%%%%%%%%%%%%%%%%%%%%%%%%%%%%%%%%%%%%%%%%%%%%%%%%%%%%%%%%%%%%%%%%%%%%%%%%%%%%%%%%%%%%%%%%%%%%%%%%%%%%%%%%%%%%%%%%%%%%%%%%%%%%%%%%%%%%%%%%%%%%%%%%%%%%%%%%%%%%%%%%%%%%%%%%%%%%%%%%%%%%%%%%%%%%%%%%%%%%%%%%%%%%%%%%%%%%%%%%%%%%%%%%%%%%%%%

\section{Uniform Convergence of Empirical CDFs from Adversarial Samples: GC Theory Revisited}
\label{sec:app:GCandDKW}

Assume we are given $n$ i.i.d. samples of a distribution $P$, denoted by $X_1,\ldots,X_n\in\reals$ and we aim to estimate the true cumulative distribution function (CDF) of $X\sim P$ based on these empirical observations. The Glivenko-Cantelli theorem provides strong guarantees on the asymptotic behavior of the worst-case error when estimating a CDF from a finite number of i.i.d. samples \cite{talagrand1987glivenko}. The theorem states that the $\ell_{\infty}$-norm of the difference between the true CDF of $X \sim P$, denoted by $F$, and its empirical version based on $n$ i.i.d. samples, denoted by $\widehat{F}_n$, which can be formulated as
$$
\widehat{F}_n(x)\triangleq
\frac{1}{n}\sum_{i\in[n]}\mathbbm{1}\left(X_i\leq x\right),\quad
\forall x\in\reals,
$$
almost surely converges to zero as $n$ approaches infinity. Mathematically, this is expressed as:
\begin{align}
\lim_{n\rightarrow\infty}\left\Vert F - \widehat{F}_n \right\Vert_{\infty}
\stackrel{a.s.}{=} 0.
\end{align}
Furthermore, the well-known Dvoretzky-Keifer-Wolfowitz (DKW) theorem provides non-asymptotic bounds for this asymptotic behavior.
\begin{lemma}[DKW Inequality \cite{dvoretzky1956asymptotic}]
\label{lemma:DKWnonRobust}
Let $\mathcal{X}$ be a measurable subset of $\mathbb{R}$, and let $P$ be any probability measure supported on $\mathcal{X}$. Let $\boldX_1,\ldots,\boldX_n\in\mathbb{R}$ be $n$ i.i.d. samples from $P$. Then, the following bound holds for the $\ell_{\infty}$-norm of the difference between the empirical and true CDF of $P$:
\begin{align}
\mathbb{P}\left(
\sup_{\lambda\in\mathbb{R}}~
\left\vert
P\left(\boldX\ge\lambda\right)
-\frac{1}{n}\sum_{i\in[n]}
\mathbbm{1}\left(\boldX_i\ge\lambda\right)
\right\vert
\leq
\sqrt{
\frac{\log\frac{2}{\delta}}{2n}
}
\right)\ge 1-\delta,
\end{align}
for any $\delta > 0$.
\end{lemma}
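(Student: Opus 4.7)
The plan is to establish uniform concentration of $\widehat{F}_n(\lambda) \triangleq \frac{1}{n}\sum_{i} \mathbbm{1}(\boldX_i \geq \lambda)$ around its mean $F(\lambda) = P(\boldX \geq \lambda)$ by combining a pointwise exponential tail bound with a discretization argument that exploits monotonicity. First I would observe that for each fixed $\lambda \in \mathbb{R}$, the empirical tail $\widehat{F}_n(\lambda)$ is an average of $n$ i.i.d.\ Bernoulli$(F(\lambda))$ random variables, so Hoeffding's inequality yields $\mathbb{P}(|\widehat{F}_n(\lambda) - F(\lambda)| \geq t) \leq 2 e^{-2 n t^2}$ for every $t>0$. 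This already handles a single threshold, but the statement requires uniformity over all $\lambda\in\mathbb{R}$.

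To pass from pointwise to uniform, the key structural fact is that both $\widehat{F}_n$ and $F$ are monotone functions of $\lambda$. I would pick a deterministic grid $\lambda_0 < \lambda_1 < \cdots < \lambda_m$ chosen so that $F(\lambda_k) = k/m$ (or the closest quantile if $F$ has atoms). Then for any $\lambda \in [\lambda_k, \lambda_{k+1})$, monotonicity sandwiches both $F(\lambda)$ and $\widehat{F}_n(\lambda)$ between their values at the endpoints, yielding
\begin{equation*}
\sup_{\lambda \in \mathbb{R}} \bigl|\widehat{F}_n(\lambda) - F(\lambda)\bigr|
\;\leq\;
\max_{0 \leq k \leq m} \bigl|\widehat{F}_n(\lambda_k) - F(\lambda_k)\bigr| \;+\; \frac{1}{m}.
\end{equation*}
A union bound over the $m+1$ grid points using the pointwise Hoeffding tail, followed by optimization in $m$ (taking $m \asymp \sqrt{n/\log(1/\delta)}$), then produces a uniform bound of the correct rate $\sqrt{\log(1/\delta)/n}$, up to logarithmic factors and a sub-optimal leading constant.

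The main obstacle is matching the sharp constant $2$ in the denominator of the exponent (equivalently, the factor $\tfrac{1}{2n}$ inside the square root), since the crude union-bound-plus-discretization route loses a $\log n$ factor. To close this gap I would invoke Massart's refinement of the DKW argument: instead of discretizing arbitrarily, one couples $\widehat{F}_n$ to the uniform empirical process (via the probability integral transform, so WLOG $P$ is uniform on $[0,1]$), and then analyzes the supremum through a symmetrization plus exponential-martingale argument that exploits the negative correlations among the indicators $\mathbbm{1}(\boldX_i \geq \lambda_k)$ induced by the monotone structure. This yields the two-sided bound $\mathbb{P}(\sup_\lambda |\widehat{F}_n(\lambda) - F(\lambda)| \geq t) \leq 2 e^{-2 n t^2}$, and setting the right-hand side equal to $\delta$ and solving for $t$ recovers exactly the stated threshold $\sqrt{\log(2/\delta)/(2n)}$.

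Since the inequality is a classical and well-known result (Dvoretzky--Kiefer--Wolfowitz with Massart's tight constant), the statement here is used as a black box; I would simply cite the original references rather than reproduce Massart's technical argument in full. The contribution of later sections (e.g.\ \Cref{sec:app:GCandDKW}) is instead to extend this inequality to adversarial/reweighted settings, where the uniformity in $\lambda$ must coexist with a worst-case reweighting over meta-distributional balls.
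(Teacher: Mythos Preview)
Your proposal is correct and aligns with the paper's treatment: the paper does not prove this lemma at all but simply states ``The proof of this lemma can be found in the reference,'' treating the DKW inequality (with Massart's sharp constant) as a black-box citation. Your sketch of the discretization-plus-Hoeffding route and the acknowledgment that Massart's refinement is needed for the exact constant $2$ is more detail than the paper provides, but your ultimate conclusion---to cite the classical result rather than reproduce the technical argument---matches the paper exactly.
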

The proof of this lemma can be found in the reference. This result allows us to provide uniform convergence guarantees on the tail probability of the loss of $h$, i.e., $\mathbb{P}(\loss \geq \lambda)$ for any $\lambda \in \mathbb{R}$, based on $K$ i.i.d. observations of the loss across the network. By “uniform,” we mean that, with high probability over the sampling of the $K$ users, the worst deviation between the empirical and statistical tail probability is consistently bounded.

For the case of $f$-divergence robustness, we show that it is also possible to derive an asymptotically consistent and \emph{uniform} bound for the risk distribution (CDF) over the \emph{target} network,  which estimates
$$
\mu'\left(\mathbb{E}_{P}\left[\loss\right]\ge \lambda\right),
$$
for all $\lambda \in \mathbb{R}$, simultaneously. Here, $\mu'$ represents the unknown target meta-distribution, which is assumed to be within an $\varepsilon$-proximity of the source meta-distribution $\mu$. Note that $\mu$ is also unknown, and our access to it is through $K$ independent empirical realizations $\widehat{P}_1,\ldots,\widehat{P}_K$, where each $\widehat{P}_k$ is known only to client $k$ via a private sample set of size $n_k$. To achieve this, we first derive a robust version of the uniform convergence bound on empirical CDFs, i.e., the robust Dvoretzky-Kiefer-Wolfowitz (DKW) inequality:

\begin{lemma}[Robust Version of DKW Inequality]
\label{lemma:RobustDKWInq}
Let $\mu$ and $\mu'$ be two probability measures on $\mathbb{R}$ where $\mu'$ is absolutely continuous w.r.t. $\mu$. Assume $X_1, \dots, X_n$ for $n \in \mathbb{N}$ to be i.i.d. samples drawn from $\mu$. Suppose we have $\mathcal{D}_f\left(\mu' \Vert \mu\right) \leq \varepsilon$ for some $\varepsilon \geq 0$ and a proper convex function $f(\cdot)$. For $\delta > 0$, let us define the set $\mathcal{A}_{n}\left(\varepsilon,\delta\right)$ as
\begin{align}
\mathcal{A}_{n}\left(\varepsilon,\delta\right)
\triangleq \left\{
\boldsymbol{\alpha} \in \mathbb{R}_{+}^n \,\middle\vert\,
\left\vert \frac{1}{n} \sum_{i=1}^n \alpha_i - 1 \right\vert
\leq 
c_1 \sqrt{n^{-1}\log\left(\frac{1}{\delta}\right)}
~,~
\frac{1}{n}\sum_{i=1}^n f\left(\alpha_i\right) \leq \varepsilon +
c_2 \sqrt{n^{-1}\log\left(\frac{1}{\delta}\right)}
\right\},
\end{align}
where constants $c_1$ and $c_2$ depend only on $f(\cdot)$. Then, there exists $\boldsymbol{\alpha}\in\mathcal{A}_{n}\left(\varepsilon,\delta\right)$ such that the following uniform bound holds with probability at least $1-\delta$:
\begin{align}
\sup_{\lambda\in\mathbb{R}}~
\left\vert
\mu'\left(X \leq \lambda\right)-
\frac{1}{n} \sum_{i=1}^n
\alpha_i \mathbbm{1}\left(X_i \leq \lambda\right)
\right\vert
\leq 
\mathcal{O}\left(
% \Lambda
\sqrt{n^{-1}\log\left(\frac{n}{\delta}\right)}
\right).
\end{align}
\end{lemma}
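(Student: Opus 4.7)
The plan is to pass through the Radon--Nikodym derivative $w \triangleq \mathrm{d}\mu'/\mathrm{d}\mu$, which exists by the absolute continuity assumption, and to take the oracle weights $\alpha_i^{\star} \triangleq w(X_i)$. Under this choice the basic identity $\mu'\left(X \leq \lambda\right) = \mathbb{E}_\mu\left[w(X)\,\mathbbm{1}(X \leq \lambda)\right]$ turns the target quantity into the expectation of a \emph{weighted} indicator under $\mu$, so the lemma reduces to two largely independent subproblems: showing that $(w(X_1), \ldots, w(X_n)) \in \mathcal{A}_n(\varepsilon, \delta)$ with high probability, and establishing uniform convergence of the weighted empirical CDF on the class of half-lines.

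For the first subproblem, the two constraints defining $\mathcal{A}_n$ are the empirical analogs of the population identities $\mathbb{E}_\mu[w(X)] = 1$ and $\mathbb{E}_\mu[f(w(X))] = \mathcal{D}_f(\mu' \Vert \mu) \leq \varepsilon$. The plan is to apply Bernstein-type inequalities to the scalar averages $n^{-1}\sum_i w(X_i)$ and $n^{-1}\sum_i f(w(X_i))$, yielding deviations of order $\sqrt{\log(1/\delta)/n}$. The required variance proxies are controlled through the $f$-divergence itself: for $f(t)=t\log t$ a sub-exponential bound follows from the Donsker--Varadhan representation, while for $f(t)=(t-1)^2$ one has the explicit second-moment estimate $\mathbb{E}_\mu[w^2] \leq 1 + \varepsilon$. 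The constants $c_1$ and $c_2$ are precisely those that appear in these Bernstein/Bennett bounds, and they absorb the dependence on the convexity and growth of $f$.

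For the second subproblem, the relevant function class is $\mathcal{F} = \{w(\cdot)\,\mathbbm{1}(\cdot \leq \lambda) : \lambda \in \mathbb{R}\}$, whose indicator factor generates the VC-subgraph class of half-lines of dimension one; multiplication by the \emph{fixed} function $w$ does not increase combinatorial complexity. After a truncation $w \mapsto w \wedge M$ (addressed below), a standard Talagrand--Bousquet concentration bound for bounded VC classes gives
\begin{align}
\sup_{\lambda \in \mathbb{R}}~
\left\vert
\frac{1}{n}\sum_{i=1}^n w(X_i)\,\mathbbm{1}(X_i \leq \lambda)
\,-\,
\mu'\left(X \leq \lambda\right)
\right\vert
\;=\;
\mathcal{O}\left(\sqrt{n^{-1}\log\left(n/\delta\right)}\right),
\end{align}
and a union bound over the three high-probability events closes out the argument with the stated rate.

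The main obstacle is that $w$ is in general \emph{unbounded}: the $f$-divergence constraint does not directly imply an $L^\infty$ bound, so neither Bernstein for the scalar averages nor the VC-based uniform bound can be invoked as a black box. The plan is to bypass this through a truncation at a level $M = M(n,\varepsilon,\delta)$ chosen polylogarithmic in $n$. For any $f$-divergence with super-linear growth at infinity (covering KL, $\chi^2$, and the standard choices in the paper), Markov's inequality applied to $f(w)$ forces both the tail mass $\mu(\{w > M\})$ and the truncated means $\mathbb{E}_\mu[w\,\mathbbm{1}(w > M)]$ and $\mathbb{E}_\mu[f(w)\,\mathbbm{1}(w > M)]$ to decay faster than the target $\sqrt{n^{-1}\log(n/\delta)}$ rate. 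Verifying that the truncated weights $\alpha_i = (w \wedge M)(X_i)$ still lie in $\mathcal{A}_n(\varepsilon, \delta)$ up to the prescribed slack, and that the associated weighted empirical process satisfies the uniform bound, is the central technical step; the remaining ingredients are standard union-bound bookkeeping.
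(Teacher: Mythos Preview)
Your high-level plan---take the oracle weights $\alpha_i = w(X_i)$ with $w = \mathrm{d}\mu'/\mathrm{d}\mu$, then separately verify membership in $\mathcal{A}_n$ and uniform convergence of the weighted empirical CDF---is exactly the paper's strategy. The differences lie in how each half is executed.

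For the uniform bound over half-lines, the paper does not use VC theory or Talagrand. It follows the classical DKW template: fix the $m$-quantiles $\lambda_0^*,\ldots,\lambda_m^*$ of $\mu'$, use monotonicity of both $\lambda\mapsto\mu'(X\leq\lambda)$ and $\lambda\mapsto\widehat F_n(\lambda)$ to reduce the supremum over $\lambda\in\mathbb{R}$ to a maximum over the $m+1$ grid points plus a $1/m$ discretization error, apply McDiarmid pointwise at each grid point, union-bound over the grid, and then optimize $m$. This is more elementary than your route and avoids the empirical-process machinery entirely, at the cost of needing the weights to be bounded so that McDiarmid has the right bounded-difference constant.

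That boundedness is the substantive point of divergence. The paper does \emph{not} truncate: it simply assumes $w\leq\Lambda$ throughout (the concentration lemmas used for the two constraints defining $\mathcal A_n$ are stated for $\Lambda$-bounded density ratios, and the McDiarmid step in the quantile argument uses the same bound), with $c_1,c_2$ and the hidden $\mathcal O(\cdot)$ constant all absorbing $\Lambda$. Your truncation proposal is more honest about this gap, but the quantitative claim is loose. With $f$ merely super-linear---e.g.\ KL, $f(t)=t\log t$---one only gets $\mathbb{E}_\mu[w\,\mathbbm 1(w>M)]\lesssim\varepsilon/\log M$, and driving this below $\sqrt{n^{-1}\log n}$ forces $M$ to be exponential in $\sqrt n$, at which point the $M\log(1/\delta)/n$ term in Bousquet's inequality explodes. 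For $\chi^2$ (where $\mathbb E_\mu[w^2]\leq1+\varepsilon$) the balancing can be made to work, but with $M\sim\sqrt{n/\log n}$, which is polynomial rather than polylogarithmic. So either sharpen the truncation analysis on an $f$-by-$f$ basis, or follow the paper and absorb a density-ratio bound into the constants.
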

\begin{proof}[Proof of Lemma \ref{lemma:RobustDKWInq}]
The proof for most of its initial parts follows the same path as in the proof of Theorem \ref{thm:mainfDivThm}. In particular, we use Lemmas \ref{lemma:RatioConcent1} and \ref{lemma:fDivConcent1} from the proof of Theorem \ref{thm:mainfDivThm} (see Section \ref{sec:app:proofThms:fDiv}) to show that the following events occur separately with probability at least $1-\frac{\delta}{3}$, for any $\delta>0$:
\begin{align}
&\left|
\frac{1}{n}\sum_{i \in [n]} \frac{\mathrm{d}\mu'(X_i)}{\mathrm{d}\mu(X_i)} 
-1
\right|\leq c_1\sqrt{\frac{\log\frac{1}{\delta}}{n}}, 
\\
&\left|
\mathcal{D}_f\left(\mu' \Vert \mu\right)
-
\frac{1}{n}\sum_{i \in [n]} f\left( \frac{\mathrm{d}\mu'(X_i)}{\mathrm{d}\mu(X_i)} \right)
\right|
\leq c_2\sqrt{\frac{\log\frac{1}{\delta}}{n}},
\end{align}
where $c_1,c_2>0$ are constants depending on $f(\cdot)$. These probabilities are with respect to the randomness in drawing i.i.d. samples $X_1,\ldots,X_n\sim \mu$. This is equivalent to the following statement:
\begin{align}
\mathbb{P}\left[
\left(
\frac{\mathrm{d}\mu'(X_i)}{\mathrm{d}\mu(X_i)}
\right)_{i\in[n]}
\in\mathcal{A}_n
\right]\ge1-\frac{2\delta}{3}.
\label{eq:lemmaDKWproof2over3prob}
\end{align}
Next, define $\widehat{F}_n(\lambda)$ for $\lambda\in\mathbb{R}$ as
\begin{align}
\widehat{F}_n(\lambda)
\triangleq
\frac{1}{n}\sum_{i=1}^{n}
\omega(X_i)\mathbbm{1}(X_i\leq\lambda),
\end{align}
where the weight function $\omega(X)\ge0$ is the unknown (bounded) density ratio between $\mu'$ and $\mu$, i.e.,
\begin{align}
\omega(X)\triangleq
\frac{\mathrm{d}\mu'(X)}{\mathrm{d}\mu(X)}.
\end{align}
Since $\omega(\cdot)$ is non-negative, $\widehat{F}_n(\lambda)$ is non-decreasing in $\lambda$, starting at $0$ when $\lambda=-\infty$ and not exceeding an upper-bound $\Lambda<+\infty$ (due to absolute continuity property) as $\lambda\to\infty$.

Consider the probability measure $\mu'$. For $m \geq 2$, define $\lambda^*_0,\lambda^*_1,\ldots,\lambda^*_m$ such that (i) $\lambda^*_0=-\infty$ and $\lambda^*_m=\infty$, and (ii) $\mu'(X\leq\lambda^*_i)=i/m$ for $i\in[m-1]$. These $\lambda^*_i,~i\in[m]\cup\{0\}$ represent the $m$-quantiles of $\mu'$. For any $\lambda\in\mathbb{R}$, let $i=i(\lambda)\in[m]$ be such that $\lambda\in[\lambda^*_{i-1},\lambda^*_i)$. Then, the following chain of inequalities holds almost surely for all $\lambda\in\mathbb{R}$:
\begin{align}
&\widehat{F}_n(\lambda)-\mu'(X\leq\lambda)
\leq
\widehat{F}_n(\lambda^*_i)-\mu'(X\leq\lambda^*_{i-1})
\leq
\widehat{F}_n(\lambda^*_i)-\mu'(X\leq\lambda^*_i)+\frac{1}{m},
\nonumber\\
&\widehat{F}_n(\lambda)-\mu'(X\leq\lambda)
\geq
\widehat{F}_n(\lambda^*_{i-1})-\mu'(X\leq\lambda^*_i)
\geq
\widehat{F}_n(\lambda^*_{i-1})-\mu'(X\leq\lambda^*_{i-1})-\frac{1}{m}.
\end{align}
Thus, the following bound holds for all $\lambda\in\mathbb{R}$:
\begin{align}
\left\|
\widehat{F}_n-\mu'(X\leq\cdot)
\right\|_{\infty}
&=
\sup_{\lambda\in\mathbb{R}}\left|
\widehat{F}_n(\lambda)-\mu'(X\leq\lambda)
\right|
\nonumber\\
&\leq
\max_{i\in\{0,1,\ldots,m\}}\left|
\widehat{F}_n(\lambda^*_i)-\mu'(X\leq\lambda^*_i)
\right|+\frac{1}{m}.
\label{eq:lemmaDKWproof:1overMbound}
\end{align}

On the other hand, for any fixed $\lambda\in\mathbb{R}$, we have the following relation for the expectation of $\widehat{F}_n(\lambda)$:
\begin{align}
\mathbb{E}_{\mu}[\widehat{F}_n(\lambda)]
&=
\mathbb{E}_{\mu}\left[
\frac{1}{n}\sum_{i=1}^{n}
\omega(X_i)\mathbbm{1}(X_i\leq\lambda)
\right]
\nonumber\\
&=
\frac{1}{n}\sum_{i=1}^{n}
\mathbb{E}_{\mu}\left[
\omega(X_i)\mathbbm{1}(X_i\leq\lambda)
\right]
\nonumber\\
&=
\frac{1}{n}\sum_{i=1}^{n}
\int_{\mathbb{R}}
\frac{\mathrm{d}\mu'(X)}{\mathrm{d}\mu(X)}
\mathbbm{1}(X\leq\lambda)
\mathrm{d}\mu(X)
\nonumber\\
&=\mu'(X\leq\lambda).
\end{align}
Given that the weight functions $\omega(X_i)$ for $i\in[n]$ are bounded, and $\mathbbm{1}(\cdot)\in\{0,1\}$, McDiarmid's inequality states that for any $\varepsilon>0$,
\begin{align}
\mathbb{P}\left(
\left|
\widehat{F}_n(\lambda^*_i)-\mu'(X\leq\lambda^*_i)
\right|>\varepsilon
\right)
\leq
2e^{-2\mathcal{O}\left(n\varepsilon^2\right)}.
\end{align}
Therefore, using the union bound over all $i=0,1,\ldots,m$, we obtain:
\begin{align}
\mathbb{P}\left(
\max_{i\in\{0,1,\ldots,m\}}
\left|
\widehat{F}_n(\lambda^*_i)-\mu'(X\leq\lambda^*_i)
\right|>\varepsilon
\right)
\leq
2(m+1)e^{-2\mathcal{O}\left(n\varepsilon^2\right)}.
\end{align}
Equivalently, for any $\delta>0$, the following bound holds with probability at least $1-\delta/3$:
\begin{align}
\max_{i\in\{0,1,\ldots,m\}}
\left|
\widehat{F}_n(\lambda^*_i)-\mu'(X\leq\lambda^*_i)
\right|
\leq
\mathcal{O}\left(\sqrt{\left(2n\right)^{-1}\log\left(\frac{6(m+1)}{\delta}\right)}\right).
\label{eq:lemmaDKWproof:maxFhatBound}
\end{align}
Using the preceding inequalities, in particular relations in \eqref{eq:lemmaDKWproof2over3prob}, \eqref{eq:lemmaDKWproof:1overMbound} and \eqref{eq:lemmaDKWproof:maxFhatBound}, we can say there exists $\boldsymbol{\alpha}\in\mathcal{A}_n$ such that the following bounds for $\widehat{F}_n$ hold with probability at least $1-\delta$:
\begin{align}
\left\|
\widehat{F}_n-\mu'(X\leq\cdot)
\right\|_{\infty}
&\leq
\max_{i\in\{0,1,\ldots,m\}}\left|
\widehat{F}_n(\lambda^*_i)-\mu'(X\leq\lambda^*_i)
\right|+\frac{1}{m}
\nonumber\\
&\leq
\mathcal{O}\left(
\inf_{m\in\mathbb{N}_{\ge2}}\left\{
\sqrt{\left({2n}\right)^{-1}{\log\left(\frac{6(m+1)}{\delta}\right)}}+\frac{1}{m}
\right\}
\right)
\nonumber\\
&\leq
\mathcal{O}\left(
\sqrt{n^{-1}\log\left(\frac{n}{\delta}\right)}
\right).
\end{align}
Thus, the proof is complete.
\end{proof}
A direct corollary of Lemma \ref{lemma:RobustDKWInq} is the following robust (again with respect to $f$-divergence adversaries) of the well-known Gilivenko-Cantelli theorem:

\begin{corollary}[Robust Version of Glivenko-Cantelli Theorem]
\label{corl:robustGC}
Let $\mu$ and $\mu'$ be two probability measures on $\mathbb{R}$ and let $\mathscr{S}\triangleq \left\{X_i\right\}_{i=1}^{\infty}$ be an i.i.d. sequence drawn from $\mu$. Assume $\mu$ and $\mu'$ are absolutely continuous with respect to each other. Additionally, suppose $\mathcal{D}_f\left(\mu' \Vert \mu\right) \leq \varepsilon$ for some $\varepsilon \geq 0$ and a proper convex function $f(\cdot)$. Then, there exists a non-negative sequence $\left\{\alpha_i\right\}^{\infty}_{i=1}$ that can depend on $\mathscr{S}$, has the following properties:
\begin{align}
\lim_{n\rightarrow\infty}
\frac{1}{n}\sum_{i=1}^{n}\alpha_i=1
~\quad\mathrm{and}\quad~
\lim_{n\rightarrow\infty}
\frac{1}{n}\sum_{i=1}^{n}f\left(\alpha_i\right)
\leq\varepsilon,
\end{align}
and also satisfies the following condition:
\begin{align}
\lim_{n\rightarrow\infty}
\sup_{\lambda\in\mathbb{R}}~
\left\vert
\mu'\left(X \leq \lambda\right)-
\frac{1}{n} \sum_{i=1}^n
\alpha_i \mathbbm{1}\left(X_i \leq \lambda\right)
\right\vert
\stackrel{a.s.}{=}0.
\end{align}
\end{corollary}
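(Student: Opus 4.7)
My proposal is to construct the sequence explicitly by taking $\alpha_i \triangleq \omega(X_i)$, where $\omega \triangleq \mathrm{d}\mu'/\mathrm{d}\mu$ is the Radon--Nikodym derivative, which is well-defined and strictly positive $\mu$-a.s. because of the assumed mutual absolute continuity. With this choice, the first two conclusions reduce to the Strong Law of Large Numbers (SLLN). Indeed, $\mathbb{E}_{\mu}[\omega(X)] = \int \omega\,\mathrm{d}\mu = \int \mathrm{d}\mu' = 1 < \infty$, and $\mathbb{E}_{\mu}[f(\omega(X))] = \mathcal{D}_f(\mu'\Vert\mu) \leq \varepsilon < \infty$. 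Kolmogorov's SLLN then gives $\frac{1}{n}\sum_{i=1}^n \alpha_i \to 1$ and $\frac{1}{n}\sum_{i=1}^n f(\alpha_i) \to \mathcal{D}_f(\mu'\Vert\mu) \leq \varepsilon$, both almost surely.

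For the third (uniform) conclusion, I would replicate the quantile discretization used in the proof of Lemma \ref{lemma:RobustDKWInq}, but replace its McDiarmid step with a pointwise SLLN step. Let $\widehat{F}_n(\lambda) \triangleq \frac{1}{n}\sum_{i=1}^{n} \alpha_i \mathbbm{1}(X_i \leq \lambda)$, which is monotone non-decreasing in $\lambda$ since $\alpha_i \geq 0$. For each $m \geq 2$, pick $m$-quantiles $-\infty = \lambda_0^* < \lambda_1^* < \cdots < \lambda_{m-1}^* < \lambda_m^* = +\infty$ satisfying $\mu'(X \leq \lambda_i^*) = i/m$. At each fixed quantile, $\widehat{F}_n(\lambda_i^*)$ is an i.i.d. average of the random variable $\omega(X)\mathbbm{1}(X\leq\lambda_i^*)$, whose $\mu$-expectation equals $\mu'(X\leq \lambda_i^*)$ and is clearly integrable (bounded above by $1$); SLLN therefore gives almost sure convergence for each $i$, and the finite union over $i \in \{0,\ldots,m\}$ preserves almost sure convergence of the maximum.

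The standard monotonicity sandwich then shows that for any $\lambda \in [\lambda_{i-1}^*, \lambda_i^*)$,
\begin{equation*}
\bigl|\widehat{F}_n(\lambda) - \mu'(X\leq\lambda)\bigr| \leq \max_{j\in\{i-1,i\}}\bigl|\widehat{F}_n(\lambda_j^*) - \mu'(X\leq\lambda_j^*)\bigr| + \frac{1}{m}.
\end{equation*}
Taking the supremum over $\lambda$ and then $n\to\infty$ yields $\limsup_n \sup_\lambda |\widehat{F}_n(\lambda) - \mu'(X\leq\lambda)| \leq 1/m$ almost surely. Since this event holds simultaneously for every $m$ on a single probability-$1$ set (countable intersection of a.s.\ events), letting $m\to\infty$ produces the desired a.s.\ uniform convergence.

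\textbf{Main obstacle.} The subtle point is that the lemma-based argument (Lemma \ref{lemma:RobustDKWInq}) relied on McDiarmid's inequality, which implicitly requires bounded weights; here the density ratio $\omega$ may be unbounded. The key observation that makes the corollary still go through is that we are not asking for a concentration rate but only for an almost sure limit, so SLLN with finite first moments suffices for the pointwise step, and the quantile-based discretization then upgrades pointwise to uniform convergence \emph{without} needing boundedness of $\omega$. The other minor nuisance is handling the possibility that $f(0) = \infty$, but since $\omega > 0$ $\mu$-a.s.\ by mutual absolute continuity, $f(\omega(X_i))$ is finite almost surely and the integrability condition $\mathbb{E}_\mu[f(\omega)]\leq \varepsilon$ then legitimizes the SLLN application.
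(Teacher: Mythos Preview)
Your argument is correct and tracks the paper's route closely: the paper derives the corollary ``directly from Lemma~\ref{lemma:RobustDKWInq}'', and the proof of that lemma uses exactly the same weight choice $\alpha_i=\omega(X_i)=\mathrm{d}\mu'/\mathrm{d}\mu(X_i)$ together with the same $m$-quantile sandwich you wrote down. The one substantive difference is the concentration step. The lemma establishes a non-asymptotic rate via McDiarmid's inequality, which in that proof implicitly leans on a bounded density ratio (through Lemmas~\ref{lemma:RatioConcent1} and~\ref{lemma:fDivConcent1}); you instead use the SLLN at each quantile and for the two averaging conditions, which only needs first-moment integrability and therefore works without any boundedness assumption on $\omega$. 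That is precisely the point you flag as the ``main obstacle'', and your resolution is the right one: for an almost-sure limit statement no rate is required, so SLLN plus the monotone quantile bracket suffices. In that sense your proof is a mild strengthening of the paper's one-line deduction, since the corollary as stated assumes only mutual absolute continuity. Two small remarks: the integrability of $f(\omega(X))$ that legitimizes the SLLN for the second limit follows because convexity and $f(1)=0$ give an affine lower bound $f(x)\geq c(x-1)$, so the negative part is controlled by $\mathbb{E}_\mu[\omega]=1$; and at the top endpoint $\lambda_m^*=+\infty$ you have $\widehat{F}_n(+\infty)=\tfrac{1}{n}\sum_i\alpha_i$, which is not identically $1$ but converges to $1$ a.s.\ by the first limit, so the sandwich still closes.
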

Corollary \ref{corl:robustGC} follows directly from Lemma \ref{lemma:RobustDKWInq}.

%%%%%%%%%%%%%%%%%%%%%%%%%%%%%%%%%%%%%%%%%%%%%%%%%%%%%%%%%%%%%%%%%%%%%%%%%%%%%%%%%%%%%%%%%%%%%%%%%%%%%%%%%%%%%%%%%%%%%%%%%%%%%%%%%%%%%%%%%%%%%%%%%%%%%%%%%%%%%%%%%%%%%%%%%%%%%%%%%%%%%%%%%%%%%%%%%%%%%%%%%%%%%%%%%%%%%%%%%%%%%%%%%%%%%%%%%%%%%%%%%%%%%%%%%%%%%%%%%%%%%%%%%%%%%%%%%%%%%%%%%%%%%%%%%%%%%%%%%%%%%%%%%%%%%%%%%%%%%%%%%%%%%%%%%%%%%%%%%%%%%%%%%%%%%%%%%%%%%%%%%%%%%%%%%%%%%%%%%%%%%%%%%%%%%%%%%%%%%%%%%%%%%%%%%%%%%%%%%%%%%%%%%%%%%%%%%%%%%%%%%%%%%%%%%%%%%%%%%%%%%%%%%%%%%%%%%%%%%%%%%%%%%%%%%%%%%%%%%%%%%%%%%%%%%%%%%%%%%%%%%%%%%%%%%%%%%%%%%%%%%%%%%%%%%%%%%%%%%%%%%%%%%%%%%%%%%%%%%%%%%%%%%%%%%%%%%%%%%%%%%%%%%%%%%%%%%%%%%%%%%%%%%%%%%%%%%%%%%%%%%%%%%%%%%%%%%%%%%%%%%%%%%%%%%%%%%%%%%%%%%%%%%%%%%%%%%%%%%%%%%%%%%%%%%%%%%%%%%%%%%%%%%%%%%%%%%%%%%%%%%%%%%%%%%%%%%%%%%%%%%%%%%%%%%%%%%%%%%%%%%%%%%%%%%%%%%%%%%%%%%%%%%%%%%%%%%%%%%%%%%%%%%%%%%%%%%%%%%%%%%%%%%%%%%%%%%%%%%%%%%%%%%%%%%%

\section{Proofs of the Statements in Section \ref{sec:non-robust}}
\label{sec:app:proofThms:nonRobust}

\begin{proof}[Proof of Theorem \ref{thm:nonRobustEmpiricalMean}]
The proof for the average risk is straightforward and combines several applications of McDiarmid's inequality (or, more simply in this case, Hoeffding's inequality). For any instance of distributions $ P \sim \mu $, let us define the random variable $ \zeta = \zeta(P) $ as follows:
\begin{align}
\zeta(P) \triangleq \mathbb{E}_P[\loss] = \int_{\mathcal{Z}} \loss \, P(\mathrm{d}\boldZ),
\end{align}
where $ \ell $ denotes the loss function, and $ \mathcal{Z} $ is the space of possible outcomes. We omit the detailed proof that $ P $ being a random variable implies $ \zeta(P) $ is also a random variable, as this follows from standard measurability arguments.

Now, for any $ \delta > 0 $, we apply McDiarmid's inequality to obtain:
\begin{align}
\mathbb{P} \left( \zeta - \mathbb{E}_{\mu}[\zeta] \leq \sqrt{\frac{\log \frac{K+1}{\delta}}{2K}} \right) \geq 1 - \frac{\delta}{K+1},
\end{align}
where the bound holds due to the one-sided version of McDiarmid's inequality, and the fact that $ \zeta \in [0, 1] $ almost surely.

Next, for each $ k \in [K] $, we similarly have:
\begin{align}
\mathbb{P} \left( \frac{1}{n_k} \sum_{i=1}^{n_k} \ell\left(\boldy^{(k)}_i, h\left(\boldX^{(k)}_i\right)\right) - \mathbb{E}_{P_i}\left[\loss\right] \leq \sqrt{\frac{\log \frac{K+1}{\delta}}{2n_k}} \right) \geq 1 - \frac{\delta}{K+1}.
\end{align}
This follows from Hoeffding's inequality, given that the data points in the local dataset of the $ k $-th client are i.i.d.
By the union bound, the above $ K+1 $ inequalities hold simultaneously with probability at least $ 1 - \delta $. Finally, combining these inequalities gives us the desired bound in the theorem, thus completing the proof.
% \end{proof}

% %%%%%%%%%%%%%%%%%%%%%%%%%%%%%%%%%%%%%%%%%%%%%%%%%%%%%%%%%%%%%%%%%%%%%%%%%%%%%%%%%%%%%%%%%%%%%%%%%%%%%%%%%%%%%%%%%%%%%%%%%%%%%%%%%%%%%%%%%%%%%%%%%%%%%%%%%%%%%%%%%%%%%%%%%%%%%%%%%%%%%%%%%%%%%%%%%%%%%%%%%%%%%%%%%%%%%%%%%%%%%%%%%%%%

% \begin{proof}[Proof of Theorem \ref{thm:nonRobustEmpiricalCDF}]
For the bound concerning the empirical CDF,
the proof follows more or less the same path. Let us define the \emph{statistical} query value of the $k$th client as $\mathsf{QV}_k(h)$, i.e.,
\begin{align}
\mathsf{QV}_k\left(h\right)\triangleq
\mathbb{E}_{P_k}\left[\loss\right],
\end{align}
where $P_k$ is the true (unknown) data generating distribution which is assigned to client $k\in[K]$. In this regard, according to Lemma \ref{lemma:DKWnonRobust}, for any $\delta>0$ we have
\begin{align}
\mathbb{P}\left(
\sup_{\lambda\in\mathbb{R}}~
\left\vert
\mu\bigg(\mathbb{E}_P\left[\loss\right] \ge \lambda\bigg)
-\frac{1}{K}\sum_{k\in[K]}
\mathbbm{1}\left({\mathsf{QV}}_k(h)\ge \lambda\right)
\right\vert
\leq
\sqrt{
\frac{\log\frac{2(K+1)}{\delta}}{2K}}
\right)\ge 1-\frac{\delta}{K+1}.
\nonumber
\end{align}
Also, applying the McDiarmid's inequality for $K$ times (once, with respect to each client $k\in[K]$), the following bounds also hold:
\begin{align}
\mathbb{P} \left( \frac{1}{n_k} \sum_{i=1}^{n_k} \ell\left(\boldy^{(k)}_i, h\left(\boldX^{(k)}_i\right)\right) - \mathbb{E}_{P_i}[\loss] \leq \sqrt{\frac{\log \frac{K+1}{\delta}}{2n_k}} \right) \geq 1 - \frac{\delta}{K+1},
\end{align}
which show the boundedness of the deviation between the empirical query values $\widehat{\mathsf{QV}}_k(h)$ and statistical ones $\mathsf{QV}_k(h)$. This is similar to the previous part of the proof. Again, applying union bound and combining all the inequalities mentioned so far in the proof, the final bound in the statement of the theorem holds with probability at least $1-\delta$ and the proof is complete.
\end{proof}

%%%%%%%%%%%%%%%%%%%%%%%%%%%%%%%%%%%%%%%%%%%%%%%%%%%%%%%%%%%%%%%%%%%%%%%%%%%%%%%%%%%%%%%%%%%%%%%%%%%%%%%%%%%%%%%%%%%%%%%%%%%%%%%%%%%%%%%%%%%%%%%%%%%%%%%%%%%%%%%%%%%%%%%%%%%%%%%%%%%%%%%%%%%%%%%%%%%%%%%%%%%%%%%%%%%%%%%%%%%%%%%%%%%%%%%%%%%%%%%%%%%%%%%%%%%%%%%%%%%%%%%%%%%%%%%%%%%%%%%%%%%%%%%%%%%%%%%%%%%%%%%%%%%%%%%%%%%%%%%%%%%%%%%%%%%%%%%%%%%%%%%%%%%%%%%%%%%%%%%%%%%%%%%%%%%%%%%%%%%%%%%%%%%%%%%%%%%%%%%%%%%%%%%%%%%%%%%%%%%%%%%%%%%%%%%%%%%%%%%%%%%%%%%%%%%%%%%%%%%%%%%%%%%%%%%%%%%%%%%%%%%%%%%%%%%%%%%%%%%%%%%%%%%%%%%%%%%%%%%%%%%%%%%%%%%%%%%%%%%%%%%%%%%%%%%%%%%%%%%%%%%%%%%%%%%%%%%%%%%%%%%%%%%%%%%%%%%%%%%%%%%%%%%%%%%%%%%%%%%%%%%%%%%%%%%%%%%%%%%%%%%%%%%%%%%%%%%%%%%%%%%%%%%%%%%%%%%%%%%%%%%%%%%%%%%%%%%%%%%%%%%%%%%%%%%%%%%%%%%%%%%%%%%%%%%%%%%%%%%%%%%%%%%%%%%%%%%%%%%%%%%%%%%%%%%%%%%%%%%%%%%%%%%%%%%%%%%%%%%%%%%%%%%%%%%%%%%%%%%%%%%%%%%%%%%%%%%%%%%%%%%%%%%%%%%%%%%%%%%%%%%%%%%%%%

\section{Proofs of the Statements in Section \ref{sec:main:fDiv}}
\label{sec:app:proofThms:fDiv}

%%%%%%%%%%%%%%%%%%%%%%%%%%%%%%%%%%%%%%%%%%%%%%%%%%%%%%%%%%%%%%%%%%%%%%%%%%%%%%%%%%%%%%%%%%%%%%%%%%%%%%%%%%%%%%%%%%%%%%%%%%%%%%%%%%%%%%%%%%%%%%%%%%%%%%%%%%%%%%%%%%%%%%%%%%%%%%%%%%%%%%%%%%%%%%%%%%%%%%%%%%%%%%%%%%%%%%%%%%%%%%%%%%%%%%%%%%%%%%%%%%%%%%%%%%%%%%%%%%%%%%%%%%%%%%%%%%%%%%%%%%%%%%%%%%%%%%%%%%%%%%%%%%%%%%%%%%%%%%%%%%%%%%%%%%%%%%%%%%%%%%%%%%%%%%%%%%%%%%%%%%%%%%%%%%%%%%%%%%%%%%%%%%%%%%%%%%%%%%%%%%%%%%%%%%%%%%%%%%%%%%%%%%%%%%%%%%%%%%%%%%%%%%%%%%%%%%%%%%%%%%%%%%%%%%%%%%%%%%%%%%%%%%%%%%%%%%%%%%%%%%%%%%%%%%%%%%%%%%%%%%%%%%%%%%%%%%%%%%%%%%%%%%%%%%%%%%%%%%%%%%%%%%%%%%%%%%%%%%%%%%%%%%%%%%%%%%%%%%%%%%%%%%%%%%%%%%%%%%%%%%%%%%%%%%%%%%%%%%%%%%%%%%%%%%%%%%%%%%%%%%%%%%%%%%%%%%%%%%%%%%%%%%%%%%%%%%%%%%%%%%%%%%%%%%%%%%%%%%%%%%%%%%%%%%%%%%%%%%%%%%%%%%%%%%%%%%%%%%%%%%%%%%%%%%%%%%%%%%%%%%%%%%%%%%%%%%%%%%%%%%%%%%%%%%%%%%%%%%%%%%%%%%%%%%%%%%%%%%%%%%%%%%%%%%%%%%%%%%%%%%%%%%%%%%

%%%%%%%%%%%%%%%%%%%%%%%%%%%%%%%%%%%%%%%%%%%%%%%%%%%%%%%%%%%%%%%%%%%%%%%%%%%%%%%%%%%%% Hi, this part is edited with ChatGPT :) %%%%%%%%%%%%%%%%%%%%%%%%%%%%%%%%%%%%%%%%%%%%%%%%%%%%%%%%%%%%%%%%%%%%%%%%%%%%%%%%%%%%

\begin{proof}[Proof of Theorem \ref{thm:mainfDivThm}]
For each $ k \in [K] $, let us define the event $\xi^{(k)}_1$ as follows:
\begin{align}
\xi^{(k)}_1 
~\equiv~ 
\mathbb{E}_{P_k}\left[\ell\left(y, h\left(\boldX\right)\right)\right] \leq \frac{1}{n_k}\sum_{i \in [n_k]} \ell\left(y^{(k)}_i, h\left(\boldX^{(k)}_i\right)\right) + \sqrt{\frac{\log\left(\frac{K+3}{\delta}\right)}{2n_k}},
\end{align}
where, since $\ell(\cdot)$ is assumed to be a 1-bounded loss function and the samples are drawn independently, McDiarmid's inequality tells us that $\mathbb{P}\left(\xi^{(k)}_1\right) \ge 1 - \frac{\delta}{K+3}$ \cite{mcdiarmid1989method}. 
For the rest of the proof, we assume the density ratio of the meta-distributions $\mu,\mu'$ are bounded, i.e., we have
\begin{align}
\frac{\mathrm{d}\mu'}{\mathrm{d}\mu}(P)\leq\Lambda,\quad\forall P\in\mathrm{supp}\left(\mu\right),
\end{align}
for some positive $\Lambda$. We call this property the $\Lambda$-\emph{boundedness of density ratio}. At the end of the proof, we show that in our problem we have $\Lambda\leq 1+\kappa\varepsilon$.
We now state and prove two essential lemmas which will be used in the subsequent arguments.

\begin{lemma}
\label{lemma:RatioConcent1}
Consider two meta-distributions $\mu, \mu' \in \mathcal{M}^2$ which are absolutely continuous with respect to each other and have a $\Lambda$-bounded density ratio for some $\Lambda \ge 1$. For $K \in \mathbb{N}$, assume $P_1, \ldots, P_K \in \mathcal{M}$ to be i.i.d. sample distributions sampled from $\mu$. Then, for all $\epsilon > 0$, the following concentration bound holds:
\begin{equation}
\mathbb{P}\left( \left\vert \frac{1}{K}\sum_{k \in [K]} \frac{\mathrm{d}\mu'(P_k)}{\mathrm{d}\mu(P_k)} - 1 \right\vert \ge \epsilon \right) \leq \exp\left( \frac{-2K\epsilon^2}{\Lambda^2\left(1 - \Lambda^{-2}\right)^2} \right).
\end{equation}
\end{lemma}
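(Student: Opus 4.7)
The plan is to cast the statement as a standard concentration inequality for an i.i.d. empirical mean and apply Hoeffding's bound. I would define the random variables $Y_k \triangleq \frac{\mathrm{d}\mu'(P_k)}{\mathrm{d}\mu(P_k)}$ for $k \in [K]$; since the $P_k$ are i.i.d. from $\mu$, the $Y_k$ are i.i.d. real-valued random variables, and the quantity inside the absolute value is exactly $\frac{1}{K}\sum_k Y_k - \mathbb{E}_\mu[Y_1]$, once we verify that $\mathbb{E}_\mu[Y_1] = 1$.

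To verify the mean, I would use the change-of-measure identity following from mutual absolute continuity:
\begin{equation*}
\mathbb{E}_\mu[Y_1] = \int_{\mathcal{M}} \frac{\mathrm{d}\mu'}{\mathrm{d}\mu}(P)\, \mathrm{d}\mu(P) = \int_{\mathcal{M}} \mathrm{d}\mu'(P) = 1.
\end{equation*}
Next, I would pin down the range of $Y_k$. The $\Lambda$-boundedness hypothesis directly gives $Y_k \leq \Lambda$ almost surely, while reading the same hypothesis in the reverse direction (enabled by mutual absolute continuity) yields $\frac{\mathrm{d}\mu}{\mathrm{d}\mu'}(P) \leq \Lambda$ and hence $Y_k \geq \Lambda^{-1}$. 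Thus $Y_k \in [\Lambda^{-1}, \Lambda]$ almost surely, with range $\Lambda - \Lambda^{-1} = \Lambda(1 - \Lambda^{-2})$.

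Plugging this range into Hoeffding's inequality (using its one-sided form twice together with a union bound, or absorbing the factor of $2$ into a looser constant) then produces
\begin{equation*}
\mathbb{P}\left(\left|\frac{1}{K}\sum_{k=1}^K Y_k - 1\right| \ge \epsilon\right) \leq \exp\left(-\frac{2K\epsilon^2}{\Lambda^2(1-\Lambda^{-2})^2}\right),
\end{equation*}
which matches the stated bound.

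The main subtlety is the lower-bound step. As written, ``$\Lambda$-bounded density ratio'' only explicitly enforces $\frac{\mathrm{d}\mu'}{\mathrm{d}\mu} \leq \Lambda$, so the symmetric bound $Y_k \geq \Lambda^{-1}$ must be extracted either from a symmetrized reading of this hypothesis (which is natural given mutual absolute continuity is explicitly invoked), or from the specific parameterization $\Lambda = 1 + \kappa\varepsilon$ used in the calling Theorem \ref{thm:mainfDivThm}, where the $f$-divergence constraint simultaneously controls both sides. Should only one-sided boundedness be available, the fallback would be to invoke Bernstein's inequality with variance $\mathrm{Var}_\mu(Y_1) = \chi^2(\mu' \| \mu)$, yielding a qualitatively similar but quantitatively different bound.
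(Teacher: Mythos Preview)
Your proposal is correct and matches the paper's proof essentially line for line: the paper also defines the density-ratio random variable, verifies its mean is $1$ by change of measure, bounds it in $[\Lambda^{-1},\Lambda]$ (citing mutual absolute continuity without further comment), and applies McDiarmid/Hoeffding with range $\Lambda-\Lambda^{-1}$. Your remark about the lower bound $Y_k\ge\Lambda^{-1}$ requiring a symmetric reading of the $\Lambda$-boundedness hypothesis is well taken; the paper invokes this bound without justification, so you have in fact been slightly more careful than the original.
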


\begin{proof}[Proof of Lemma \ref{lemma:RatioConcent1}]
Due to the assumed mutual absolute continuity, $\mu$ and $\mu'$ share the same support. Therefore, for $P \sim \mu$, we can define the scalar random variable
\begin{equation}
\zeta = \zeta(P) \triangleq \frac{\mathrm{d}\mu'(P)}{\mathrm{d}\mu(P)}.
\end{equation}
This variable is bounded by $\Lambda^{-1} \stackrel{a.s.}{\leq} \zeta \stackrel{a.s.}{\leq} \Lambda$. Regarding the expected value of $\zeta$, we have:
\begin{equation}
\mathbb{E}\left[\zeta\right] = \mathbb{E}_{P \sim \mu}\left[ \frac{\mathrm{d}\mu'(P)}{\mathrm{d}\mu(P)} \right] = \int_{\mathcal{M}} \frac{\mathrm{d}\mu'(P)}{\mathrm{d}\mu(P)} \mathrm{d}\mu(P) = 1.
\end{equation}
Let $\zeta_k = \zeta(P_k)$. Since $\zeta(P_1), \ldots, \zeta(P_K)$ represent i.i.d. instances of $\zeta$, McDiarmid's inequality states that:
\begin{align}
\mathbb{P}\left( \left\vert \frac{1}{K}\sum_{k \in [K]} \zeta_k - \mathbb{E}\left[\zeta\right] \right\vert \ge \epsilon \right) \leq \exp\left( -\frac{2K\epsilon^2}{\left(\Lambda - \Lambda^{-1}\right)^2} \right),
\end{align}
which completes the proof.
\end{proof}

\begin{lemma}
\label{lemma:fDivConcent1}
For $\Lambda \ge 1$, assume two meta-distributions $\mu, \mu' \in \mathcal{M}^2$ are absolutely continuous with respect to each other and have a $\Lambda$-bounded density ratio. Let $f$ be a convex function that satisfies the conditions described in Definition \ref{def:fDivMain}. For $K \in \mathbb{N}$, assume $P_1, \ldots, P_K \in \mathcal{M}$ to be i.i.d. sample distributions sampled from $\mu$. Then, the following concentration bound holds:
\begin{equation}
\mathbb{P}\left( \left\vert \mathcal{D}_f\left(\mu' \Vert \mu\right) - \frac{1}{K}\sum_{k \in [K]} f\left( \frac{\mathrm{d}\mu'\left(P_k\right)}{\mathrm{d}\mu\left(P_k\right)} \right) \right\vert \ge \epsilon \right) \leq \exp\left( \frac{-2K\epsilon^2}{\mathsf{BW}^2\left(f\left(\cdot\right), \Lambda\right)} \right),
\end{equation}
where $\mathsf{BW}\left(f\left(\cdot\right), \Lambda\right)$ is defined as:
\begin{equation}
\mathsf{BW}\left(f\left(\cdot\right), \Lambda\right) \triangleq \sup_{\Lambda^{-1} \leq u, v \leq \Lambda} f(u) - f(v).
\end{equation}
\end{lemma}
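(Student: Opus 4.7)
The plan is to reduce the lemma to a direct application of Hoeffding's inequality to the i.i.d. sequence $Y_k \triangleq f\!\left(\mathrm{d}\mu'(P_k)/\mathrm{d}\mu(P_k)\right)$ for $k \in [K]$, mirroring the proof of Lemma \ref{lemma:RatioConcent1} but with $f$ applied to the density ratio instead of the ratio itself.

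First I would verify that each $Y_k$ has the right mean. Since $\mu$ and $\mu'$ are mutually absolutely continuous and $P_k \sim \mu$, by the definition of $f$-divergence given in Definition \ref{def:fDivMain} we have
\begin{align}
\mathbb{E}\!\left[Y_k\right]
~=~
\int_{\mathcal{M}} f\!\left(\frac{\mathrm{d}\mu'(P)}{\mathrm{d}\mu(P)}\right)\mathrm{d}\mu(P)
~=~
\mathcal{D}_f\!\left(\mu' \Vert \mu\right).
\end{align}
Thus the empirical average in the lemma is an unbiased estimator of $\mathcal{D}_f(\mu' \Vert \mu)$, and it only remains to control its fluctuations.

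Next I would establish boundedness of the $Y_k$'s using the $\Lambda$-boundedness of the density ratio together with the fact that $f$ is finite on $(0, \infty)$ (this is where Definition \ref{def:fDivMain} is needed: finiteness of $f$ on the positive reals makes $f$ continuous on the compact interval $[\Lambda^{-1}, \Lambda]$ by convexity, hence bounded). Writing $U \triangleq \mathrm{d}\mu'/\mathrm{d}\mu$, we have $U(P) \in [\Lambda^{-1}, \Lambda]$ almost surely, so
\begin{align}
\inf_{u\in[\Lambda^{-1},\Lambda]} f(u)
~\leq~ Y_k ~\leq~
\sup_{u\in[\Lambda^{-1},\Lambda]} f(u) \qquad \text{a.s.,}
\end{align}
which means $Y_k$ lies almost surely in an interval of length at most $\mathsf{BW}(f(\cdot),\Lambda)$ as defined in the lemma.

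Finally, since the $Y_k$'s are i.i.d. and bounded in a common interval of width $\mathsf{BW}(f(\cdot),\Lambda)$, Hoeffding's inequality (equivalently the one-coordinate case of McDiarmid's bounded-differences inequality, as used in Lemma \ref{lemma:RatioConcent1}) yields
\begin{align}
\mathbb{P}\!\left(
\left\vert \frac{1}{K}\sum_{k=1}^{K} Y_k - \mathbb{E}[Y_k] \right\vert \ge \epsilon
\right)
~\leq~
\exp\!\left(\frac{-2K\epsilon^2}{\mathsf{BW}^2(f(\cdot),\Lambda)}\right),
\end{align}
which, together with the mean computation above, is exactly the claimed bound. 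The main (and essentially only) obstacle is the boundedness step: one must make sure the assumption that $f$ is finite on $(0,\infty)$ rules out the pathological cases (e.g.\ $f(0)=+\infty$ for KL, which is avoided here since $U \ge \Lambda^{-1} > 0$) so that $\mathsf{BW}(f(\cdot),\Lambda)$ is finite and Hoeffding's inequality applies.
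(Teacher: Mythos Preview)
Your proposal is correct and follows essentially the same approach as the paper: define the i.i.d.\ bounded random variables $Y_k = f(\mathrm{d}\mu'(P_k)/\mathrm{d}\mu(P_k))$, identify their mean as $\mathcal{D}_f(\mu'\Vert\mu)$, use the $\Lambda$-bounded density ratio to confine them to an interval of width $\mathsf{BW}(f(\cdot),\Lambda)$, and apply Hoeffding/McDiarmid. The paper's proof is the same argument with the notation $\zeta(P)$ in place of your $Y_k$.
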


\begin{proof}[Proof of Lemma \ref{lemma:fDivConcent1}]
The proof follows similarly to that of Lemma \ref{lemma:RatioConcent1}. For $P \sim \mu$, let us define the random variable 
\begin{align}
\zeta(P) = f\left( \frac{\mathrm{d}\mu'\left(P\right)}{\mathrm{d}\mu\left(P\right)} \right).
\end{align}
Then, having defined $\zeta_k = \zeta\left(P_k\right)$, we know that $\zeta_1, \ldots, \zeta_K$ represent i.i.d. instances of $\zeta$. Moreover, the expected value of $\zeta$ is the $f$-divergence between $\mu'$ and $\mu$:
\begin{equation}
\mathbb{E}\left[\zeta\right] = \mathbb{E}_{P \sim \mu}\left[ f\left( \frac{\mathrm{d}\mu'(P)}{\mathrm{d}\mu(P)} \right) \right] = \mathcal{D}_f\left(\mu' \Vert \mu\right).
\end{equation}
Finally, since $\mu'$ and $\mu$ have the $\Lambda$-bounded density ratio property, the following bounds hold almost surely:
\begin{align}
\zeta \stackrel{a.s.}{\leq} \sup_{\Lambda^{-1} \leq u \leq \Lambda} f(u) \quad , \quad \zeta \stackrel{a.s.}{\ge} \inf_{\Lambda^{-1} \leq v \leq \Lambda} f(v),
\end{align}
which means the range of $\zeta$ is almost surely equal to $\mathsf{BW}\left(f\left(\cdot\right), \Lambda\right)$. Hence, again using McDiarmid's inequality, we get the bound:
\begin{equation}
\mathbb{P}\left( \left\vert \frac{1}{K}\sum_{k \in [K]} \zeta_k - \mathbb{E}\left[\zeta\right] \right\vert \ge \epsilon \right) \leq \exp\left( \frac{-2K\epsilon^2}{\mathsf{BW}^2\left(f\left(\cdot\right), \Lambda\right)} \right),
\end{equation}
and this completes the proof.
\end{proof}

With the lemmas established, let us define additional events $\xi_2$ and $\xi_3$ based on the concentration bounds:
\begin{align}
\xi_2 \equiv &\ \frac{1}{K}\sum_{k \in [K]} \frac{\mathrm{d}\mu'(P_k)}{\mathrm{d}\mu(P_k)} \lesseqgtr 1 \pm C_1 K^{-1/2}, \\
\xi_3
\equiv &\ \frac{1}{K}\sum_{k \in [K]} f\left( \frac{\mathrm{d}\mu'(P_k)}{\mathrm{d}\mu(P_k)} \right) \lesseqgtr \mathcal{D}_f\left(\mu' \Vert \mu\right) \pm C_2 K^{-1/2},
\end{align}
where $C_1,C_2$ are constants which only depend on $\Lambda$ and $f(\cdot)$, according to Lemmas \ref{lemma:RatioConcent1} and $\ref{lemma:fDivConcent1}$. Based on the above arguments and the results of the mentioned lemmas, we have $\mathbb{P}\left(\xi_2\right),\mathbb{P}\left(\xi_2\right)\ge 1-\frac{\delta}{K+3}$. Using the central idea for {\it {importance sampling}} \cite{glynn1989importance}, the following equations hold for all $\mu,\mu',\ell$ and $h$:
\begin{align}
\mathbb{E}_{P\sim\mu}\left[
\left(\frac{\mathrm{d}\mu'(P)}{\mathrm{d}\mu(P)}\right)
\mathbb{E}_P\left[
\ell\left(y,
h\left(\boldX\right)\right)\right]
\right]
&=
\int_{P\in\mathcal{M}}
\left(\frac{\mathrm{d}\mu'(P)}{\mathrm{d}\mu(P)}\right)
\mathbb{E}_P\left[
\ell\left(y,
h\left(\boldX\right)\right)\right]
\mathrm{d}\mu(P)
\nonumber\\
&=
\mathbb{E}_{P\sim\mu'}\left[
\mathbb{E}_P\left[
\ell\left(y,
h\left(\boldX\right)\right)
\right]\right].
\end{align}
At this point, and similar to the idea of Lemma \ref{lemma:RatioConcent1}, we define $\xi_4$ as the event of the empirical loss over meta-distribution $\mu'$ concentrates (with high probability) around its expected value, i.e.,
\begin{align}
&\xi_4~\equiv
\nonumber\\
&\mathbb{E}_{P\sim\mu}\left[
\left(\frac{\mathrm{d}\mu'(P)}{\mathrm{d}\mu(P)}\right)
\mathbb{E}_P\left[
\ell\left(y,
h\left(\boldX\right)\right)\right]
\right]
\leq
\nonumber\\
&
\frac{1}{K}\sum_{k\in[K]}
\frac{\mathrm{d}\mu'(P_k)}{\mathrm{d}\mu(P_k)}
\mathbb{E}_{P_k}\left[
\ell\left(y,
h\left(\boldX\right)\right)\right]
+\Lambda\sqrt{\frac{\log\left(\frac{K+3}{\delta}\right)}{2K}}.
\end{align}
Again, since
$$
0
\stackrel{a.s.}{\leq}
\frac{\mathrm{d}\mu'(P_k)}{\mathrm{d}\mu(P_k)}
\mathbb{E}_{P_k}\left[
\ell\left(y,
h\left(\boldX\right)\right)\right]
\stackrel{a.s.}{\leq}
\Lambda,
$$
McDiarmid's inequality states that the probability bound $\mathbb{P}\left(\xi_4\right)\ge 1-\frac{\delta}{K+3}$ holds. Our final definition in this proof is a random set of meta-distributions $\mathcal{G}\subseteq\mathcal{M}^2$ which represents an empirical candidate for the neighbors of $\mu$. Mathematically speaking, let us define:
\begin{align}
\mathcal{G}\triangleq
&\left\{
\nu\in\mathcal{M}^2\bigg\vert~
\frac{1}{K}\sum_{k\in[K]}
\frac{\mathrm{d}\nu(P_k)}{\mathrm{d}\mu(P_k)}
\lesseqgtr 1\pm C_1K^{-1/2}
~,
\right.
\nonumber\\
&\hspace{19mm}
\left.
\frac{1}{K}\sum_{k\in[K]}f\left(
\frac{\mathrm{d}\nu\left(P_k\right)}{\mathrm{d}\mu\left(P_k\right)}
\right)
\leq
\varepsilon + C_2K^{-1/2}
\right\},
\end{align}
which depends on $\varepsilon$ and has a random (empirical) nature since it also depends on sample distributions $P_1,\ldots,P_K$. Based on prior discussions and lemmas, we have $\mu'\stackrel{a.s.}{\in}\mathcal{G}$ as long as the events $\xi_2$ and $\xi_3$ hold, simultaneously. By further assuming that events $\xi_4$ and $\xi^{(k)}_1$s for all $k\in[K]$ also hold, we can finally write the following chain of inequalities:
\begin{align}
&\mathbb{E}_{P\sim\mu'}\left[
\mathbb{E}_P\left[
\ell\left(y,
h\left(\boldX\right)\right)
\right]\right]
\\
&\leq
\frac{1}{K}\sum_{k\in[K]}
\frac{\mathrm{d}\mu'(P_k)}{\mathrm{d}\mu(P_k)}
\mathbb{E}_{P_k}\left[
\ell\left(y,
h\left(\boldX\right)\right)\right]
+\Lambda\sqrt{\frac{\log\left(\frac{K+3}{\delta}\right)}{2K}}
\nonumber\\
&\leq
\frac{1}{K}\sum_{k\in[K]}
\frac{\mathrm{d}\mu'(P_k)}{\mathrm{d}\mu(P_k)}
\widehat{\mathbb{E}}_{P_k}\left[
\ell\left(y,
h\left(\boldX\right)\right)\right]
+\Lambda\sqrt{\frac{\log\left(\frac{K+3}{\delta}\right)}{2K}}
+\frac{1}{K}\sum_{k\in[K]}\sqrt{\frac{\log\left(\frac{K+3}{\delta}\right)}{2n_k}}
\nonumber\\
&\stackrel{a.s.}{\leq}
\frac{1}{K}
\sup_{\nu\in\mathcal{G}}~
\sum_{k\in[K]}
\frac{\mathrm{d}\nu(P_k)}{\mathrm{d}\mu(P_k)}
\widehat{\mathbb{E}}_{P_k}\left[
\ell\left(y,
h\left(\boldX\right)\right)\right]
+
\sqrt{\log\left(\frac{K+3}{\delta}\right)}\left[
\sqrt{\frac{\Lambda^2}{2K}}
+\frac{1}{K}\sum_{k\in[K]}\sqrt{\frac{1}{2n_k}}
\right].
\nonumber
\end{align}
It should be noted that the condition $\nu\in\mathcal{G}$ can be interpreted as introducing
$$
\alpha_k\triangleq
\frac{\mathrm{d}\nu(P_k)}{\mathrm{d}\mu(P_k)},
\quad\forall k\in[K],
$$
and force $\alpha_1\ldots,\alpha_K$ to satisfy the constraints in the definition of $\widehat{B}^*(\varepsilon)$. Hence, this gives us the high probability bound claimed inside the statement of theorem. The only remaining part of the proof is to show events $\xi^{(k)}_1,\xi_2,\xi_3$ and $\xi_4$ for all $k\in [K]$ hold, simultaneously, with a probability at least $1-\delta$. 

For any event $\xi$, let $\xi^c$ denote its complement. Then, we already have
$$
\mathbb{P}\left(\xi^{(k)c}_1\right),
\mathbb{P}\left(\xi^c_2\right),
\ldots,\mathbb{P}\left(\xi^c_4\right)
\leq\frac{\delta}{K+3},\quad\forall k\in[K].
$$
In this regard, one can simply use the union bound and obtain the following chain of inequalities:
\begin{align}
\mathbb{P}\left(
\bigcup_{k\in[K]}
\xi^{(k)c}_1
\cup
\xi_2
\cup
\xi_3
\cup
\xi_4
\right)
\leq
\sum_{k\in[K]}
\mathbb{P}\left(\xi^{(k)c}_1\right)
+
\sum_{i=2}^{4}
\mathbb{P}\left(\xi^{c}_i\right)
=
\frac{K\delta}{K+3}+3\frac{\delta}{K+3}=\delta.
\end{align}
This means the bound in the statement of theorem holds with a probability at least $1-\delta$, and thus completes the proof.

Finally, let us derive an upper-bound for $\Lambda$, a.k.a., the density ratio bound. According to the definition of $f$-divergence, we have:
\begin{align}
\mathcal{D}_f\left(\mu'\Vert\mu\right)=
\mathbb{E}_{P\sim\mu}\left[f\left(
\frac{\mathrm{d}\mu'}{\mathrm{d}\mu}(P)
\right)\right]
\leq\varepsilon.
\end{align}
Assume there exists a region in the distributional space $\mathcal{S}\subseteq\MZ$, where for all $P\in\mathcal{S}$ the density ratio ${\mathrm{d}\mu'}/{\mathrm{d}\mu}(P)$ is at least $\Lambda$, for some $\Lambda\ge 1$. Then, we have
$\mathcal{D}_f\left(\mu'\Vert\mu\right)
\ge \mu\left(\mathcal{S}\right)f\left(\Lambda\right)$. Setting $\mu\left(\mathcal{S}\right)$ equal to $\delta$ (our high-probability error margin in this problem), we get
$$
f\left(\Lambda\right)\leq\varepsilon/\delta,
$$
which means $\Lambda\leq f^{-1}\left(\varepsilon/\delta\right)$. On the other hand, we already know that $f$ is a convex function (see Definition \ref{def:fDivMain}) which means $f^{-1}$ must be concave. Additionally, we must have $f(1)=0$. Therefore, the following inequality holds:
\begin{align}
\Lambda
\leq
f^{-1}\left(\varepsilon/\delta\right)
\leq 1+\left(\delta^{-1}\left[f^{-1}\right]'(0)\right)\varepsilon
\triangleq 1+\kappa\varepsilon,
\end{align}
where $\kappa$ does not depend on $\varepsilon$.
\end{proof}

\begin{proof}[Proof of Theorem \ref{thm:mainRiskDistThm}]
Similar to the proof of Theorem \ref{thm:mainfDivThm}, we begin by noting that, due to McDiarmid's inequality, for any $\delta > 0$, with probability at least $1 - \frac{K\delta}{K + 2}$, the following set of inequalities holds simultaneously for all $k \in [K]$:
\begin{align}
\widehat{\mathsf{QV}}\left(h\right)
\leq
\mathsf{QV}\left(h\right) + \sqrt{\frac{\log\left(\frac{K+2}{\delta}\right)}{2n_k}}.
\end{align}
Next, it can be readily verified that:
\begin{align}
\mathbbm{1}\left(
\mathsf{QV}\left(h\right)\ge\lambda
\right)
\leq
\mathbbm{1}\left(
\widehat{\mathsf{QV}}\left(h\right)\ge
\lambda- 
\sqrt{\frac{\log\left(\frac{K+2}{\delta}\right)}{2n_k}}
\right),\quad
\forall k\in[K].
\end{align}
Additionally, note that:
\begin{align}
\mathbb{E}_{P\sim\mu'}\left[
\mathbbm{1}\left(
\mathsf{QV}\left(h\right)\ge\lambda
\right)
\right]
=
\mu'\left(
\mathsf{QV}\left(h\right)\ge\lambda
\right).
\end{align}
The remainder of the proof simply involves applying the result of Lemma \ref{lemma:RobustDKWInq} with a maximum error probability of $\frac{2\delta}{K + 2}$. This concludes the proof.
\end{proof}

\section{Proofs of the Statements in Section \ref{sec:main:wass}}
\label{sec:app:proofThms:wass}

\begin{proof}[Proof of Theorem \ref{thm:wass_main}]
Proof consists of two parts:
\begin{itemize}
\item
Proving the statement of theorem for the statistical case, where $\min_{k\in[K]}n_k\rightarrow\infty$ and thus we have $\widehat{\mathsf{QV}}_k(h,\rho)={\mathsf{QV}}_k(h,\rho)$ for all $h\in\mathcal{H}$, $\rho\ge0$ and $k\in[K]$.

\item 
Replacing the statistically exact adversarial loss ${\mathsf{QV}}_k(h,\rho)$ which is based on the unknown distribution sample $P_k$ with its empirically calculated counterpart $\widehat{\mathsf{QV}}_k(h,\rho)$ which is computed based on the known (yet private) distribution $\widehat{P}_k$ for all $k\in[K]$. This part of the proof requires establishing a uniform convergence bound over all values of $\rho\ge0$.
\end{itemize}

\paragraph{Part I}

The core mathematical tool used throughout the proof is the following duality result from \cite{sinha2018certifying} (originally derived in \cite{blanchet2019quantifying}) which works for general Wasserstein-constrained optimization problems:
\begin{lemma}[Proposition 1 of \citet{sinha2018certifying}]
\label{lemma:dualitySinhaetal}
Let $P$ be a probability measure defined over a measurable space $\Omega$, $\ell(\cdot):\Omega\rightarrow\mathbb{R}$ be any loss function, $c$ denote a proper and lower semi-continuous transportation cost on $\Omega\times\Omega$, and assume $\varepsilon\ge0$. Then, the following equality holds for the Wasserstein-constrained DRO around $P$:
\begin{align}
\sup_{Q\in\mathcal{B}^{\mathrm{wass}}_{\varepsilon}\left(P\right)}
\mathbb{E}_{Q}\left[\ell\left(\boldZ\right)\right]
=
\inf_{\gamma\ge0}\left\{
\gamma\varepsilon+
\mathbb{E}_P\left[
\sup_{\boldZ'\in\Omega}\ell\left(\boldZ'\right)-\gamma c\left(\boldZ',\boldZ\right)
\right]
\right\}.
\end{align}
\end{lemma}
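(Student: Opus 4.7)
The plan is to establish Wasserstein DRO duality via Lagrangian reasoning on the coupling representation of the ball. First, I would rewrite the left-hand side as a single optimization over joint measures on $\Omega \times \Omega$. Every $Q \in \mathcal{B}^{\mathrm{wass}}_\varepsilon(P)$ is the second marginal of some coupling $\pi$ with $P$ of cost at most $\varepsilon$, and conversely the second marginal of any such $\pi$ lies in the ball. Writing $(\boldZ,\boldZ')$ for the two coordinates, this gives
$$\sup_{Q \in \mathcal{B}^{\mathrm{wass}}_\varepsilon(P)} \mathbb{E}_Q[\ell(\boldZ)] \;=\; \sup_{\pi \in \Pi(P),\, \mathbb{E}_\pi[c(\boldZ',\boldZ)] \le \varepsilon} \mathbb{E}_\pi[\ell(\boldZ')],$$
where $\Pi(P)$ is the set of joint measures on $\Omega \times \Omega$ whose first marginal equals $P$.

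Next, I would attach a nonnegative Lagrange multiplier $\gamma$ to the Wasserstein constraint. Weak duality yields
$$\mathrm{LHS} \;\le\; \inf_{\gamma \ge 0} \sup_{\pi \in \Pi(P)} \Bigl\{ \mathbb{E}_\pi[\ell(\boldZ') - \gamma c(\boldZ',\boldZ)] + \gamma \varepsilon \Bigr\}.$$
The inner supremum decouples pointwise: disintegrating $\pi(d\boldZ, d\boldZ') = P(d\boldZ)\,\kappa(d\boldZ' \mid \boldZ)$, the kernel $\kappa(\cdot \mid \boldZ)$ ranges freely over probability measures on $\Omega$ for each $\boldZ$. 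Concentrating its conditional mass at a maximizer of $\boldZ' \mapsto \ell(\boldZ') - \gamma c(\boldZ',\boldZ)$ yields
$$\sup_{\pi \in \Pi(P)} \mathbb{E}_\pi[\ell(\boldZ') - \gamma c(\boldZ',\boldZ)] \;=\; \mathbb{E}_P\Bigl[ \sup_{\boldZ' \in \Omega}\bigl\{\ell(\boldZ') - \gamma c(\boldZ',\boldZ)\bigr\} \Bigr].$$
Lower semi-continuity of $c$ and measurability of $\ell$ ensure the pointwise supremum is upper semi-analytic and hence universally measurable, so the outer expectation is well defined. Combined with weak duality, this is the $\le$ direction of the claimed identity.

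The main obstacle is proving the reverse inequality, i.e., strong duality. I would apply Sion's minimax theorem to the saddle function $(\pi,\gamma) \mapsto \mathbb{E}_\pi[\ell(\boldZ') - \gamma c(\boldZ',\boldZ)] + \gamma\varepsilon$, which is linear (hence concave) in $\pi$ on the convex set $\Pi(P) \cap \{\mathbb{E}_\pi[c] \le \varepsilon\}$ and affine (hence convex) in $\gamma \ge 0$; the required weak-topology compactness follows from tightness of couplings with fixed marginal $P$ combined with the cost bound. A more constructive alternative is to pick an optimal dual $\gamma^* \ge 0$ from the KKT conditions of the convex dual, invoke a Kuratowski--Ryll-Nardzewski measurable selection theorem to obtain a selector $z \mapsto z^*(z) \in \argmax_{z'}\{\ell(z') - \gamma^* c(z',z)\}$, and take $\pi^*$ to be the law of $(\boldZ, z^*(\boldZ))$ under $P$; verifying primal-dual optimality then closes the gap. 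The delicate technical points are (i) existence of an optimal $\gamma^* \in [0,\infty)$, which follows from convexity of the dual and coercivity forced by $\varepsilon > 0$, and (ii) measurability and integrability of the pointwise supremum when $\ell$ is unbounded or $\Omega$ non-compact, handled by standard truncation exactly as in \citet{blanchet2019quantifying}.
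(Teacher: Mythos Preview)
The paper does not actually prove this lemma: immediately after the statement it writes ``Proof can be found inside the reference,'' citing \citet{sinha2018certifying} and noting that \citet{blanchet2019quantifying} and others give alternative derivations. Your sketch---rewriting the primal as an optimization over couplings with first marginal $P$, Lagrangian relaxation of the cost constraint, pointwise disintegration to evaluate the inner supremum, and then closing the duality gap via either a minimax argument or measurable selection plus KKT---is precisely the route taken in those references, so your proposal is correct and matches the approach the paper defers to.

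One small wobble: in your strong-duality paragraph you describe applying Sion's theorem on the set $\Pi(P)\cap\{\mathbb{E}_\pi[c]\le\varepsilon\}$, but at that point the constraint has already been dualized and the relevant domain for $\pi$ is all of $\Pi(P)$; moreover, $\Pi(P)$ with only one marginal fixed is not tight without further structure on $\Omega$ or coercivity of $c$, so the compactness you invoke needs the Polish-space and lower-semicontinuity hypotheses made explicit. The constructive alternative you mention (optimal $\gamma^*$ plus a measurable selector of pointwise maximizers, as in \citet{blanchet2019quantifying}) is the cleaner path and is exactly what those references do.
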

Proof can be found inside the reference. Also, \cite{blanchet2019quantifying} and \cite{zhang2024short} along with several other papers have theoretically analyzed alternative proofs. Based on the duality formulation in Lemma \ref{lemma:dualitySinhaetal}, and considering the fact that meta-distribution $\mu$ is also a ``distribution" over the measurable space $\mathcal{M}$, one can rewrite the original Wasserstein-constrained MDRO in the statement of the theorem in its dual form:
\begin{align}
\sup_{\mu'\in\Gepsmu}&
\mathbb{E}_{P\sim\mu'}\left[
\mathbb{E}_P\left[\loss\right]
\right]
\nonumber\\
&=
\inf_{\gamma\ge0}\left\{
\gamma\varepsilon+
\mathbb{E}_{P\sim\mu}\left[
\sup_{Q}~
\mathbb{E}_Q\left[\loss\right]
-\gamma
\wassdist\left(P,Q\right)
\right]
\right\}
\nonumber\\
&=
\inf_{\gamma\ge0}\left\{
\mathbb{E}_{\mu}\left[
\sup_{Q}~
\mathbb{E}_Q\left[\loss\right]
-\gamma\left(
\wassdist\left(P,Q\right)
-\varepsilon\right)
\right]
\right\}.
\label{eq:thm1:maindual}
\end{align}
The main advantage achieved by this reformulation is the substitution of $\mu'$ with the fixed meta-distribution $\mu$ inside the expectation operators. Therefore, the optimization no longer has to be carried out in the $\mathcal{M}^2$ space. For the sake of simplicity in the proof, assume supreme value in \eqref{eq:thm1:maindual} is attainable. This assumption is not necessary, and can be relaxed by using a more detailed mathematical analysis which is replacing the optimal distribution $Q^*$ with a Cauchy series of distributions and proceed with similar arguments. However, we have decided to avoid this scenario in order to simplify the proof. In this regard, let us define:
\begin{align}
Q^*\left(P,\gamma;\varepsilon\right)
\triangleq
\argmax_{Q}~
\mathbb{E}_Q\left[\loss\right]
-\gamma\left(
\wassdist\left(P,Q\right)-
\varepsilon\right)
,\quad
\forall 
P\in\mathrm{supp}\left(\mu\right).
\end{align}
Then, the following relation holds:
\begin{align}
\sup_{\mu'\in\Gepsmu}
\mathbb{E}_{P\sim\mu'}\left[
\mathbb{E}_P\left[\loss\right]
\right]
&=
\inf_{\gamma\ge0}\left\{
\mathbb{E}_{\mu}\left[
\mathbb{E}_{Q^*\left(P,\gamma\right)}\left[\loss\right]
-\gamma\left(
\wassdist\left(P,Q^*\left(P,\gamma\right)\right)
-\varepsilon\right)
\right]
\right\}
\nonumber\\
&=
\inf_{\gamma\ge0}\left\{
\mathbb{E}_{P\sim\mu}\left[
\mathbb{E}_{Q^*\left(P,\gamma\right)}\left[\loss\right]
\right]
-
\right.
\nonumber\\
&\hspace{12mm}
\left.
\gamma
\mathbb{E}_{P\sim\mu}
\left[
\wassdist\left(P,Q^*\left(P,\gamma\right)\right)
-\varepsilon\right]
\right\}.
\end{align}
which, can be simply rewritten as:
\begin{align}
\sup_{\mu'\in\Gepsmu}
\mathbb{E}_{P\sim\mu'}\left[
\mathbb{E}_P\left[\loss\right]
\right]
=&
\inf_{\gamma\ge0}~
\mathbb{E}_{\mu}\left[
\mathbb{E}_{Q^*\left(P,\gamma\right)}\left[\loss\right]
\right]
\nonumber\\
&~\mathrm{subject~to}\quad
\mathbb{E}_{P\sim\mu}
\left[
\wassdist\left(P,Q^*\left(P,\gamma\right)\right)
\right]\leq\varepsilon.
\label{eq:mainWassThm:infGamma}
\end{align}
Using a similar argument as before, let us assume the $\inf_{\gamma\ge0}$ in \eqref{eq:mainWassThm:infGamma} is also attainable and denote the optimal value by $\gamma^*=\gamma^*\left(\mu,\varepsilon\right)$. Once again, this assumption is not necessary and can be relaxed at the expense of introducing more mathematical details and making the proof less readable. In this regard, we have:
\begin{align}
\sup_{\mu'\in\Gepsmu}
\mathbb{E}_{P\sim\mu'}\left[
\mathbb{E}_P\left[\loss\right]
\right]
=
\mathbb{E}_{\mu}\left[
\mathbb{E}_{Q^*\left(P,\gamma^*\right)}\left[\loss\right]
\right],
\end{align}
where it has been already guaranteed that the optimal parameter $\gamma^*\ge0$ and optimal distribution $Q^*\left(P,\gamma^*\right)$, the following constraint holds:
\begin{equation}
\mathbb{E}_{P\sim\mu}
\left[
\wassdist\left(P,Q^*\left(P,\gamma^*\right)\right)
\right]\leq\varepsilon.
\label{eq:wassMainThm:conditionOnOptRho}
\end{equation}
For any $P\in\mathrm{supp}\left(\mu\right)\subseteq\mathcal{M}$, let us define the following {\it {optimal robustness radius}} function
\begin{equation}
\rho^*\left(P;\varepsilon,\mu\right)
\triangleq
\wassdist\left(P,
Q^*\left(P,\gamma^*\right)
\right).
\end{equation}
Therefore, the original MDRO objective in the statement of the theorem can be readily upper-bounded using the following distributionally robust formulation:
\begin{align}
\sup_{\mu'\in\Gepsmu}
\mathbb{E}_{P\sim\mu'}\left[
\mathbb{E}_P\left[\loss\right]
\right]
&=
\mathbb{E}_{P\sim\mu}\left[
\mathbb{E}_{Q^*\left(P,\gamma^*\right)}\left[\loss\right]
\right]
\nonumber\\
&\leq
\mathbb{E}_{P\sim\mu}\left[
\sup_{Q\in\mathcal{B}_{\rho^*\left(P\right)}^{\mathrm{wass}}
\left(P\right)}
~
\mathbb{E}_{Q}\left[\loss\right]
\right].
\label{eq:wassMainThm:EqBeforeEmpirical}
\end{align}

Using the upper-bound in \eqref{eq:wassMainThm:EqBeforeEmpirical} and the inequality condition on optimal Wasserstein radius functions $\rho^*(P)$ described in \eqref{eq:wassMainThm:conditionOnOptRho}, we can proceed to the empirical stage of the proof. At this stage, the true expectation operators should be replaced by their empirical counterparts which are based on i.i.d. realizations of meta-distribution $\mu$, i.e., unknown distributions $P_1,\ldots,P_K$ and their known yet private empirical realizations, i.e., $\widehat{P}_i$ for $i\in[K]$.

For $P\sim\mu$, let us define the following new and real-valued random variables $\psi(P)$ and $\zeta(P)$ as follows:
\begin{align}
\psi(P)
&\triangleq
\sup_{Q\in\mathcal{B}_{\rho^*\left(P\right)}^{\mathrm{wass}}
\left(P\right)}
~
\mathbb{E}_{Q}\left[\loss\right],
\nonumber\\
\zeta(P)
&\triangleq
\rho^*\left(P;\varepsilon,\mu\right).
\end{align}
It should be noted that $\psi(P)$ is readily known to be (almost surely) bounded by $1$, since $\ell(\cdot)$ is assumed to be $1$-bounded. Additionally, the boundedness for $\zeta(P)$ directly results from the assumption that $c$ is a bounded transportation cost.

\begin{lemma}
\label{lemma:simpleLemmaBoundedness}
There exists $R<+\infty$ such that
We have $\rho^*(P;\mu,\varepsilon)\stackrel{a.s.}{<}R$ for $P\sim\mu$.
\end{lemma}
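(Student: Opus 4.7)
The plan is to use the boundedness of the transportation cost $c$ that is assumed both in the statement of Theorem \ref{thm:wass_main} and in Definition \ref{def:wassDistance}. Since every relevant quantity is ultimately controlled by $c$, no stochastic argument should be needed: the bound will in fact be deterministic, which trivially yields the almost-sure claim.

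The first step is to invoke the assumption that $c$ is bounded on $\mathcal{Z}\times\mathcal{Z}$ and set
\begin{equation}
M \triangleq \sup_{\boldsymbol{Z},\boldsymbol{Z}'\in\mathcal{Z}} c(\boldsymbol{Z},\boldsymbol{Z}') < \infty.
\end{equation}
The second step is to note that for any two measures $P,Q\in\mathcal{M}(\mathcal{Z})$ and any coupling $\nu\in\mathcal{C}(P,Q)$, one has $\mathbb{E}_{(\boldsymbol{Z},\boldsymbol{Z}')\sim\nu}[c(\boldsymbol{Z},\boldsymbol{Z}')]\leq M$. Taking the infimum over couplings gives the deterministic bound $\mathcal{W}_c(P,Q)\leq M$ for every pair $P,Q\in\mathcal{M}$.

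Applying this bound to the particular pair $P$ and $Q^*(P,\gamma^*)$ yields
\begin{equation}
\rho^*(P;\varepsilon,\mu) = \mathcal{W}_c\bigl(P, Q^*(P,\gamma^*)\bigr) \leq M
\end{equation}
for every $P\in\mathrm{supp}(\mu)$, and hence almost surely when $P\sim\mu$. Choosing $R\triangleq M+1$ (or any finite constant strictly larger than $M$) gives the desired $\rho^*(P;\mu,\varepsilon)\stackrel{a.s.}{<}R$.

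I do not foresee any obstacle: the lemma is essentially a sanity check that the optimal Wasserstein radius cannot explode, and this follows immediately from the standing boundedness hypothesis on $c$. The only minor subtlety is that the infimum defining $\mathcal{W}_c$ might not be attained, but since we are only bounding it from above by the pointwise bound on $c$ through an arbitrary coupling, attainment is irrelevant.
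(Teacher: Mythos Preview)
Your proposal is correct and follows essentially the same argument as the paper: both bound $\rho^*(P;\varepsilon,\mu)=\mathcal{W}_c(P,Q^*(P,\gamma^*))$ above by $\sup_{\boldsymbol{Z},\boldsymbol{Z}'\in\mathcal{Z}}c(\boldsymbol{Z},\boldsymbol{Z}')$, using that any coupling in the infimum defining $\mathcal{W}_c$ already satisfies this pointwise bound. Your version is in fact slightly more explicit about why the bound is deterministic and why attainment of the infimum is irrelevant, but the underlying idea is identical.
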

\begin{proof}
The proof is straightforward and directly results from the definition of Wasserstein distance:
\begin{align}
\zeta(P)\triangleq\rho^*(P;\varepsilon,\mu)
&=\mathcal{W}_c\left(P,Q^*\left(P,\gamma^*\right)\right)
\nonumber\\
&=\inf_{\nu\in\mathcal{C}\left(P,Q^*\right)}~\mathbb{E}_{\nu}\left[
c\left(\boldZ,\boldZ'\right)
\right]
\nonumber\\
&\stackrel{a.s.}{\leq}
\sup_{\boldZ,\boldZ'\in\mathcal{Z}}~c\left(\boldZ,\boldZ'\right)
<+\infty,
\end{align}
which concludes the proof.
\end{proof}
Using a similar series of arguments to the ones explained in Lemmas \ref{lemma:RatioConcent1} and \ref{lemma:fDivConcent1} (proof of Theorem \ref{thm:mainfDivThm}), together with the fact that $\psi\left(P_k\right)$s are all bounded by $1$, one can directly apply the McDiarmid's inequality and show that the following bound holds with probability at least $1-\frac{\delta}{K+2}$, for any $\delta>0$:
\begin{align}
\mathbb{E}_{P\sim\mu}\left[
\sup_{Q\in\mathcal{B}_{\rho^*\left(P\right)}^{\mathrm{wass}}
\left(P\right)}
~
\mathbb{E}_{Q}\left[\loss\right]
\right]
\leq
\frac{1}{K}\sum_{k\in[K]}
\sup_{Q\in\mathcal{B}_{\rho^*\left(P_k\right)}^{\mathrm{wass}}
\left(P_k\right)}
~
\mathbb{E}_{Q}\left[\loss\right]
+
\sqrt{\frac{\log\left(\frac{K+2}{\delta}\right)}{2K}}.
\label{eq:wassThm:mainObjBound}
\end{align}
On the other hand, by using the boundedness property for $\zeta(P)$ proved in Lemma \ref{lemma:simpleLemmaBoundedness} and applying McDiarmid's inequality once again, the following bound holds with probability $1-\frac{\delta}{K+2}$ (for any $\delta>0$) for the empirical mean of $\zeta(P)$ over true sample distributions $P_1,\ldots,P_k$:
\begin{align}
\frac{1}{K}
\sum_{k\in[K]}
\zeta\left(P_k\right)
\leq
\mathbb{E}_{P\sim\mu}\left[\zeta(P)\right]
+
c_1
\sqrt{\frac{\log\left(\frac{K+2}{\delta}\right)}{K}}
\leq
\varepsilon
+c_1
\sqrt{\frac{\log\left(\frac{K+2}{\delta}\right)}{K}},
\end{align}
where $c_1$ is a known universal constant depending only on the bound on transportation cost $c$. Here, the last inequality is a direct consequence of the property shown in \eqref{eq:wassMainThm:conditionOnOptRho}.

Let $\mathcal{S}\subset\mathbb{R}^K_{\ge0}$ be defined as the following subset:
\begin{align}
\mathcal{S}\triangleq\left\{\left(\zeta_1,\ldots,\zeta_K\right)\in\mathbb{R}^K\bigg\vert~
\zeta_k\ge0,~\forall k\in[K],~\frac{1}{K}\sum_{k\in[K]}\zeta_k
\leq
\varepsilon+
c_1\sqrt{\frac{\log\left(\frac{K+2}{\delta}\right)}{K}}
\right\}.
\end{align}
So far, we have shown that 
\begin{equation}
\label{eq:wassMainThm:addedAtTheEnd1}
\mathbb{P}\left(\left\{\zeta\left(P_k\right)\right\}_{k\in[K]}\in\mathcal{S}\right)\ge1-\frac{\delta}{K+2}.
\end{equation}
In a similar procedure to the one used in the proof of Theorem \ref{thm:mainfDivThm}, union bound ensures that the bound in \eqref{eq:wassThm:mainObjBound} and the mathematical statement of $\left\{\zeta\left(P_k\right)\right\}_{k\in[K]}\in\mathcal{S}$ simultaneously hold with probability at least $1-\frac{2\delta}{K+2}$. Then the following chain of bounds also hold with the same probability w.r.t. drawing of $P_1,\ldots,P_K$ from $\mu$:
\begin{align}
\sup_{\mu'\in\Gepsmu}
\mathbb{E}_{P\sim\mu'}\left[
\mathbb{E}_P\left[\loss\right]
\right]
&\leq
\mathbb{E}_{P\sim\mu}\left[
\sup_{Q\in\mathcal{B}_{\rho^*\left(P\right)}^{\mathrm{wass}}
\left(P\right)}
~
\mathbb{E}_{Q}\left[\loss\right]
\right]
\nonumber\\
&\leq
\frac{1}{K}\sum_{k\in[K]}
\sup_{Q\in\mathcal{B}_{\rho^*\left(P_k\right)}^{\mathrm{wass}}
\left(P_k\right)}
~
\mathbb{E}_{Q}\left[\loss\right]
+
\sqrt{\frac{\log\left(\frac{K+2}{\delta}\right)}{2K}}
\nonumber\\
&\leq
\sup_{\underline{\rho}\in\mathcal{S}}~
\frac{1}{K}\sum_{k\in[K]}
\sup_{Q\in\mathcal{B}_{\rho_k}^{\mathrm{wass}}
\left(P_k\right)}
~
\mathbb{E}_{Q}\left[\loss\right]
+
\sqrt{\frac{\log\left(\frac{K+2}{\delta}\right)}{2K}}
\nonumber\\
&=
\sup_{\underline{\rho}\in\mathcal{S}}~
\frac{1}{K}\sum_{k\in[K]}
\mathsf{QV}_k\left(h,\rho_k\right)
+
\sqrt{\frac{\log\left(\frac{K+2}{\delta}\right)}{2K}}.
\label{eq:wassMainThm:maxWRTsetS}
\end{align}
For reasons that become clear in the final stages of the proof, we need to replace the set $\mathcal{S}$ with a new one denoted by $\mathcal{S}'$ which should be defined as:
\begin{align}
\mathcal{S}'\triangleq\left\{\left(\zeta_1,\ldots,\zeta_K\right)\in\mathbb{R}^K\bigg\vert~
\zeta_k\ge\frac{\varepsilon}{K},~\forall k\in[K],~\frac{1}{K}\sum_{k\in[K]}\zeta_k\leq\varepsilon\left(1+
\frac{1}{K}\right)+
c_1\sqrt{\frac{\log\left(\frac{K+2}{\delta}\right)}{2K}}
\right\}.
\nonumber
\end{align}
Evidently, replacing $\mathcal{S}'$ with $\mathcal{S}$ in the maximization step of \eqref{eq:wassMainThm:maxWRTsetS}, i.e.,  $\sup_{\underline{\rho}\in\mathcal{S}'}$, gives an upper bound for the original formulation of $\sup_{\underline{\rho}\in\mathcal{S}}$, since each member of $\mathcal{S}'$ can be formed by taking a member from $\mathcal{S}$ and add all radius values by a constant $\varepsilon/K$. Obviously, this procedure only makes the adversarial loss value larger and hence all the bounds still apply.

\paragraph{Part II:}

So far, we have managed to (partially) prove the proposed bound in the statement of the theorem in scenarios where $\min_k~n_k\rightarrow\infty$ and thus we have $\widehat{P}_k\stackrel{a.s.}{=}P_k$ for all $k\in[K]$. At this stage of the proof we focus on replacing
$$
\mathsf{QV}_k\left(h,\rho\right)=
\sup_{Q\in\mathcal{B}_{\rho_k}^{\mathrm{wass}}
\left(P_k\right)}
~
\mathbb{E}_{Q}\left[\loss\right],
$$
for any $k\in[K]$ and arbitrary $\rho\ge0$, with its empirical version $\widehat{\mathsf{QV}}_k\left(h,\rho\right)$. Let us reiterate that we do not have any knowledge regarding $P_k$, and only client $k$ has access to its empirical version $\widehat{P}_k$ which is based on $n_k$ i.i.d. samples. Therefore, $\widehat{\mathsf{QV}}_k\left(h,\rho\right)$ is computable via the querying policy described in Section \ref{sec:problemDef}, while the true query value $\mathsf{QV}(h,\rho)$ is always unknown.

To this aim, similar to \cite{sinha2018certifying} first let us define the following $\phi_{\gamma}\left(\boldZ\right)$ function for $\gamma\ge0$ and $\boldZ\in\mathcal{Z}$:
\begin{align}
\phi_{\gamma}\left(\boldZ\right)\triangleq
\sup_{\boldZ'\in\mathcal{Z}}~
\ell\left(\boldZ'\right)-\gamma c\left(\boldZ',\boldZ\right),
\label{eq:wassThm:PhiGammDef}
\end{align}
where $c(\cdot,\cdot)$ is the original transportation cost and $\ell\left(\boldZ\right)$ abbreviates $\ell\left(y,h\left(\boldX\right)\right)$ where we have omitted $h$ for simplicity in notation. First, it can readily verified that if $\ell$ is bounded between $0$ and $1$, so does $\phi_{\gamma}$ for any $\gamma\ge0$. Second, note that from Lemma \ref{lemma:dualitySinhaetal} we have the following duality formulation for $\mathsf{QV}_k$ and $\widehat{\mathsf{QV}}_k$ for any $\rho_k\ge0$:
\begin{align}
\label{eq:wassThm:QVbasedonPhiGamma}
\mathsf{QV}_k\left(h,\rho_k\right)&\triangleq
\sup_{Q\in\mathcal{B}_{\rho_k}^{\mathrm{wass}}
\left(P_k\right)}
~
\mathbb{E}_{Q}\left[\loss\right]
=
\inf_{\gamma\ge0}\left\{
\gamma\rho_k+
\mathbb{E}_{P_k}\left[
\phi_{\gamma}\left(\boldZ\right)
\right]
\right\},
\\
\widehat{\mathsf{QV}}_k\left(h,\rho_k\right)&\triangleq
\sup_{Q\in\mathcal{B}_{\rho_k}^{\mathrm{wass}}
\left(\widehat{P}_k\right)}
~
\mathbb{E}_{Q}\left[\loss\right]
=
\inf_{\gamma\ge0}\left\{
\gamma\rho_k+
\frac{1}{n_k}\sum_{i\in[n_k]}
\phi_{\gamma}\left(\boldZ^{(k)}_i\right)
\right\}.
\nonumber
\end{align}
In the following, first we show that $\phi_{\gamma}\left(\boldZ\right)$, for any $\boldZ\in\mathcal{Z}$ is a Lipschitz function with respect to $\gamma$ where the Lipschitz constant only depends on the way the transportation cost $c$ is bounded, i.e., the inherent boundedness of $c$ or the compactness of $\mathcal{Z}$. Additionally, we show that the optimal $\gamma\ge0$ in both minimization problems on the right-hand sides of \eqref{eq:wassThm:QVbasedonPhiGamma} is bounded by a known constant. The latter result is deduced from the fact that all robustness radii $\rho_k,~k\in[K]$ in the statement of theorem has a known margin from zero. Finally, we show the above-mentioned properties can guarantee that $\mathbb{E}_{\widehat{P}_k}\left[\phi_{\gamma}\left(\boldZ\right)\right]$ uniformly converges to its true expected value 
$\mathbb{E}_{P_k}\left[\phi_{\gamma}\left(\boldZ\right)\right]$ for all relevant value of $\gamma\ge0$. Hence, the empirical and statistical query values are always within a controlled and asymptotically small deviation from each other regardless of the robustness radius value $\rho_k$.

In this regard, the following lemma shows that $\phi_{\gamma}\left(\boldZ\right)$ for any $\boldZ\in\mathcal{Z}$ is a Lipschitz function with respect to $\gamma\ge0$:
\begin{lemma}
\label{lemma:wassMainThm:PhiGammaLipschitz}
There exists a constant $R\ge0$ which only depends on transportation cost $c$ such that function $\phi_{\gamma}\left(\boldZ\right)$ is $R$-Lipschitz with respect to $\gamma\ge0$, for all $\boldZ\in\mathcal{Z}$.
\end{lemma}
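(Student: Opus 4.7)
The plan is to exploit the fact that, for each fixed $\boldZ \in \mathcal{Z}$, the map $\gamma \mapsto \phi_\gamma(\boldZ)$ is the pointwise supremum (over $\boldZ' \in \mathcal{Z}$) of a family of affine functions of $\gamma$, namely $\gamma \mapsto \ell(\boldZ') - \gamma c(\boldZ', \boldZ)$, each of which has slope $-c(\boldZ', \boldZ)$. Since the transportation cost $c$ has been assumed bounded (this is the ``bounded and proper'' hypothesis inherited from Definition \ref{def:wassDistance} and the statement of Theorem \ref{thm:wass_main}), all these slopes are uniformly bounded by some constant $R \triangleq \sup_{\boldZ_1, \boldZ_2 \in \mathcal{Z}} c(\boldZ_1, \boldZ_2) < \infty$. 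A supremum of $R$-Lipschitz affine functions is itself $R$-Lipschitz, which is precisely what the lemma claims.

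Concretely, I would carry out the following two short steps. First, fix an arbitrary $\boldZ \in \mathcal{Z}$ and two nonnegative scalars $\gamma_1, \gamma_2 \ge 0$, and apply the elementary inequality $|\sup_x f(x) - \sup_x g(x)| \le \sup_x |f(x) - g(x)|$ to the two suprema defining $\phi_{\gamma_1}(\boldZ)$ and $\phi_{\gamma_2}(\boldZ)$. The $\ell(\boldZ')$ terms cancel pointwise, leaving
\[
\bigl|\phi_{\gamma_1}(\boldZ) - \phi_{\gamma_2}(\boldZ)\bigr|
~\le~ \sup_{\boldZ' \in \mathcal{Z}}\bigl|(\gamma_2 - \gamma_1)\, c(\boldZ', \boldZ)\bigr|
~=~ |\gamma_1 - \gamma_2| \cdot \sup_{\boldZ' \in \mathcal{Z}} c(\boldZ', \boldZ).
\]
Second, bound the remaining supremum by $R$ using the assumed boundedness of $c$, yielding the $R$-Lipschitz estimate. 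Note that $R$ depends only on $c$ (not on $\ell$, $h$, $\boldZ$, or $\gamma$), which matches the statement of the lemma.

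There is essentially no hard step here; the only point that deserves a brief sentence of justification is why the constant $R$ is finite. If one prefers not to rely on a global uniform bound of $c$ over $\mathcal{Z}\times\mathcal{Z}$, the same argument goes through whenever the effective domain in the supremum can be restricted to a set on which $c(\cdot,\boldZ)$ is bounded, e.g.\ because $\mathcal{Z}$ is compact and $c$ is continuous, or because the feasibility region in \eqref{eq:wassThm:PhiGammaDef} can be truncated without changing the supremum (as in Lemma \ref{lemma:simpleLemmaBoundedness}). Either way, the resulting constant depends on $c$ alone, which is exactly what the next stages of the proof of Theorem \ref{thm:wass_main} (uniform convergence of $\widehat{\mathsf{QV}}_k(h,\rho)$ to $\mathsf{QV}_k(h,\rho)$ over $\rho$ in a bounded range of $\gamma$) require.
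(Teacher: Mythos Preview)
Your proposal is correct and follows essentially the same approach as the paper: both exploit that $\phi_\gamma(\boldZ)$ is a supremum of affine functions in $\gamma$ with slopes bounded by $R=\sup_{\boldZ,\boldZ'}c(\boldZ',\boldZ)$. Your use of the inequality $|\sup f-\sup g|\le\sup|f-g|$ is in fact slightly cleaner than the paper's version, which introduces explicit maximizers $\boldZ^*_\gamma,\boldZ^*_{\gamma'}$ and derives the two one-sided bounds separately (requiring an extra attainability remark that your argument avoids).
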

\begin{proof}
For any two distinct values $\gamma,\gamma'\ge0$, let $\boldZ^*_{\gamma}$ and $\boldZ^*_{\gamma'}$ denote the optimal values for which the $\sup$ in \eqref{eq:wassThm:PhiGammDef} is attained. Similar to several previous arguments, attainability of the $\sup$ in this case is not necessary again, and thus this assumption is made for the sake of simplifying the proof.

Then, for any $\boldZ\in\mathcal{Z}$ we have:
\begin{align}
\phi_{\gamma}\left(\boldZ\right)
&=
\sup_{\boldZ'\in\mathcal{Z}}~
\ell\left(\boldZ'\right)-\gamma c\left(\boldZ',\boldZ\right)
\ge
\ell\left(\boldZ^*_{\gamma'}\right)-\gamma c\left(\boldZ^*_{\gamma'},\boldZ\right),
\nonumber\\
\phi_{\gamma'}\left(\boldZ\right)
&=
\ell\left(\boldZ^*_{\gamma'}\right)-\gamma' c\left(\boldZ^*_{\gamma'},\boldZ\right),
\nonumber
\end{align}
which directly gives us the following bound:
\begin{align}
\phi_{\gamma}\left(\boldZ\right)-
\phi_{\gamma'}\left(\boldZ\right)
\ge
-\left(\gamma-\gamma'\right)
c\left(\boldZ^*_{\gamma'},\boldZ\right).
\end{align}
Through a set of similar arguments and replacing $\gamma$ and $\gamma'$, the following complementary bound can be achieved as well:
\begin{align}
\phi_{\gamma}\left(\boldZ\right)-
\phi_{\gamma'}\left(\boldZ\right)
\leq
-\left(\gamma-\gamma'\right)
c\left(\boldZ^*_{\gamma},\boldZ\right).
\end{align}
Therefore, the following inequality can be established according to the boundedness of $c$ (or alternatively, compactness of $\mathcal{Z}$):
\begin{align}
\left\vert
\phi_{\gamma}\left(\boldZ\right)-
\phi_{\gamma'}\left(\boldZ\right)
\right\vert
&\leq
\left\vert
\gamma-\gamma'
\right\vert
\max_{r\in\left\{\gamma,\gamma'\right\}}\left\{
c\left(\boldZ^*_{r},\boldZ\right)
\right\}
\nonumber\\
&\leq
\left\vert
\gamma-\gamma'
\right\vert
\sup_{\boldZ'\in\mathcal{Z}}~
c\left(\boldZ',\boldZ\right)
\nonumber\\
&\leq
R
\left\vert
\gamma-\gamma'
\right\vert,
\end{align}
which proves the Lipschitz-ness of $\phi_{\gamma}$ with respect to $\gamma$.
\end{proof}
The following lemma shows that optimal values of $\gamma$ in the right-hand side minimization of \eqref{eq:wassThm:QVbasedonPhiGamma} (or the infimum-achieving sequence in case the infimum is not attainable) is bounded by a known constant:
\begin{lemma}
\label{lemma:wassMainThm:BoundedGamma}
In both minimization problems on the right-hand side of \eqref{eq:wassThm:QVbasedonPhiGamma}, the optimal $\gamma$ value denoted by $\gamma^*$ (if attained), or the tail of its sequence in case the $\inf$ is not attainable, satisfies $0\leq\gamma^*\leq\frac{1}{\rho_k}$.
\end{lemma}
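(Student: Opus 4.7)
The plan is to sandwich the optimal Lagrange multiplier by evaluating the dual objective $G(\gamma) \triangleq \gamma \rho_k + \mathbb{E}_{P_k}\left[\phi_\gamma(\boldZ)\right]$ at $\gamma = 0$ and comparing it against a linear lower envelope that dominates $G$ everywhere on $[0,\infty)$. Only two ingredients will be used: the $[0,1]$-boundedness of the loss $\ell$, and the fact that $c$ is a proper (hence $c(\boldZ,\boldZ)=0$ and $c \geq 0$) transportation cost. The argument is symmetric under replacing $P_k$ by its empirical counterpart $\widehat{P}_k$, so both dual problems in \eqref{eq:wassThm:QVbasedonPhiGamma} will be handled simultaneously.

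First, I would upper-bound the infimum by plugging in $\gamma = 0$: since $\phi_0(\boldZ) = \sup_{\boldZ' \in \mathcal{Z}} \ell(\boldZ') \leq 1$ by the assumed $1$-boundedness of $\ell$, this gives $G(0) = \mathbb{E}_{P_k}[\phi_0(\boldZ)] \leq 1$ and hence $\inf_{\gamma \geq 0} G(\gamma) \leq 1$. Next, for arbitrary $\gamma \geq 0$, I would insert the feasible point $\boldZ' = \boldZ$ into the supremum defining $\phi_\gamma$ in \eqref{eq:wassThm:PhiGammDef}, obtaining $\phi_\gamma(\boldZ) \geq \ell(\boldZ) - \gamma\, c(\boldZ,\boldZ) = \ell(\boldZ) \geq 0$ by properness of $c$ and nonnegativity of $\ell$. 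Taking expectation yields the linear lower envelope $G(\gamma) \geq \gamma \rho_k$. Thus whenever $\gamma > 1/\rho_k$, we have $G(\gamma) > 1 \geq G(0)$, so no such $\gamma$ can appear in the tail of any minimizing sequence.

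Combining the two bounds shows that every infimum-approaching sequence $\{\gamma_n\}$ must eventually lie in $[0, 1/\rho_k]$; if the infimum is attained then $\gamma^* \in [0, 1/\rho_k]$, and otherwise the tail of the minimizing sequence does. The lower bound $\gamma^* \geq 0$ is already built into the constraint set. The identical pigeonhole goes through verbatim with $\mathbb{E}_{P_k}$ replaced by the empirical average $\tfrac{1}{n_k}\sum_{i\in[n_k]}$, yielding the claim for $\widehat{\mathsf{QV}}_k$ as well. I do not anticipate any real obstacle here: the argument is essentially a one-line comparison between the flat upper envelope $G(0) \leq 1$ and the linear lower envelope $G(\gamma) \geq \gamma \rho_k$. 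The only point requiring care is confirming the paper's conventions that a proper transport cost satisfies $c(\boldZ,\boldZ) = 0$ and that the loss is nonnegative, both of which are standard in the setup preceding Theorem \ref{thm:wass_main}.
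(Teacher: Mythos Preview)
Your proposal is correct and follows essentially the same approach as the paper: the paper's proof also compares the value at $\gamma=0$ (at most $1$) against the linear term $\gamma\rho_k$ plus a nonnegative expectation, concluding that any $\gamma>1/\rho_k$ yields objective value strictly above $1$. Your version is in fact slightly more explicit, since you justify $\phi_\gamma(\boldZ)\geq 0$ via the feasible choice $\boldZ'=\boldZ$ and properness of $c$, whereas the paper simply asserts that the second term is lower-bounded by zero.
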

\begin{proof}
Proof directly results from the fact that $\ell(\cdot)$ is bounded between $0$ and $1$. Therefore, looking at the dual optimization problem in \eqref{eq:wassThm:QVbasedonPhiGamma}, increasing $\gamma$ beyond $1/\rho_k$ results in $\gamma\rho_k>1$ while the second term (i.e., the adversarial loss) is always lower-bounded by zero which makes the whole objective to become larger than $1$. On the other hand, setting $\gamma=0$ would (at worst) results in the objective to be $1$. Therefore, the optimizer $\inf_{\gamma\ge0}$ should not choose a $\gamma$ value that is larger than $1/\rho_k$.
\end{proof}

At this point, we can state the main lemma in the second part of the proof, which theoretically shows that empirical query values, i.e., $\widehat{\mathsf{QV}}_k(h,\rho)$ for any fixed $h\in\mathcal{H}$ and uniformly all $\rho\ge0$ converge to their true statistical expected values with a high probability.

\begin{lemma}[Uniform Convergence of Empirical Queries]
\label{lemma:wassMainThm:UnifConvQVhat}
For $k\in[K]$, assume the unknown sample distribution $P_k$ and let $\left\{\boldZ^{(k)}_i=\left(\boldX^{(k)}_i,y^{(k)}_i\right)\right\}$ for $i\in[n_k]$ denote $n_k\in\mathbb{N}$ i.i.d. feature-label pairs drawn from $P_k$. The $k$th dataset is only known to client $k$. Then, for any fixed classifier $h$ and any $\delta>0$, the following bound holds with probability at least $1-\delta/(K+2)$:
\begin{align}
\sup_{\rho\ge\varepsilon/K}~
\left\vert
\widehat{\mathsf{QV}}_k\left(h,\rho\right)-
\mathsf{QV}_k\left(h,\rho\right)
\right\vert
\leq
\mathcal{O}\left(
\sqrt{{n^{-1}_k}\log\left(\frac{(K+2)n_k}{\varepsilon\delta}\right)}
\right).
\end{align}
\end{lemma}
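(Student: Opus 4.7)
The plan is to reduce the uniform-in-$\rho$ control to a uniform-in-$\gamma$ concentration bound on the dual objective, and then handle the latter by an $\varepsilon$-net/covering argument combined with Hoeffding's inequality and the Lipschitz property of $\phi_\gamma$ from Lemma~\ref{lemma:wassMainThm:PhiGammaLipschitz}.

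First I would exploit the strong duality in~\eqref{eq:wassThm:QVbasedonPhiGamma}. Writing $L(\gamma,\rho) := \gamma\rho + \mathbb{E}_{P_k}[\phi_\gamma(\boldZ)]$ and $\widehat{L}(\gamma,\rho) := \gamma\rho + \tfrac{1}{n_k}\sum_{i\in[n_k]}\phi_\gamma(\boldZ_i^{(k)})$, the key observation is that the term $\gamma\rho$ cancels out in the difference $\widehat{L}(\gamma,\rho) - L(\gamma,\rho)$. Since the infimum of two functions that differ uniformly by at most $\eta$ can itself differ by at most $\eta$, we get
\begin{align}
\bigl|\widehat{\mathsf{QV}}_k(h,\rho)-\mathsf{QV}_k(h,\rho)\bigr|
~\leq~
\sup_{\gamma\in[0,\,1/\rho]}
\bigg|\tfrac{1}{n_k}\sum_{i\in[n_k]}\phi_\gamma\bigl(\boldZ_i^{(k)}\bigr)-\mathbb{E}_{P_k}[\phi_\gamma(\boldZ)]\bigg|,
\end{align}
where I have used Lemma~\ref{lemma:wassMainThm:BoundedGamma} to restrict the dual variable to $[0,1/\rho]$. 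Because the condition $\rho\geq \varepsilon/K$ gives $1/\rho\leq K/\varepsilon$, the right-hand side is further bounded, uniformly in $\rho\geq \varepsilon/K$, by $\sup_{\gamma\in[0,K/\varepsilon]}\bigl|\tfrac{1}{n_k}\sum \phi_\gamma - \mathbb{E}_{P_k}\phi_\gamma\bigr|$. This is the step that converts a uniform statement over an uncountable family of robustness radii into one over a bounded one-dimensional parameter $\gamma$.

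Next I would handle the uniform deviation by a standard covering argument. Fix $N\in\mathbb{N}$ and choose a grid $0=\gamma_0<\gamma_1<\cdots<\gamma_N=K/\varepsilon$ with spacing $K/(\varepsilon N)$. Since $\ell$ is $1$-bounded, so is each $\phi_\gamma$, and Hoeffding's inequality plus a union bound over the $N+1$ grid points yields, with probability at least $1-\delta/(K+2)$,
\begin{align}
\max_{0\leq j\leq N}\bigg|\tfrac{1}{n_k}\sum_{i\in[n_k]}\phi_{\gamma_j}\bigl(\boldZ_i^{(k)}\bigr)-\mathbb{E}_{P_k}[\phi_{\gamma_j}(\boldZ)]\bigg|
~\leq~
\sqrt{\tfrac{\log\bigl(2(N+1)(K+2)/\delta\bigr)}{2n_k}}.
\end{align}
For an arbitrary $\gamma\in[0,K/\varepsilon]$ let $\gamma_j$ be the nearest grid point; by Lemma~\ref{lemma:wassMainThm:PhiGammaLipschitz} the map $\gamma\mapsto\phi_\gamma(\boldZ)$ is $R$-Lipschitz for every $\boldZ$, so the empirical and population objectives at $\gamma$ each differ from their values at $\gamma_j$ by at most $R\cdot K/(\varepsilon N)$. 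Combining these two error sources produces, uniformly in $\gamma$,
\begin{align}
\sup_{\gamma\in[0,K/\varepsilon]}\bigg|\tfrac{1}{n_k}\sum\phi_\gamma - \mathbb{E}_{P_k}\phi_\gamma\bigg|
~\leq~
\sqrt{\tfrac{\log\bigl(2(N+1)(K+2)/\delta\bigr)}{2n_k}}
+\tfrac{2RK}{\varepsilon N}.
\end{align}
Finally I would balance the two contributions by choosing $N$ of the order $\sqrt{n_k}\cdot K/\varepsilon$ (up to log factors), which collapses both terms to the advertised rate $\mathcal{O}\bigl(\sqrt{n_k^{-1}\log\bigl((K+2)n_k/(\varepsilon\delta)\bigr)}\bigr)$.

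The main technical obstacle is not the concentration itself, which is straightforward Hoeffding, but ensuring that the Lipschitz constant $R$ and the range $K/\varepsilon$ of $\gamma$ are finite and enter the bound only through log terms; this is exactly what Lemmas~\ref{lemma:wassMainThm:PhiGammaLipschitz} and~\ref{lemma:wassMainThm:BoundedGamma} guarantee, together with the hypothesis $\rho\geq \varepsilon/K$ (which prevents $1/\rho$ from blowing up). A minor subtlety is that the infimum in the dual may not be attained, but this is handled by working with an infimum-achieving sequence and passing to the limit, since the bound on $\sup_\gamma|\widehat{L}-L|$ depends on $\gamma$ only through its range.
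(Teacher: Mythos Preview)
Your proposal is correct and follows essentially the same approach as the paper: both arguments pass to the dual via~\eqref{eq:wassThm:QVbasedonPhiGamma}, restrict $\gamma$ to $[0,K/\varepsilon]$ using Lemma~\ref{lemma:wassMainThm:BoundedGamma}, discretize that interval, apply Hoeffding/McDiarmid with a union bound over the grid, interpolate via the Lipschitz estimate of Lemma~\ref{lemma:wassMainThm:PhiGammaLipschitz}, and finally balance the grid resolution against the concentration term. Your reduction $|\inf_\gamma \widehat{L}-\inf_\gamma L|\leq\sup_\gamma|\widehat{L}-L|$ is slightly cleaner than the paper's explicit comparison of continuous and discretized infima, but the substance is the same.
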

\begin{proof}
Using the dual formulation of Lemma \ref{lemma:dualitySinhaetal}, we can rewrite the main objective of the theorem as follows:
\begin{align}
&\sup_{\rho\ge\varepsilon/K}~
\left\vert
\widehat{\mathsf{QV}}_k\left(h,\rho\right)-
\mathsf{QV}_k\left(h,\rho\right)
\right\vert
\nonumber\\
&=~\sup_{\rho_k\ge\varepsilon/K}~\left\{
\inf_{\widehat{\gamma}\ge0}~
\left[
\widehat{\gamma}\rho_k+
\mathbb{E}_{\widehat{P}_k}\left[
\phi_{\widehat{\gamma}}\left(\boldZ\right)
\right]
\right]
-
\inf_{\gamma\ge0}~
\left[
\gamma\rho_k+
\mathbb{E}_{P_k}\left[
\phi_{\gamma}\left(\boldZ\right)
\right]
\right]
\right\}.
\label{eq:wassThm:LemmaUniformEq1}
\end{align}
Again, for the sake of simplicity in the proof let us assume both optimal values $\gamma^*$ and $\widehat{\gamma}^*$ in the minimization problems on the right-hand side of \eqref{eq:wassThm:LemmaUniformEq1} are attainable. It should be noted that this assumption is not necessary and can be relaxed by adding more mathematical work. Then, from Lemma \ref{lemma:wassMainThm:BoundedGamma} we already know
$$
0\leq
\gamma^*,\widehat{\gamma}^*
\leq
\frac{1}{\rho_k}\leq\frac{K}{\varepsilon}.
$$
Let us partition the feasible search set of $\gamma,\widehat{\gamma}\ge0$, i.e., $\left[0,K/\varepsilon\right]$ into $L\triangleq\lceil\frac{K^2R}{\varepsilon\Delta}\rceil$ equal intervals, where $\Delta>0$ is a small constant which should to be determined later in the proof. For each interval, let us choose a representative (for example, the value at the beginning of the interval) denoted by $\gamma_i$ with $i=1,\ldots,L$. Then, based on Lemma \ref{lemma:wassMainThm:PhiGammaLipschitz}, for any $\rho_k\in[\varepsilon/K,2K\varepsilon]$, any corresponding $\gamma\in[0,1/\rho_k]$ and all $\boldZ\in\mathcal{Z}$ we have
\begin{align}
\left\vert
\phi_{\gamma}\left(\boldZ\right)
-
\phi_{\gamma_{i^*}}\left(\boldZ\right)
\right\vert
\leq
R\left\vert
\gamma-\gamma_{i^*}
\right\vert
\leq
R\cdot
\frac{K}{\varepsilon}\cdot
\frac{\varepsilon\Delta}{K^2R}
=
\frac{\Delta}{K},
\end{align}
where $i^*=\argmin_{i\in[L]}\left\vert\gamma-\gamma_i\right\vert$. On the other hand, from the maximization problem that defines $\widehat{U}^*(\varepsilon)$ in \eqref{eq:wassMainThm:UhatDef}, we have that
\begin{align}
\rho_k \leq \mathcal{O}\left(K\varepsilon + \sqrt{K\log\left(\frac{K+2}{\delta}\right)}\right),~\forall k\in[K],
\end{align}
where we have omitted constants for the sake of readability. Therefore, the above discussions directly lead to the following bound in the statistical sense (i.e., with respect to $P_k$):
\begin{align}
&
\left\vert
\inf_{\gamma\ge0}~
\left[
\gamma\rho_k+
\mathbb{E}_{P_k}\left[
\phi_{\gamma}\left(\boldZ\right)
\right]
\right]
-
\min_{i\in[L]}~
\left[
\gamma_i\rho_k+
\mathbb{E}_{P_k}\left[
\phi_{\gamma_i}\left(\boldZ\right)
\right]
\right]
\right\vert
\nonumber\\
&\leq~
\min_{i\in[L]}\left\vert
\gamma^*-\gamma_i
\right\vert
\left(
R + \sup~\rho_k
\right)
\nonumber
\end{align}
\begin{align}
&\leq~
\frac{K}{\varepsilon}\cdot\frac{\varepsilon\Delta}{K^2R}\cdot
\mathcal{O}\left(R + K\varepsilon + \sqrt{K\log\left(\frac{(K+2)}{\delta}\right)}\right)
\nonumber\\
&\leq~
\Delta\cdot
\mathcal{O}\left(
\frac{1}{K}+
\frac{\varepsilon}{R}+
\frac{1}{R}\sqrt{\frac{\log\left(\frac{K+2}{\delta}\right)}{K}}
\right).
\end{align}
Through a similar procedure, the following bound also holds for the empirical case (i.e., $\widehat{P}_k$), but this time {\it {almost surely}}:
\begin{align}
&\left\vert
\inf_{\widehat{\gamma}\ge0}~
\left[
\widehat{\gamma}\rho_k+
\mathbb{E}_{\widehat{P}_k}\left[
\phi_{\widehat{\gamma}}\left(\boldZ\right)
\right]
\right]
-
\min_{i\in[L]}~
\left[
\gamma_i\rho_k+
\mathbb{E}_{\widehat{P}_k}\left[
\phi_{\gamma_i}\left(\boldZ\right)
\right]
\right]
\right\vert
\nonumber\\
&\stackrel{a.s.}{\leq}~
\Delta\cdot
\mathcal{O}\left(
\frac{1}{K}+
\frac{\varepsilon}{R}+
\frac{1}{R}\sqrt{\frac{\log\left(\frac{K+2}{\delta}\right)}{K}}
\right).
\end{align}
Therefore, according to the fact that $\widehat{P}_k$ is an empirical estimate of $P_k$ based on $n_k$ i.i.d. samples , we have:
\begin{align}
\sup_{\rho\ge\varepsilon/K}~
\left\vert
\widehat{\mathsf{QV}}_k\left(h,\rho\right)-
\mathsf{QV}_k\left(h,\rho\right)
\right\vert
\stackrel{a.s.}{\leq}~&
\Delta\cdot
\mathcal{O}\left(
\frac{1}{K}+
\frac{\varepsilon}{R}+
\frac{1}{R}\sqrt{\frac{\log\left(\frac{K+2}{\delta}\right)}{K}}
\right)+
\nonumber\\
&
\max_{i\in[L]}
\left\vert
\mathbb{E}_{\widehat{P}_k}\left[
\phi_{\gamma_i}\left(\boldZ\right)
\right]
-
\mathbb{E}_{P_k}\left[
\phi_{\gamma_i}\left(\boldZ\right)
\right]
\right\vert.
\end{align}
Since $\phi_{\gamma_i}$ for each $i\in[L]$ is a non-negative and $1$-bounded (adversarial) loss function, simply applying McDiarmid's inequality and the union bound over all $L$ values of $\gamma_i$s would give us the following bound which holds with probability at least $1-\delta/(K+2)$ for any $\delta>0$:
\begin{align}
\max_{i\in[L]}
\left\vert
\mathbb{E}_{\widehat{P}_k}\left[
\phi_{\gamma_i}\left(\boldZ\right)
\right]
-
\mathbb{E}_{P_k}\left[
\phi_{\gamma_i}\left(\boldZ\right)
\right]
\right\vert
\leq
\sqrt{\frac{\log\left[\frac{L(K+2)}{\delta}\right]}{2n_k}},
\end{align}
which gives us the following bound for $\sup_{\rho\ge\varepsilon/K}~
\left\vert
\widehat{\mathsf{QV}}_k\left(h,\rho\right)-
\mathsf{QV}_k\left(h,\rho\right)
\right\vert$ with the same high probability:
\begin{align}
\sup_{\rho\ge\varepsilon/K}~
\left\vert
\widehat{\mathsf{QV}}_k\left(h,\rho\right)-
\mathsf{QV}_k\left(h,\rho\right)
\right\vert
\leq
\mathcal{O}\left(
\inf_{\Delta>0}~\left\{
\Delta
\zeta\left(K,\varepsilon,\delta\right)+
\sqrt{\frac{\log\left[\frac{RK^2(K+2)}{\varepsilon\delta\Delta}\right]}{n_k}}
\right\}
\right),
\label{eq:wassThm:infDeltaFormula}
\end{align}
where function $\zeta(\cdot)$ is defined as
$$
\zeta(K,\varepsilon,\delta)\triangleq
\frac{1}{K}+
\frac{\varepsilon}{R}+
\frac{1}{R}\sqrt{\frac{\log\left(\frac{K+2}{\delta}\right)}{K}}.
$$
It should be noted that $R$ is a constant that does not depend on other parameters. Also, we have minimized over $\Delta>0$ since the bound holds irrespective of $\Delta$. Exact solution of the minimization problem in \eqref{eq:wassThm:infDeltaFormula} is not needed, since choosing $\Delta=\mathcal{O}\left(K^{-1}n^{-1/2}_k\right)$ gives us the following bound:
\begin{align}
\sup_{\rho\ge\varepsilon/K}~
\left\vert
\widehat{\mathsf{QV}}_k\left(h,\rho\right)-
\mathsf{QV}_k\left(h,\rho\right)
\right\vert
\leq
\mathcal{O}\left(
\sqrt{\frac{\log\left(\frac{(K+2)n_k}{\varepsilon\delta}\right)}{n_k}}
\right),
\end{align}
and completes the proof.
\end{proof}
By using the uniform convergence result from Lemma \ref{lemma:wassMainThm:UnifConvQVhat} and applying it to all $K$ clients simultaneously, we see that (via a union bound argument) with probability at least $1-K\delta/(K+2)$ the empirical and statistical queries $\widehat{\mathsf{QV}}_k\left(h,\rho_k\right)$ and $\mathsf{QV}_k\left(h,\rho_k\right)$ are asymptotically close for all $k\in[K]$, and uniformly for all robustness radii 
$$
\rho_k\in
\left[
\frac{\varepsilon}{K},
\mathcal{O}\left(K\varepsilon+\sqrt{\frac{\log((K+2)/\delta)}{K}}\right)
\right]
$$ 
which are considered for the maximization problem of \eqref{eq:wassMainThm:UhatDef}. 
Finally, using another union bound argument to incorporate the bounds from \eqref{eq:wassThm:mainObjBound} and \eqref{eq:wassMainThm:addedAtTheEnd1} in addition to the previous $K$ events, we can say that with probability at least $1-\delta$ the following bound holds:
\begin{align}
&\sup_{\mu'\in\Gepsmu}
\mathbb{E}_{P\sim\mu'}\left[
\mathbb{E}_P\left[\loss\right]
\right]
\nonumber\\
&\leq~
\sup_{\underline{\rho}\in\mathcal{S}'}~
\frac{1}{K}\sum_{k\in[K]}
\sup_{Q\in\mathcal{B}_{\rho_k}^{\mathrm{wass}}
\left(P_k\right)}
~
\mathbb{E}_{Q}\left[\loss\right]
+
c_2\sqrt{\frac{\log\left(\frac{(K+2)n_k}{\varepsilon\delta}\right)}{n_k}}
+
\sqrt{\frac{\log\left(\frac{K+2}{\delta}\right)}{2K}}
\nonumber\\
&=~
\sup_{\underline{\rho}\in\mathcal{S}'}~
\frac{1}{K}\sum_{k\in[K]}
\widehat{\mathsf{QV}}_k\left(h,\rho_k\right)
+
c_2\sqrt{\frac{\log\left(\frac{(K+2)n_k}{\varepsilon\delta}\right)}{n_k}}
+
\sqrt{\frac{\log\left(\frac{K+2}{\delta}\right)}{2K}},
\end{align}
where $c_2$ is a constant that only depends on either the transportation cost $c$  or the compactness of sample space $\mathcal{Z}$. This completes the proof.
\end{proof}

\section{Proofs of Remarks in Section \ref{sec:main:efficiency}}
\label{sec:app:auxProofs}

\begin{proof}[Proof of Remark \ref{remark:wass:quasi-convexity}]

We begin by discussing the server-side programs in Section \ref{sec:main:fDiv}, which address $f$-divergence meta-robustness, followed by the analysis in Section \ref{sec:main:wass}, which focuses on Wasserstein meta-robustness.

%%%%%%%%%%%%%%%%%%%%%%%%%%%%%%%%%%%%%%%%%%%%%%%%%%%%%%%%%%%%%%%%%%%%%%%%%%%%%%%%%%%%%%%%%%%%%%%%%%%%%%%%%%%%%%%%%%%%%%%%%%%%%%%%%%%%%%%%%%%%%%%%%%%%%%%%%%%%%%%%%%%%%%%%%%%%%%%%%%%%%%%%%%%%%%%%%%%

\paragraph{Server-side Programs in Theorems \ref{thm:mainfDivThm} and \ref{thm:mainRiskDistThm}}: The objective function in both programs is a linear function of the optimization variables $\alpha_1, \ldots, \alpha_K$. Furthermore, the constraints define a convex feasible set:

\begin{itemize}

\item
The constraint
\begin{equation*}
\left\vert
\frac{1}{K}\sum_{k=1}^{K}\alpha_k
-1
\right\vert
\leq
C,
\end{equation*}
for any positive constant $C$, is equivalent to the intersection of the two constraints:
%\begin{align*}
$
\frac{1}{K}\sum_{k=1}^{K}\alpha_k \leq 1+C$ and $
\frac{1}{K}\sum_{k=1}^{K}\alpha_k \geq 1-C
$.
%\end{align*}
These correspond to a band region bounded by two hyperplanes in $\mathbb{R}^K$, forming a convex set.

\item
The other constraint,
\begin{equation*}
\frac{1}{K}\sum_{k=1}^{K} f(\alpha_k)
\leq
\varepsilon +
C',
\end{equation*}
for a constant $C'$, represents the $(\varepsilon + C')$-sublevel set of the convex function $\frac{1}{K}\sum_{k=1}^{K} f(\alpha_k)$. The convexity of this function follows directly from the convexity of $f(\cdot)$, as stated in Definition \ref{def:fDivMain}. Consequently, this constraint also defines a convex set.
\end{itemize}

Therefore, both programs feature linear objectives and feasible sets that are intersections of convex sets in $\mathbb{R}^K$, implying that both programs are convex and can be solved efficiently \cite{boyd2004convex,ghadimi2013stochastic}. This completes the proof.

% \begin{algorithm}[t]
% \SetKwInOut{KwInput}{Input}
% \SetKwInOut{KwParam}{Parameters}
% \caption{Server-side Bisection Algorithm}
% \KwParam{$K$, $\varepsilon$, $\delta$}
% \KwInput{$h$, $\Delta>0$, $\mathrm{poly}\left(K,\log\frac{1}{\Delta}\right)$ query budget for $\widehat{\mathsf{QV}}_k,~\forall k\in[K]$}
% \Init{}{
% $a \longleftarrow \min~\ell(\cdot)~\mathrm{or}~ 0$
% \\
% $b \longleftarrow \max~\ell(\cdot)~\mathrm{or}~ 1$
% }
% \While{$b-a > \Delta$}{
%     \vspace*{-5mm}
%     \begin{flalign}
%     &t \longleftarrow (a+b)/2 &&
%     \nonumber\\
%     &\mathrm{Solve~convex~feasibility~problem:} &&
%     \nonumber\\
%     &~~\mathrm{Find}~\rho_1,\ldots\rho_K\ge\varepsilon/K&&
%     \nonumber\\
%     &~~\mathrm{subject~to~}~~\frac{1}{K}\sum_{k\in[K]}\rho_k\leq \varepsilon\left(1+\frac{1}{K}\right)+c_1\sqrt{\frac{\log\left({(K+2)}/{\delta}\right)}{K}}&&
%     \nonumber\\
%     &~~\hspace{19mm} \frac{1}{K}\sum_{k\in[K]}\widehat{\mathsf{QV}}_k\left(h,\rho_k\right) \ge t &&
%     \nonumber
%     \end{flalign}
%     \uIf{Is feasible}{
%         $a\longleftarrow t$
%     }
%     \Else{
%         $b\longleftarrow t$
%     }
% }
% \KwResult{Upper-bound $b$}
% \label{alg:bisection:forWassThm}
% \end{algorithm}

\begin{algorithm}[t]
\caption{Server-side Bisection Algorithm}
\label{alg:bisection:forWassThm}
\begin{algorithmic}[1]
\REQUIRE $K$, $\varepsilon$, $\delta$
\INPUT $h$, $\Delta > 0$, and $\mathrm{poly}\left(K, \log\frac{1}{\Delta}\right)$ query budget for $\widehat{\mathsf{QV}}_k$, for all $k \in [K]$
\STATE Initialize $a \leftarrow \min~\ell(\cdot)$ (or $0$), $b \leftarrow \max~\ell(\cdot)$ (or $1$)
\WHILE{$b - a > \Delta$}
    \STATE $t \leftarrow (a + b)/2$
    \STATE Solve convex feasibility problem:
    \STATE Find $\rho_1, \ldots, \rho_K \ge \varepsilon/K$ such that
    \STATE \hspace{1em} $\frac{1}{K}\sum_{k \in [K]} \rho_k \le \varepsilon\left(1 + \frac{1}{K}\right) + c_1\sqrt{\frac{\log\left(\frac{K + 2}{\delta}\right)}{K}}$
    \STATE \hspace{1em} $\frac{1}{K} \sum_{k \in [K]} \widehat{\mathsf{QV}}_k(h, \rho_k) \ge t$
    \IF{problem is feasible}
        \STATE $a \leftarrow t$
    \ELSE
        \STATE $b \leftarrow t$
    \ENDIF
\ENDWHILE
\OUTPUT upper-bound $b$
\end{algorithmic}
\end{algorithm}

%%%%%%%%%%%%%%%%%%%%%%%%%%%%%%%%%%%%%%%%%%%%%%%%%%%%%%%%%%%%%%%%%%%%%%%%%%%%%%%%%%%%%%%%%%%%%%%%%%%%%%%%%%%%%%%%%%%%%%%%%%%%%%%%%%%%%%%%%%%%%%%%%%%%%%%%%%%%%%%%%%%%%%%%%%%%%%%%%%%%%%%%%%%%%%%%%%%

\paragraph{Server-side Program in Theorem \ref{thm:wass_main}}: 
The query functions $\widehat{\mathsf{QV}}_k(h,\rho)$, for any fixed $h \in \mathcal{H}$, are non-decreasing with respect to $\rho \ge 0$. Therefore, the following function:
\begin{align}
\zeta\left(\rho_1,\ldots,\rho_K\right) \triangleq
\frac{1}{K}
\sum_{k \in [K]} \widehat{\mathsf{QV}}_k\left(h,\rho_k\right)
\label{eq:remark:wassMainThm:obj}
\end{align}
is a summation of $K$ non-decreasing functions, where each function depends only on one of the $\rho_k$ values. This function, $\zeta(\rho_1, \ldots, \rho_K)$, is a {\it quasi-concave} function. Although quasi-concave functions are not generally concave, they do possess concave superlevel sets. Specifically, for each $t \in \mathbb{R}$, the following sets are convex in $\mathbb{R}^{K}$:
\begin{align}
\mathcal{S}_t \triangleq \left\{
\left(\rho_1,\ldots,\rho_K\right) \in \mathbb{R}^K \bigg\vert~
\frac{1}{K}
\sum_{k \in [K]} \widehat{\mathsf{QV}}_k\left(h,\rho_k\right)
\ge t
\right\}.
\end{align}

As a result, the original optimization problem in \eqref{eq:wassMainThm:UhatDef} can be decomposed into a sequence of convex optimization problems. Each sub-problem is essentially a feasibility problem that checks whether the set $\mathcal{S}_t$ is feasible for a given $t \in \mathbb{R}$. Given that the original objective function is bounded between $0$ and $1$, a binary search can be employed to iteratively approximate the maximum attainable value of the objective within any desired error margin $\Delta > 0$. This process is detailed in Algorithm \ref{alg:bisection:forWassThm}, which implements a {\it bisection} algorithm.

It is important to note that each feasibility check sub-problem in Algorithm \ref{alg:bisection:forWassThm} is a convex problem, and therefore can be solved in polynomial time with polynomial evaluations of the constraint functions, i.e., the $\widehat{\mathsf{QV}}_k$ functions. Consequently, both the computational complexity and the required query budget remain polynomial. In particular, for a given $\Delta > 0$, the maximum of the objective in \eqref{eq:remark:wassMainThm:obj} can be approximated with an error of at most $\Delta$, requiring at most $\log\frac{1}{\Delta}$ iterations (feasibility checks) in Algorithm \ref{alg:bisection:forWassThm} \cite{boyd2004convex}.
\end{proof}

%%%%%%%%%%%%%%%%%%%%%%%%%%%%%%%%%%%%%%%%%%%%%%%%%%%%%%%%%%%%%%%%%%%%%%%%%%%%%%%%%%%%%%%%%%%%%%%%%%%%%%%%%%%%%%%%%%%%%%%%%%%%%%%%%%%%%%%%%%%%%%%%%%%%%%%%%%%%%%%%%%%%%%%%%%%%%%%%%%%%%%%%%%%%%%%%%%%%%%%%%%%%%%%%%%%%%%%%%%%%%%%%%%%%%%%%%%%%%%%%%%%%%%%%%%%%%%%%%%%%%%%%%%%%%%%%%%%%%%%%%%%%%%%%%%%%%%%%%%%%%%%%%%%%%%%%%%%%%%%%%%%%%%%%%%%%%%%%%%%%%%%%%%%%%%%%%%%%%%%%%%%%%%%%%%%%%%%%%%%%%%%%%%%%%%%%%%%%%%%%%%%%%%%%%%%%%%%%%%%%%%%%%%%%%%%%%%%%%%%%%%%%%%%%%%%%%%%%%%%%%%%%%%%%%%%%%%%%

\begin{proof}[Proof of Remark \ref{remark:wass:client-side-compcomp}]
We turn to client-side optimizations for this theorem, which take the form:
\begin{align*}
\sup_{Q\in\mathcal{B}^{\mathrm{wass}}_{\rho}\left(P\right)}
\mathbb{E}_{Q}\left[\ell\left(\boldZ\right)\right],
\end{align*}
where the notation $\ell\left(\boldy, h\left(\boldX\right)\right)$ has been simplified to $\ell\left(\boldZ\right)$ (for $\boldZ = \left(\boldX, \boldy\right)$) for clarity and ease of notation. For a client $k \in [K]$, the local distribution $P$ corresponds to $\widehat{P}_k$, which is derived from $n_k$ samples drawn from an arbitrary distribution $P_k$. This optimization program is initially defined within a restricted distributional space, $\mathcal{B}^{\mathrm{wass}}_{\rho}$, and may appear intractable at first glance. However, by leveraging Lemma \ref{lemma:dualitySinhaetal} (see Appendix \ref{sec:app:proofThms:wass}), which relies on a core mathematical tool from the seminal works of \cite{sinha2018certifying} (originally derived in \cite{blanchet2019quantifying}), this formulation can be rewritten in its dual form. This dual reformulation is subsequently shown to be implementable in polynomial time \cite{sinha2018certifying}. The rest of the proof of Remark \ref{remark:wass:client-side-compcomp} is the summary of the method proposed and subsequently used in the above-mentioned reference (\textbf{not~our~work}).

Let $P$ be a probability measure defined over a measurable space $\Omega$, $\ell(\cdot):\Omega\rightarrow\mathbb{R}$ be any loss function, $c$ denote a proper and lower semi-continuous transportation cost on $\Omega\times\Omega$, and assume $\varepsilon\ge0$. Then, the following equality holds for the Wasserstein-constrained DRO centered around $P$:
\begin{align}
\sup_{Q\in\mathcal{B}^{\mathrm{wass}}_{\rho}\left(P\right)}
\mathbb{E}_{Q}\left[\ell\left(\boldZ\right)\right]
=
\inf_{\gamma\ge0}\left\{
\gamma\rho+
\mathbb{E}_P\left[
\sup_{\boldZ'\in\Omega}\ell\left(\boldZ'\right)-\gamma c\left(\boldZ',\boldZ\right)
\right]
\right\}.
\end{align}
The optimization over $\gamma$ is one-dimensional. On the other hand, the term $\mathbb{E}P\left[\sup{\boldZ'\in\Omega}\ell\left(\boldZ'\right)-\gamma c\left(\boldZ',\boldZ\right)\right]$ is non-increasing for $\gamma\ge0$, while the other term, $\gamma\rho$, is affine. In fact, $\gamma$ and $\rho$ are \emph{dual} counterparts, where increasing $\rho$ decreases the optimal $\gamma$ value and vice versa. Therefore, the outer minimization $\inf_{\gamma\ge0}$ can be handled in polynomial time as long as the inner expected maximization can be numerically solved efficiently (in polynomial time).

For the inner maximization, we have
\begin{align*}
\mathbb{E}_{\widehat{P}_k}\left[\sup_{\boldZ'\in\Omega}\ell\left(\boldZ'\right)-\gamma c\left(\boldZ',\boldZ\right)\right]
=
\frac{1}{n_k}
\sum_{i\in[n_k]}
\sup_{\boldZ'\in\Omega}\ell\left(\boldZ'\right)-\gamma c\left(\boldZ',\boldZ_i\right),
\end{align*}
for $\boldZ_i=\left(\boldX^{(k)}_i,\boldy^{(k)}_i\right)$, which follows from the fact that $\widehat{P}_k$ is an atomic \emph{empirical} measure. We follow the same procedure as in \cite{sinha2018certifying}:
\begin{itemize}
\item 
Assume $\ell$ is twice differentiable with a bounded-from-below (but potentially negative) curvature, i.e., the Hessian matrix satisfies $\nabla^2\ell\succeq -\beta \boldsymbol{I}$ for some bounded $\beta\ge0$.
\item 
Assume $c$ is $1$-strongly convex in its first argument (same as Assumption A in \citet{sinha2018certifying}).
\end{itemize} 
These assumptions are not overly restrictive, as most candidates for the loss function $\ell\left(\cdot\right)$, such as cross-entropy loss coupled with a deep neural network architecture for $h$, directly correspond to analytic and twice-differentiable functions with bounded curvature from below. Note that we \textbf{do not} assume convexity for $\ell$, thus maintaining the generality of the problem. On the other hand, most choices for the transportation cost $c$ are convex in their first argument (e.g., any valid norm).

Based on the above-mentioned two assumptions, it is straightforward to see that the term $\ell\left(\boldZ'\right)-\gamma c\left(\boldZ',\boldZ_i\right)$ is \emph{concave} as long as $\gamma\ge\beta$ (i.e., $\varepsilon$ is not excessively large). According to standard convergence theorems in convex optimization theory (see, for example, \citet{ghadimi2013stochastic} or \citet{boyd2004convex}), any vanilla local first-order optimization algorithm with oracle access to gradients, such as stochastic gradient descent (SGD), can achieve an $\mathcal{O}\left(T^{-1/2}\right)$ approximation of
\begin{align*}
\mathrm{Obj}\left(\boldZ_1,\ldots,\boldZ_{n_k};\gamma\right)
\triangleq
\frac{1}{n_k}
\sum_{i\in[n_k]}
\sup_{\boldZ'\in\Omega}\ell\left(\boldZ'\right)-\gamma c\left(\boldZ',\boldZ_i\right),
\end{align*}
after $T\ge2$ iterations. This completes the proof.
\end{proof}

%%%%%%%%%%%%%%%%%%%%%%%%%%%%%%%%%%%%%%%%%%%%%%%%%%%%%%%%%%%%%%%%%%%%%%%%%%%%%%%%%%%%%%%%%%%%%%%%%%%%%%%%%%%%%%%%%%%%%%%%%%%%%%%%%%%%%%%%%%%%%%%%%%%%%%%%%%%%%%%%%%%%%%%%%%%%%%%%%%%%%%%%%%%%%%%%%%%%%%%%%%%%%%%%%%%%%%%%%%%%%%%%%%%%%%%%%%%%%%%%%%%%%%%%%%%%%%%%%%%%%%%%%%%%%%%%%%%%%%%%%%%%%%%%%%%%%%%%%%%%%%%%%%%%%%%%%%%%%%%%%%%%%%%%%%%%%%%%%%%%%%%%%%%%%%%%%%%%%%%%%%%%%%%%%%%%%%%%%%%%%%%%%%%%%%%%%%%%%%%%%%%%%%%%%%%%%%%%%%%%%%%%%%%%%%%%%%%%%%%%%%%%%%%%%%%%%%%%%%%%%%%%%%%%%%%%%%%%

% \begin{proof}[Proof of Remark \ref{remark:wass:privacy}]
% To be written...
% \end{proof}

%%%%%%%%%%%%%%%%%%%%%%%%%%%%%%%%%%%%%%%%%%%%%%%%%%%%%%%%%%%%%%%%%%%%%%%%%%%%%%%%%%%%%%%%%%%%%%%%%%%%%%%%%%%%%%%%%%%%%%%%%%%%%%%%%%%%%%%%%%%%%%%%%%%%%%%%%%%%%%%%%%%%%%%%%%%%%%%%%%%%%%%%%%%%%%%%%%%%%%%%%%%%%%%%%%%%%%%%%%%%%%%%%%%%%%%%%%%%%%%%%%%%%%%%%%%%%%%%%%%%%%%%%%%%%%%%%%%%%%%%%%%%%%%%%%%%%%%%%%%%%%%%%%%%%%%%%%%%%%%%%%%%%%%%%%%%%%%%%%%%%%%%%%%%%%%%%%%%%%%%%%%%%%%%%%%%%%%%%%%%%%%%%%%%%%%%%%%%%%%%%%%%%%%%%%%%%%%%%%%%%%%%%%%%%%%%%%%%%%%%%%%%%%%%%%%%%%%%%%%%%%%%%%%%%%%%%%%%

\begin{proof}[Proof of Remark \ref{remark:amo:div1}]

As $K, \min_{k} n_k \to \infty$, the bounds presented in Theorems \ref{thm:nonRobustEmpiricalMean}, \ref{thm:mainfDivThm}, \ref{thm:mainRiskDistThm}, and \ref{thm:wass_main} achieve minimax optimality, referred to as Asymptotic Minimax Optimality (AMO). Mathematically, this implies that the proposed empirical bounds on the right-hand sides become attainable by an adversary, rendering them unimprovable. This particular approach in demonstrating the tightness of such bounds is a standard practice in nearly all theoretical works within this field and related areas (see  \citet{sinha2018certifying,ashtiani2018nearly,saberi2023sample}; etc.).

%%%%%%%%%%%%%%%%%%%%%%%%%%%%%%%%%%%%%%%%%%%%%%%%%%%%%%%%%%%%%%%%%%%%%%%%%%%%%%%%%%%%%%%%%%%%%%%%%%%%%%%%%%%%%%%%%%%%%%%%%%%%%%%%%%%%%%%%%%%%%%%%%%%%%%%%%%%%%%%%%%%%%%%%%%%%%%%%%%%%%%%%%%%%%%%%%%%

\paragraph{AMO for Theorem \ref{thm:nonRobustEmpiricalMean}}: This theorem provides non-robust guarantees and thus there are no adversaries involved. Therefore, the proof for this part is straightforward since we only need to show the right-hand side of the inequalities equal the left-hand sides in the asymptotic regime. In this regard, we have
\begin{align}
\lim_{K,n_{1:K}\to\infty}
\frac{1}{K}\sum_{k=1}^{K}\widehat{\mathsf{QV}}_k\left(h\right)
&=
\lim_{K\to\infty}
\frac{1}{K}\sum_{k=1}^{K}
\left(
\lim_{n_k\to\infty}
\frac{1}{n_k}\sum_{i=1}^{n_k}
\ell\left(
\boldy^{(k)}_i,h\left(\boldX^{(k)}_i\right)
\right)
\right)
\nonumber\\
&\stackrel{a.s.}{=}
\lim_{K\to\infty}
\frac{1}{K}\sum_{k=1}^{K}
\mathbb{E}_{P_k}\left[\loss\right]
\nonumber\\
&\stackrel{a.s.}{=}
\mathbb{E}_{P\sim\mu}\left(
\mathbb{E}_{P_k}\left[\loss\right]
\right).
\end{align}
The first \emph{almost~sure} equality follows from the fact that that dataset $\left\{\left(\boldX^{(k)}_i,\boldy^{(k)}_i\right)\right\}^{n_k}_{i=1}$ consists of i.i.d. samples from $P_k$ for each $k\in[K]$. The second one is a a result of $P_1,\ldots,P_K$ being i.i.d. samples of $\mu$. In both cases we have used the strong law of large numbers.

In a similar fashion, for the loss CDF and all subsequent values of $\lambda\in\reals$, the following almost sure equalities hold:
\begin{align}
\lim_{K,n_{1:K}\to\infty}
\frac{1}{K}\sum_{k=1}^{K}
\mathbbm{1}\left(\widehat{\mathsf{QV}}_k(h)\ge 
\lambda
-
\sqrt{
\frac{\log\frac{(K+1)}{\delta}}{2n_k}
}
\right)
&=
\lim_{K\to\infty}
\frac{1}{K}\sum_{k=1}^{K}
\mathbbm{1}\left(
\lim_{n_k\to\infty}
\widehat{\mathsf{QV}}_k(h)
+
\sqrt{
\frac{\log\frac{(K+1)}{\delta}}{2n_k}
}
\ge 
\lambda
\right)
\nonumber\\
&=
\lim_{K\to\infty}
\frac{1}{K}\sum_{k=1}^{K}
\mathbbm{1}\left(
\lim_{n_k\to\infty}
\widehat{\mathsf{QV}}_k(h)
+
\lim_{n_k\to\infty}
\sqrt{
\frac{\log\frac{(K+1)}{\delta}}{2n_k}
}
\ge 
\lambda
\right)
\nonumber\\
&\stackrel{a.s.}{=}
\lim_{K\to\infty}
\frac{1}{K}\sum_{k=1}^{K}
\mathbbm{1}\left(
\mathbb{E}_{P_k}\left[\loss\right]
\ge 
\lambda
\right)
\nonumber\\
&\stackrel{a.s.}{=}
\mathbb{E}_{P\sim\mu}\left[
\mathbbm{1}\left(
\mathbb{E}_{P}\left[\loss\right]
\ge 
\lambda
\right)
\right]
\nonumber\\
&=
\mu\left(
\mathbb{E}_{P}\left[\loss\right]
\ge 
\lambda
\right).
\end{align}
Again, the \emph{almost~sure} equalities are the result of applying law of large numbers in both distributional and meta-distributional levels. Therefore, the proof for this part is complete.

%%%%%%%%%%%%%%%%%%%%%%%%%%%%%%%%%%%%%%%%%%%%%%%%%%%%%%%%%%%%%%%%%%%%%%%%%%%%%%%%%%%%%%%%%%%%%%%%%%%%%%%%%%%%%%%%%%%%%%%%%%%%%%%%%%%%%%%%%%%%%%%%%%%%%%%%%%%%%%%%%%%%%%%%%%%%%%%%%%%%%%%%%%%%%%%%%%%

\paragraph{AMO for Theorems \ref{thm:mainfDivThm} and \ref{thm:mainRiskDistThm}}: 
Assume two arbitrary meta-distributions $\mu'$ and $\mu$ and only assume $\mathcal{D}_f\left(\mu'\Vert\mu\right)\leq\varepsilon$ (note that for $\varepsilon<\infty$, this assumption automatically enforces relative continuity as well), and let $P_1,P_2,\ldots$ be an infinite i.i.d. sequence from $\mu$. The only extra restriction over $\mu'$ is that it represents a \emph{probability~measure}. Combining all these natural restrictions we get
\begin{itemize}
\item 
Non-negativity of $\mu'$, which means for any $\mathcal{S}\subseteq\MZ$ we have $\mu'\left(\mathcal{S}\right)\ge0$.
\item
Normalization condition of probability measures which mathematically translates into the following equalities:
\begin{align}
1=\mu'\left(\MZ\right)
=
\int_{P\in\mathcal{M}}\mu'\left(\mathrm{d}P\right)
=
\mathbb{E}_{P\sim\mu}\left(
\frac{\mu'\left(\mathrm{d}P\right)}{\mu\left(\mathrm{d}P\right)}
\right)
\stackrel{a.s.}{=}
\lim_{K\to\infty}
\frac{1}{K}\sum_{k=1}^{K}
\frac{\mathrm{d}\mu'}{\mathrm{d}\mu}\left(P_k\right).
\nonumber
\end{align}
\item
Boundedness of $f$-divergence, which can be written as follows:
\begin{align}
\varepsilon
\ge
\mathcal{D}_f\left(\mu'\Vert\mu\right)
=
\mathbb{E}_{P\sim\mu}\left[f\left(
\frac{\mathrm{d}\mu'}{\mathrm{d}\mu}\left(P\right)
\right)\right]
\stackrel{a.s.}{=}
\lim_{K\to\infty}
\frac{1}{K}\sum_{k=1}^{K}
f\left(
\frac{\mathrm{d}\mu'}{\mathrm{d}\mu}\left(P_k\right)
\right).
\nonumber
\end{align}
\end{itemize}
In all the above statements, the \emph{almost~sure} equality holds with respect to the randomness of drawing the i.i.d. sequence of distributions $P_1,P_2,\ldots$ from $\mu$ and utilizes the application of the law of large numbers. Based on the above discussions, any measure defined on a Borel $\sigma$-field over $\MZ$ that satisfies the above conditions is a potential candidate for the worst-case $\mu'$ in the statement of theorem. 

On the other hand, the proposed bound in the right-hand side of the main result in Theorem \ref{thm:mainfDivThm} is computed via the following optimization problem:
\begin{align}
&\widehat{B}^*(\varepsilon)
\triangleq~
\sup_{0\leq\alpha_1, \ldots, \alpha_K \leq \Lambda}~
\frac{1}{K}\sum_{k=1}^{K}
\alpha_k
\widehat{\mathsf{QV}}_k\left(h\right)
\\
&\mathrm{subject~to}\quad
\left\vert
\frac{1}{K}\sum_{k=1}^{K}\alpha_k
-1
\right\vert
\leq 
c_1\sqrt{{\log\left({\delta^{-1}}\right)}/{K}},
\nonumber\\
&\hspace{19mm}
\frac{1}{K}\sum_{k=1}^{K} f(\alpha_k)
\leq
\varepsilon + 
c_2\sqrt{{\log\left({\delta^{-1}}\right)}/{K}}.
\nonumber
\end{align}
Since, for all $\delta>0$ we have
\begin{align}
\lim_{K\to\infty}
c_1\sqrt{{\log\left({\delta^{-1}}\right)}/{K}}
=0,
\quad\mathrm{and}\quad
\lim_{K\to\infty}
\varepsilon + 
c_2\sqrt{{\log\left({\delta^{-1}}\right)}/{K}}
=\varepsilon,
\end{align}
in the asymptotic regime of $K\to\infty$ the vector $\left(\alpha_1,\ldots,\alpha_K\right)$ converges to the sequence $\left\{\alpha_i\right\}_{i\in\mathbb{N}}$, with the following properties:
\begin{align}
\alpha_i\ge0,~\forall i\in\mathbb{N},
\quad
\lim_{K\to\infty}\frac{1}{K}\sum_{i=1}^{K}\alpha_i=1
\quad\mathrm{and}\quad
\lim_{K\to\infty}\frac{1}{K}\sum_{i=1}^{K}f(\alpha_i)\leq\varepsilon.
\end{align}
Therefore, it can be deduced that any $\left\{\alpha_i\right\}_{i\in\mathbb{N}}$ that satisfies such conditions corresponds to an achievable density ratio, i.e. $\alpha_k=\mathrm{d}\mu'/\mathrm{d}\mu\left(P_k\right),~k\in\mathbb{N}$, for some $\mu'\in\MMZ$. In other words, the $\sup$ value is achievable. The same set of arguments hold for Theorem \ref{thm:mainRiskDistThm}, and the proof for this part is complete.

%%%%%%%%%%%%%%%%%%%%%%%%%%%%%%%%%%%%%%%%%%%%%%%%%%%%%%%%%%%%%%%%%%%%%%%%%%%%%%%%%%%%%%%%%%%%%%%%%%%%%%%%%%%%%%%%%%%%%%%%%%%%%%%%%%%%%%%%%%%%%%%%%%%%%%%%%%%%%%%%%%%%%%%%%%%%%%%%%%%%%%%%%%%%%%%%%%%

\paragraph{AMO for Theorem \ref{thm:wass_main}}: The proof relies heavily on the theory developed in the proof of Theorem \ref{thm:wass_main} (see Section \ref{sec:app:proofThms:wass}), and we adopt the same procedure and notations. Recall the bound proposed in Theorem \ref{thm:wass_main}: For any $\varepsilon, \delta > 0$, consider the following constrained optimization problem:
\begin{align}
&\widehat{U}^*\left(\varepsilon\right)
\triangleq
\sup_{\rho_1,\ldots,\rho_K\ge~\varepsilon/K}~
\frac{1}{K}\sum_{k=1}^{K}
\widehat{\mathsf{QV}}_k\left(h,\rho_k\right)
\\
&\mathrm{subject~to}\quad
\frac{1}{K}\sum_{k=1}^{K}
\rho_k\leq
\varepsilon\left(1+\frac{1}{K}\right)
+c_1\sqrt{\frac{
\log\left(\frac{K+2}{\delta}\right)
}{K}}.
\nonumber
\end{align}
In the asymptotic regime (i.e., $K,\min_k n_k\to\infty$), we have
\begin{align}
\lim_{n_k\to\infty}\widehat{\mathsf{QV}}_k\left(h,\rho_k\right)
\stackrel{a.s.}{=}\sup_{Q\in\mathcal{B}_{\rho_k}\left(P_k\right)}
\mathbb{E}_Q\left[\loss\right],
\end{align}
where $P_1, P_2, \ldots$ is an infinite (but countable) i.i.d. sequence from the meta-distribution $\mu$. The almost sure convergence follows from the law of large numbers. Define $\rho: \mathcal{M}_\mathcal{Z} \to \mathbb{R}_+$. In the asymptotic regime ($K \to \infty$), the constraint becomes:
\begin{align}
\mathbb{E}_{P\sim\mu}\left[\rho\left(P\right)\right]
&\stackrel{a.s.}{=}
\lim_{K\to\infty}
\frac{1}{K}\sum_{k=1}^{K}
\rho_k
\nonumber\\
&\leq
\lim_{K\to\infty}
\varepsilon\left(1+\frac{1}{K}\right)
+c_1\sqrt{\frac{
\log\left(\frac{K+2}{\delta}\right)
}{K}}
\nonumber\\
&=\varepsilon.
\end{align}
Thus, the optimization problem defining $\widehat{U}^*$ when $K,n_k\to\infty$ (almost surely) reduces to:
\begin{align}
\widehat{U}^*\left(\varepsilon\right)
\stackrel{a.s.}{=}
\sup_{\rho:\MZ\to\reals_{+}}
&\mathbb{E}_{P\sim\mu}\left[
\sup_{Q\in\mathcal{B}_{\rho_k}\left(P_k\right)}
\mathbb{E}_Q\left[\loss\right]
\right]
\nonumber\\
&\mathrm{subject~to}\quad
\mathbb{E}_{P\sim\mu}\left[\rho\left(P\right)\right]\leq\varepsilon.
\end{align}
Introducing a Lagrange multiplier $\gamma \geq 0$, and using strong duality, this problem becomes:
\begin{align}
&\inf_{\gamma\ge0}\left\{
\sup_{\rho:\MZ\to\reals_{+}}
\mathbb{E}_{P\sim\mu}\left[
\sup_{Q\in\mathcal{B}_{\rho(P)}\left(P\right)}
\mathbb{E}_Q\left[\loss\right]
\right]
-
\gamma\left(
\mathbb{E}_{P\sim\mu}\left[\rho\left(P\right)\right]-\varepsilon
\right)
\right\}
\nonumber\\
=&
\inf_{\gamma\ge0}\left\{
\sup_{\rho:\MZ\to\reals_{+}}
\mathbb{E}_{P\sim\mu}\left[
\sup_{Q\in\mathcal{B}_{\rho(P)}\left(P\right)}
\mathbb{E}_Q\left[\loss\right]
-
\gamma\left(
\rho\left(P\right)-\varepsilon
\right)
\right]
\right\}
\nonumber\\
=&
\inf_{\gamma\ge0}\left\{
\gamma\varepsilon+
\sup_{\rho:\MZ\to\reals_{+}}
\mathbb{E}_{P\sim\mu}\left[
\sup_{Q\in\mathcal{B}_{\rho(P)}\left(P\right)}
\mathbb{E}_Q\left[\loss\right]
-
\gamma
\rho\left(P\right)
\right]
\right\}.
\end{align}
The combined effect of the two supremum operators ($\sup_{\rho: \mathcal{M}_\mathcal{Z} \to \mathbb{R}_+}$ and $\sup_{Q \in \mathcal{B}_{\rho(P)}(P)}$) yields the following min-max problem:
\begin{align}
\inf_{\gamma \geq 0} \left\{ 
\gamma \varepsilon + 
\mathbb{E}_{P \sim \mu} 
\left[
\sup_{Q \in \mathcal{M}_\mathcal{Z}} 
\mathbb{E}_Q[\loss] - \gamma \mathcal{W}_c(P, Q) 
\right] 
\right\}.
\end{align}
By Lemma \ref{lemma:dualitySinhaetal}, this is equal to:
\begin{align}
\sup_{\mu' \in \mathcal{G}_\varepsilon(\mu)} 
\mathbb{E}_{P \sim \mu'} 
\left( 
\mathbb{E}_P[\loss] 
\right).
\end{align}
Thus, the proposed bound $\widehat{U}^*$ converges to the above expression as $K, \min_k n_k \to \infty$, which is exactly equal to the statistical quantity that it was supposed to upper-bound. This completes the proof.
\end{proof}

\section{Complementary Experimental Results}
\label{sec:app:auxImage}

In this section, we present a complementary series of experiments on real-world datasets to demonstrate: (i) the validity and tightness of our bounds, and (ii) that our bounds are not only theoretically sound but also practical to compute. For clarity and readability, we repeat key explanations from Section \ref{sec:experiments} to maintain a cohesive narrative.

\subsection{Client Generation and Risk CDF Certificates for Unseen Clients}
\label{sec:DKW-cert}

In the first part of our experiments, we outline our client generation model and present a number of risk CDF guarantees. We simulated a federated learning scenario with $n = 1000$ nodes, where each node contains $1000$ local samples. The experiments were conducted using four different datasets: CIFAR-10  \citep{krizhevsky2009learning}, SVHN \citep{netzer2011reading}, EMNIST \citep{cohen2017emnist}, and ImageNet \citep{russakovsky2015imagenet}. To create each user's data within the network, we applied three types of affine distribution shifts across users:

\begin{itemize}
\item 
\textbf{Feature Distribution Shift:} Each sample ${\boldX}_i^{(k)}$ in the dataset is manipulated via a transformation chosen randomly for each node. Specifically, each user is assigned a unique matrix ${\Lambda}^{(k)}$ and shift vector $\boldsymbol{\delta}^{(k)}$, and the data is modified as follows:
\begin{equation}
\widetilde{\boldX}_{i}^{(k)} = (I + \Lambda^{(k)}) {\boldX}_i^{(k)} + \boldsymbol{\delta}^{(k)}.
\end{equation}
In our experiments, $\Lambda^{(k)}$ and $\boldsymbol{\delta}^{(k)}$ are respectively random matrices and vectors with i.i.d. zero-mean Gaussian entries. The standard deviation varies based on the dataset: $0.05$ for CIFAR-10 and SVHN, $0.1$ for EMNIST, and $0.01$ for ImageNet.
    
\item 
\textbf{Label Distribution Shift:} Here, we assume that each meta-distribution is characterized by a specific $\alpha$ coefficient. To generate each user's data under this shift, the number of samples per class is determined by a Dirichlet distribution with parameter $\alpha$. In our experiments, we use $\alpha = 0.4$.
    
\item 
\textbf{Feature \& Label Distribution Shift:} As the name suggests, this shift combines both the feature and label distribution shifts described above to create a new distribution for each user.
\end{itemize}

Figure~\ref{fig:DKW} illustrates our performance certificates (i.e., bounds on the risk CDF) for unseen clients when there are no shifts. We selected $100$ nodes from the population and considered $400$ other nodes as unseen clients. We then plotted the CDFs based on $100$ samples and confirmed that our bounds hold for the real population as well. Due to the standard DKW inequality, the empirical CDF is a good estimate for the test-time non-robust risk CDF.
%Please refer to the Appendix~\ref{sec:app:DKW-cert} for the full set of experiments on our datasets.

\subsection{Certificates for $f$-Divergence Meta-Distributional Shifts}

\begin{figure}[!ht]
\centering
\includegraphics[width=\linewidth]{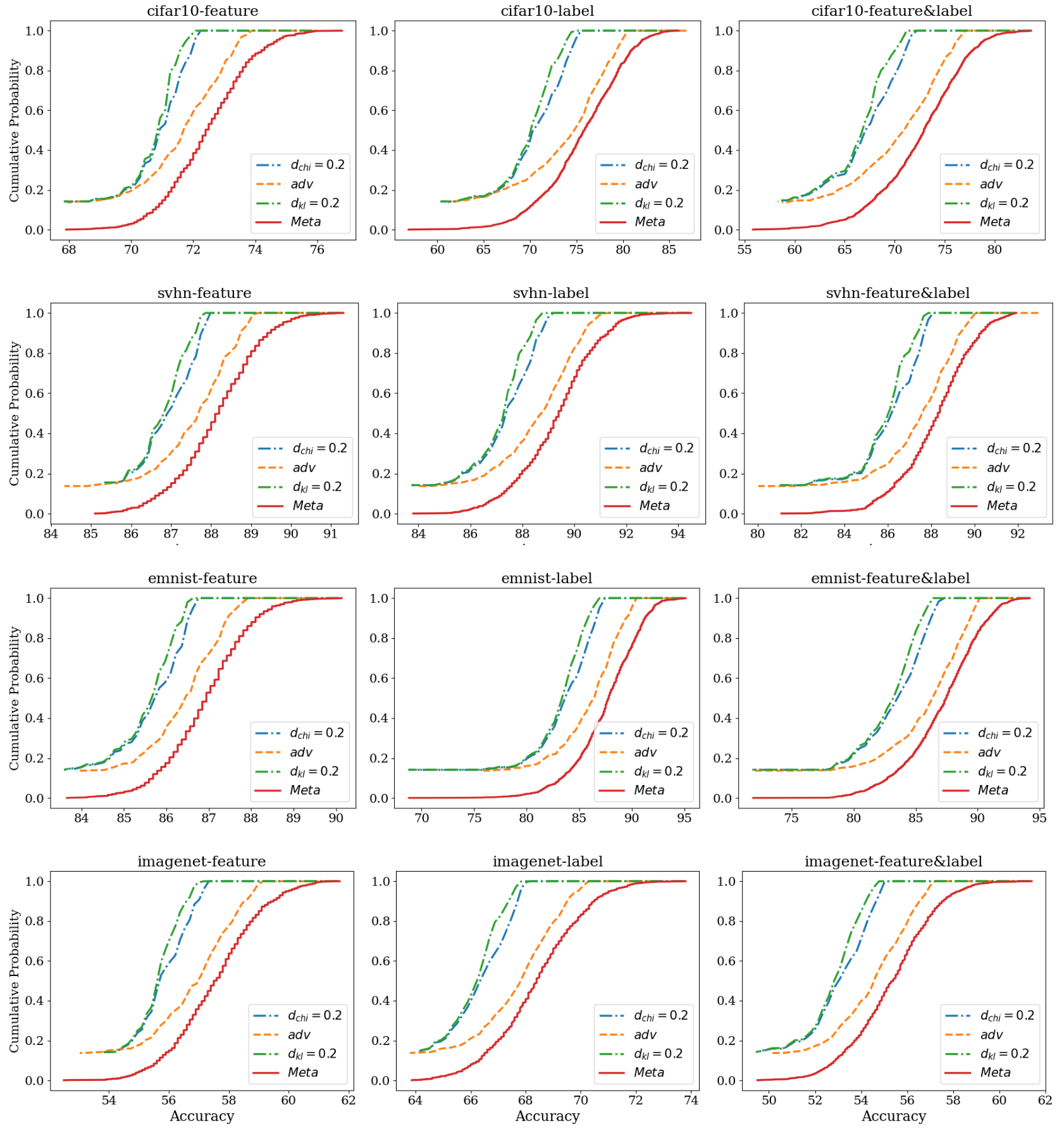}
\caption{\label{fig:DKW-all} Extension of previous experiments to a broader range of adversarial budgets for $f$-divergence attacks. DKW-based certificates for unseen clients in our four examined datasets. \textit{Meta} refers to the main population with 1000 nodes.}
\end{figure}
\begin{figure}[!ht]
\centering
\includegraphics[width=\linewidth]{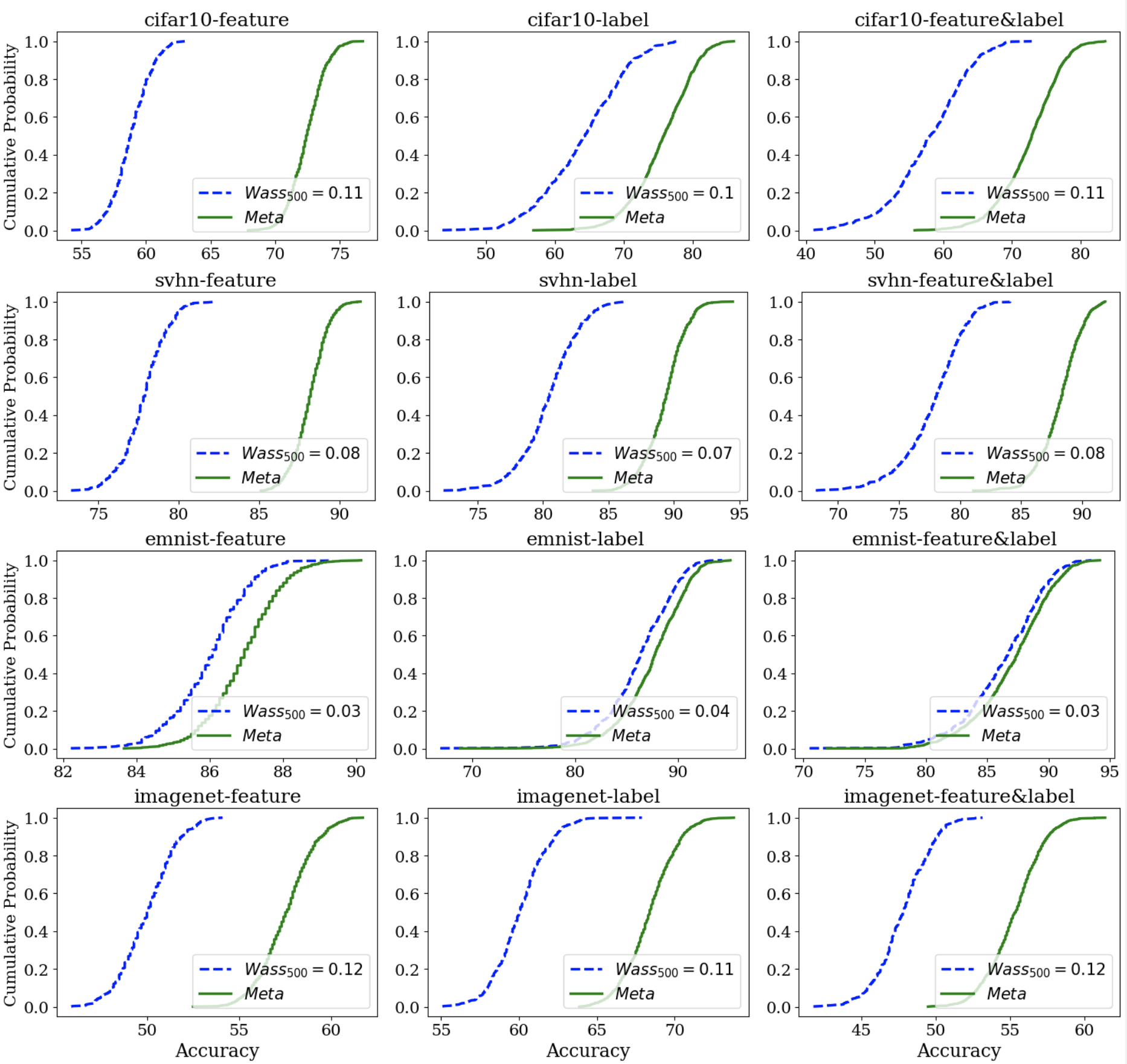}
\caption{\label{fig:WASS-all} Wasserstein distance-based certificates for unseen clients in our four examined datasets. \textit{Meta} refers to the main population with $1000$ nodes. Dotted curves are based on $500$ networks within the population.}
\end{figure}

In this section, we examined scenarios where users belong to two distinct meta-distributions: the source and the target. A DNN-based model is initially trained on a network of clients sampled from the source. The resulting risk values are then fed into the optimization problems introduced in Section \ref{sec:main:fDiv} to obtain robust CDF bounds, considering both the Chi-Square and KL divergence as potential choices for $f$. Finally, we empirically estimate the risk CDF for users from the target meta-distribution and validate our bounds. Specifically, we tested our certificates in two distinct settings using the CIFAR-10 dataset (see Figure~\ref{fig:Meta}). We generated various image categories with differing resolutions or color schemes, and then sampled from these categories to create different distributions:

\begin{itemize}
    \item 
    \textbf{Resolutions}: Images were cropped and resized to create eight different resolutions. The Dirichlet $\alpha$ coefficients for the first (source) meta-distribution range from $0.4$ to $0.7$ for the four lower resolutions and from $0.7$ to $1$ for the four higher resolutions. For the second (target) meta-distribution, the ranges are reversed: $0.7$ to $1$ for the lower resolutions and $0.4$ to $0.7$ for the higher resolutions. The number of samples per resolution for each user is determined using a Dirichlet distribution, with $\alpha$ coefficients randomly selected from the specified range for the meta-distribution. As a result, users sampled from source will have more high-resolution images, while users from the target will have more low-resolution samples.

    \item 
    \textbf{Colors}: The color intensity of the images varies from $0.00$ (gray-scale) to $1.00$ (fully colored). For the source meta-distribution, the $\alpha$ coefficients range from $0$ to $0.5$ for images with color intensity below $0.5$, and from $0.5$ to $1$ for images above $0.5$. As with the resolution setting, the ranges are reversed for the target, and the number of samples per color intensity for each user is calculated similarly. Therefore, users sampled from source will have more colorful images, while those from the target will have more gray-scale images.
\end{itemize}

Figure~\ref{fig:f-divergence} (left) verifies our CDF certificates based on both chi-square and KL-divergence (dotted curves) for the target meta distribution (orange curve). As can be seen, bounds have tightly captured the behavior of risk CDF in the target network. More detailed experiments are shown in Figure \ref{fig:DKW-all} in Appendix \ref{sec:app:auxImage}.

\subsection{Certificates for Wasserstein-based Meta-Distributional Shifts}

In this experiment, as previously mentioned, we used affine distribution shifts to create new domains. Figure \ref{fig:f-divergence} (right) summarizes our numerical results in this scenario. To generate different networks within the meta-distribution, we applied the affine distribution shifts described in Section \ref{sec:DKW-cert}. Once again, the results validate our certificates, this time for Wasserstein-type shifts. The blue curve, representing the real population, consistently falls within or beneath the blue shaded area. Regarding tightness, it is important to note that the bounds presented here remain tight, particularly under adversarial attacks as defined by a distributional adversary in \citep{sinha2018certifying}.  More detailed experiments with various levels of tightness are shown in Figure \ref{fig:WASS-all} in Appendix \ref{sec:app:auxImage}.

Although our theoretical findings in Section \ref{sec:main:wass} focus solely on the average risk and not the risk CDF, we extended the same framework to the CDF in this experiment to explore whether the theory might also apply. The results were positive, suggesting potential for extending our theoretical findings in this area. A more detailed series of simulation results are presented via Figures \ref{fig:DKW-all} and \ref{fig:WASS-all}, respectively.

Figure \ref{fig:DKW-all} illustrates complementary results for $f$-divergence bounds on risk CDF, which are robust extensions of the DKW bound. Simulations have been repeated for several different values of $\varepsilon$, where both KL and $\chi^2$ (Chi-Square) divergences have been considered.

Figure \ref{fig:WASS-all} shows the robust CDF bounds for Wasserstein shifts. Again, different values of $\varepsilon$ for Wasserstein distance have been considered which have resulted in several bounds with various levels of tightness. It should be noted that all bounds are asymptotically minimax optimal, meaning that an adversary can choose a distribution to perform exactly as bad as the bounds, at least in the asymptotic regime where both $K$ and $\min_k~n_k$ tend to infinity.

\end{document}